\newcommand{\mathd}{\mathrm{d}}
\newcommand{\tmop}[1]{\ensuremath{\operatorname{#1}}}
\theoremstyle{plain}
\newtheorem{theorem}{Theorem}[section]
\newtheorem{proposition}[theorem]{Proposition}
\newtheorem{lemma}[theorem]{Lemma}
\newtheorem{corollary}[theorem]{Corollary}
\newtheorem{fact}[theorem]{Fact}
\theoremstyle{definition}
\newtheorem{definition}[theorem]{Definition}
\newtheorem{assumption}[theorem]{Assumption}
\theoremstyle{remark}
\definecolor{darkred}{rgb}{.7,0,0}
\definecolor{darkgreen}{rgb}{.15,.55,0}
\definecolor{darkblue}{rgb}{0,0,0.7}
\definecolor{darksomething}{rgb}{0,0.7,0.7}
\newcommand{\M}{{\mathcal{M}}}
\newcommand{\R}{\mathbb{R}}
\newcommand{\lse}{\operatorname{LSE}}
\newcommand{\E}{\mathbb{E}}
\newcommand{\prob}{\mathbb{P}}
\newcommand{\Pieps}{\Pi_{\M^\epsilon}}
\DeclareMathOperator*{\essinf}{ess\,inf}
\newcommand{\mud}{\mu_{\operatorname{data}}}
\newcommand{\hmud}{\hat{\mu}_{\operatorname{data}}}
\newcommand{\dist}{\operatorname{dist}}
\newcommand{\law}{\operatorname{law}}
\newcommand{\gap}{\Delta_\M}
\newcommand{\sm}{{\operatorname{sm}}}
\title{Diffusion Models and the Manifold Hypothesis:\\
Log-Domain Smoothing is Geometry Adaptive}
\author{
    \hspace{50pt}
    \and Tyler Farghly\thanks{Authors contributed equally to this work. Correspondence to \texttt{\{last name\}@stats.ox.ac.uk}}
    \and Peter Potaptchik\footnotemark[1]
    \and Samuel Howard\footnotemark[1]
    \and \hspace{50pt}
    \and George Deligiannidis
    \and Jakiw Pidstrigach \\[-10pt]
    \and Department of Statistics, University of Oxford
}
\begin{document}

\maketitle

\begin{abstract}
Diffusion models have achieved state-of-the-art performance, demonstrating remarkable generalisation capabilities across diverse domains. However, the mechanisms underpinning these strong capabilities remain only partially understood. A leading conjecture, based on the manifold hypothesis, attributes this success to their ability to adapt to low-dimensional geometric structure within the data. This work provides evidence for this conjecture, focusing on how such phenomena could result from the formulation of the learning problem through score matching. We inspect the role of implicit regularisation by investigating the effect of smoothing minimisers of the empirical score matching objective. Our theoretical and empirical results confirm that smoothing the score function—or equivalently, smoothing in the log-density domain—produces smoothing tangential to the data manifold. In addition, we show that the manifold along which the diffusion model generalises can be controlled by choosing an appropriate smoothing.
\end{abstract}

\section{Introduction: Diffusion, manifolds and generalisation}\label{sec:intro}

Diffusion models \citep{pmlr-v37-sohl-dickstein15, Song19_estimating_grads, Ho202_DDPM, song2021scorebased} have emerged as a powerful generative framework, achieving state-of-the-art performance across diverse domains, including images \citep{dhariwal2021diffusion, karras22_edm, rombach2022_latentdm}, audio \citep{kong2021diffwave, liu2023_audio_ldm}, and video \citep{ho2022imagenvideo, Blattmann_2023_CVPR}. Beyond their ability to generate high-quality outputs, they are also capable of producing novel samples not present in the training data, indicating a surprising capacity for generalisation.

The goal of diffusion models is to produce samples from a target distribution \(\mu_\text{data}\) on \(\R^d\), given only a finite number of samples. They do this by learning to \emph{reverse} a noising process, \(X_t\), which begins with a random sample of the data distribution \(X_0 \sim \mud\) and gradually transforms it into noise. This process is defined by the stochastic differential equation (SDE),
\begin{equation} \label{eqn:forward_sde}
  \mathd X_t = - \alpha X_t \mathd t + \sqrt{2} \mathd B_t, \quad X_0 \sim
  \mu_{\tmop{data}},
\end{equation}
for some $\alpha \ge 0$, where \(B_t\) denotes the \(d\)-dimensional Brownian motion.

It is well known \citep{haussmann1986time} that the time reversal $Y_{t} := X_{T-t}$ of \eqref{eqn:forward_sde} satisfies
\begin{equation}
        \label{eqn:reverse_sde}
        \mathd Y_t = \alpha Y_t 
        + 2 {\nabla \log p_{T-t}(Y_t)} \mathd t
        + \sqrt{2} \mathd B_t,
\end{equation}
where \(p_t\) denotes the density of the $X_t$.
Therefore, the task of generating samples from $\mu_\text{data}$ can be solved by simulating paths of \eqref{eqn:reverse_sde}. To that end, the unknown \emph{score function}, $\nabla \log p_t$ in \eqref{eqn:reverse_sde}, is approximated by minimizing the \textit{(population) score matching loss} (see \citet{hyvarinen05_score_matching}):
\begin{equation}\label{eqn:SM}
    \ell_\sm(s) = \int_0^T \mathbb{E} \left[\| s (t, X_t) - \nabla \log p_t(X_t) \|^2\right] \mathrm{d} t.
\end{equation}

\begin{figure}[ht]
    \centering    \includegraphics[width=0.65\linewidth]{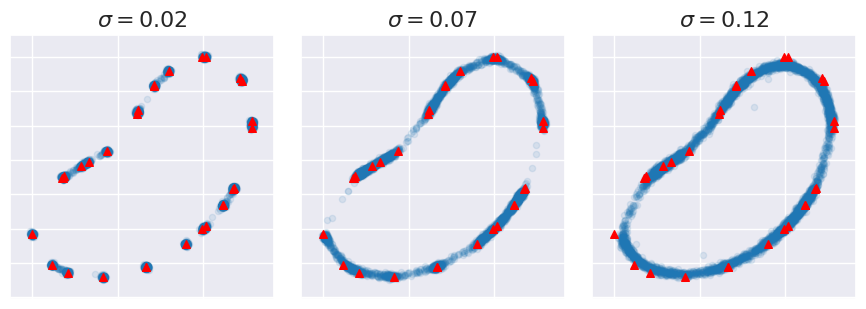}
    \caption{\textbf{Isotropic smoothing of the score function \emph{identifies manifold structure}.} The figure shows training data ($\textcolor{red}{\blacktriangle}$) against generated samples ($\textcolor{MidnightBlue}{\bullet}$) from a diffusion model that is run with the smoothed score $\nabla \log \hat{p}_t \ast \mathcal{N}_{\sigma}$, where the width of the Gaussian smoothing kernel increases from $\sigma = 0.02$ to $\sigma = 0.12$. Notice that for low amounts of smoothing, generated samples are concentrated close to training data and as $\sigma$ increases, generated samples begin to fill out more of the manifold \emph{without} having seen training samples in those regions.}\label{fig:lima_bean}
\end{figure}

In \eqref{eqn:SM}, the expectation is taken over samples from $X_t$ (see \eqref{eqn:forward_sde}), when started from the true data distribution $X_0 \sim \mu_\text{data}$.
In practice, one has access only to a finite dataset of samples $\{x_i\}_{i=1}^N$ from \(\mud\) and so \(\ell_\sm\) must be approximated empirically. Therefore, during training, the noising process is \emph{not} started from the target distribution $\mu_\text{data}$, but is instead initialized from the \emph{empirical measure}, $\hat{\mu}_\text{data} = \frac{1}{N} \sum_i \delta_{x_i}$.
This gives rise to the \emph{empirical score matching loss},
\begin{equation}
    \hat{\ell}_\sm(s) := \int_0^T \mathbb{E} \left[\| s (t, \hat{X}_t) - \nabla \log \hat{p}_t(\hat{X}_t) \|^2\right] \mathrm{d} t,
    \label{eq:SM_emp}
\end{equation}
where $\hat{p}_t$ is the density of forward process, $\hat{X}_t$, which is initialised from the empirical measure $\hat{\mu}_\text{data}$.

One quirk of this objective is that it possesses a unique minimiser\footnote{Here we mean in the \(L^2\) sense: any minimiser of \(\hat{\ell}_\sm\) is identical to \(\nabla \log \hat{p}_t\) almost everywhere.} identical to the \emph{empirical score function}, $\nabla \log \hat{p}_t(x)$. As a result, if one were to reverse the noising process with this minimiser, one would arrive close to the empirical measure $\hat{\mu}_{\text{data}}$, reproducing the training data instead of generating novel samples from the target distribution. In fact, it has been shown that any approximation sufficiently close to \(\nabla \log \hat{p}_t\) will produce samples belonging to the training dataset \citep{pidstrigach2022scorebased}. However, in practice, diffusion models trained with this objective perform well and avoid memorisation, suggesting that regularisation is key to their generalisation capabilities.

The study of generalisation in diffusion models can be divided into three parts:
\begin{enumerate}
    \item The formulation of the learning problem via score-matching and its empirical approximation.
    \item The inductive bias of the training procedure and model architecture employed.
    \item The effect of replacing the minimiser of \eqref{eq:SM_emp} by a regularized version of the minimiser on the reverse SDE and the samples generated.
\end{enumerate}
The second part has been widely investigated in the machine learning and statistics literature, where neural architectures, regularisation and optimisation schemes have been analysed extensively through the lens of generalisation. While the effects of these techniques are far from completely understood---especially in the context of generative modelling---there are numerous studies into how neural network training promotes bias towards smooth functions that interpolate the data \citep{rahaman_spectral_bias, mulayoff2021implicit, ma2021linear, vardi2023implicit}. In contrast, the first and third parts are specific to diffusion models and have received less attention. Therefore, our work focuses on these under-explored parts.

To account for the inductive bias during score matching, we propose a simple model built upon smooth approximations to the minimiser of \(\hat{\ell}_\sm\). In particular, we consider the score function \(s^k\), which smooths the empirical score function $\nabla \log \hat{p}_t$, with a generic probability kernel \(k\):
\begin{equation}
    s^k(t, x) = \int \nabla \log \hat{p}_t(y) \, k_x(\mathd y).
    \label{eq:kernel_smoothed_score}
\end{equation}
While this significantly simplifies the possible inductive bias employed during training, it succeeds in capturing a defining property of diffusion models: as a result of the approach of score-matching, any smoothing resulting from inductive bias occurs at the level of the score function—in the \emph{log-domain}.

Beyond these considerations, understanding the generalisation of diffusion models also requires an analysis of the data distributions they successfully model. There is growing support for the theory that diffusion models are particularly successful at modelling distributions adhering to the \emph{manifold hypothesis}, wherein high-dimensional data concentrates on a lower-dimensional manifold \citep{Tenenbaum_manifold_2000, Bengio2012RepresentationLA, Goodfellow16_DL}. A standing conjecture is that generative models, including diffusion models and flow-based approaches, owe their success partly to their capacity to uncover these hidden structures \citep{pidstrigach2022scorebased, de_bortoli2022convergence, loaiza-ganem2024deep,Farghly2025-fn}. This raises a critical question: what mechanism allows diffusion models to so effectively identify and leverage this underlying manifold structure? In this work, we argue that the practice of smoothing in the \emph{logarithmic domain} plays a key role.

\section{Log-domain smoothing retains geometric structure}\label{sec:intuition}

This work contributes to a small but growing literature on the effect of score-smoothing in diffusion models \citep{scarvelis2023closedformdiffusionmodels, chen2025interpolationeffectscoresmoothing, gabriel2025kernelsmoothedscoresdenoisingdiffusion}.
In this section, we provide intuition for how diffusion models inherently perform smoothing in the log-density domain via their score-matching objective. We then show that log-domain smoothing is crucial for preserving the underlying manifold structure of the data. Finally, we revisit the manifold hypothesis and explore how specific characteristics of the smoothing kernel can guide the model to generalise along different geometries interpolating the data.

\begin{figure}[t]
    \centering
    \begin{tabular}[t]{c c}
        \begin{tabular}{m{0.085\linewidth} c}
            \footnotesize{Density smoothing} & \adjustbox{valign=m}{\includegraphics[width=0.38\linewidth]{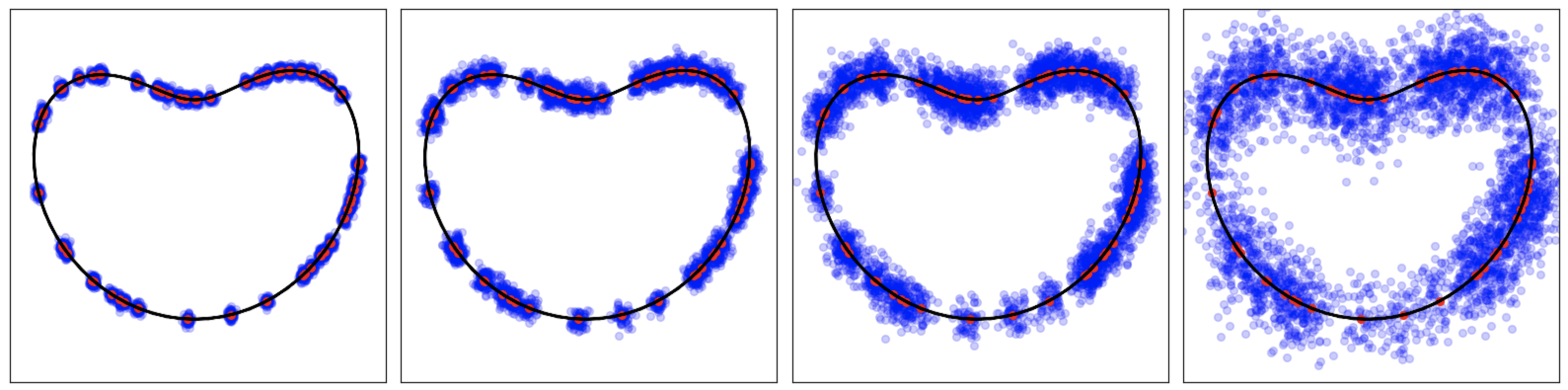}} \\
            \footnotesize{Score smoothing} & \adjustbox{valign=m}{\includegraphics[width=0.38\linewidth]{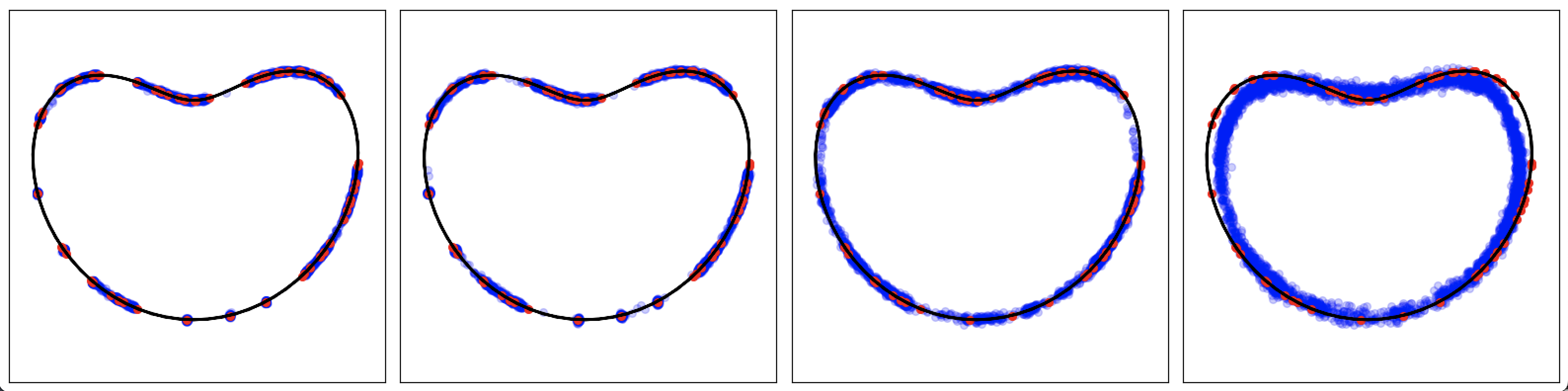}}
        \end{tabular} &
        \adjustbox{valign=m}{\includegraphics[width=0.42\linewidth]{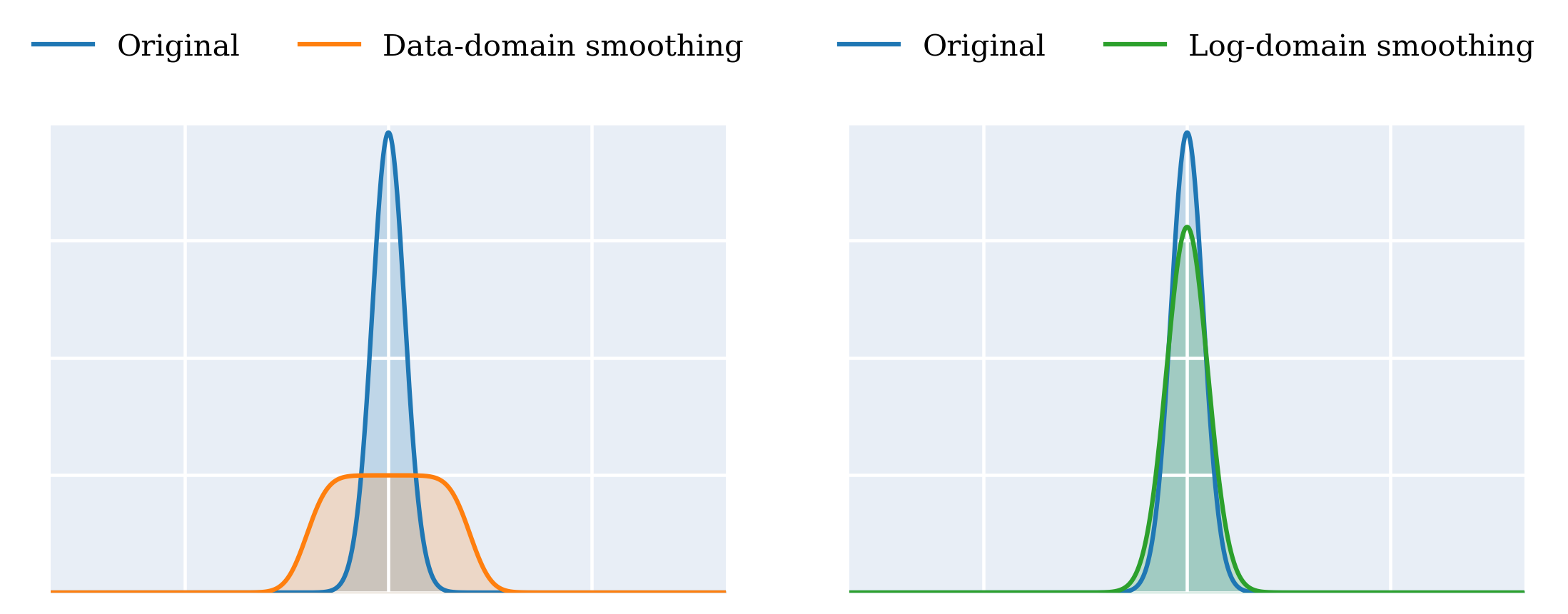}}
    \end{tabular}
    \caption{\textbf{Density smoothing generates samples off-manifold, whereas score smoothing generates samples that retain manifold structure.} Left: The plots compare samples ($\textcolor{blue}{\bullet}$) drawn from a KDE (top) versus from a diffusion model with the smoothed score (bottom) from Figure~\ref{fig:lima_bean} (training data is $\textcolor{red}{\bullet}$). The scale of the smoothing kernel increases from left to right. Right: 1D intuition for data-domain versus log-domain smoothing. The left sub-figure shows the Gaussian ($\textcolor{blue}{-}$) smoothed in data-domain ($\textcolor{orange}{-}$), and the right sub-figure shows the Gaussian smoothed in log-domain ($\textcolor{darkgreen}{-}$) with the same kernel.}
    \label{fig:two_images}
\end{figure}

\subsection{Diffusion models smooth in the log-domain}
As outlined in the introduction, we model the inductive bias of neural network training by a smoothing kernel \(k\) (see \eqref{eq:kernel_smoothed_score}). Assuming that the nature of the inductive bias does not vary too rapidly over the spatial parameter, we can treat the kernel as locally constant. In this case, the convolution will commute with the gradient operation, and we obtain the following simple but consequential equation:
\begin{equation}\label{eq:gradexch}
    s^k(t, x) = k * \nabla \log \hat{p}_t(x) = \nabla \left(
    k * \log \hat{p}_t(x)
    \right).
\end{equation}
Therefore, smoothing the score function corresponds to smoothing the empirical density $\hat{p}_t$ in the \emph{log-domain}, as opposed to smoothing at the density-level directly. Consequently, when a trained diffusion model generates samples by following the reverse SDE in  \eqref{eqn:reverse_sde}, it effectively utilises scores derived from this log-smoothed version of the empirical density:
\begin{equation}
\label{eq:log_smoothed_density}
    \hat{p}_t^k(\mathd x) \propto \exp \bigg ( \int \log \hat{p}_t(y) k_x(\mathd y) \bigg ) \mathd x,
\end{equation}
where we use \(k_x\) to denote the distribution of the smoothing kernel centred at \(x\).

Sampling from a diffusion model involves discretising the backwards process in \eqref{eqn:reverse_sde} using the learned score function approximation. To correct for discretisation and approximation error, so-called \emph{corrector-steps} are interspersed between iterations \citep{song2021scorebased, karras22_edm}. This involves running Langevin Monte Carlo to correct the distribution of the diffusion model, maintaining correspondence between the diffusion model samples and the distribution associated with the score function. Furthermore, to account for instability near convergence, the technique of \emph{early stopping} is often used, where the reverse process is terminated an amount of time \(\epsilon > 0\) before convergence \citep{song2021scorebased}. With this, we arrive at our approximation of the diffusion model output as the log-domain smoothed empirical measure, \(\hat{p}_\epsilon^k\). Indeed, this is the density recovered by the diffusion model with score function \(s^k\) with sufficient correction steps and sufficiently fine discretisation.

This characterisation of diffusion model output through smoothing in the log-domain identifies a distinction between diffusion models and classical density-level estimators. For example, the classical method of kernel density estimation (KDE) \citep{tsybakov2009nonparametric} approximates the underlying data distribution by smoothing the empirical measure \(\hmud\) with respect to a smoothing kernel \(k\), providing an estimator of the form,
\begin{equation*}
    \hat{q}^k_{\text{KDE}}(dx) = \int k_x(y) \hmud(dy) dx.
\end{equation*}
In words, the KDE also approximates the data distribution by smoothing the empirical data distribution, but it performs its smoothing in the \emph{data-domain} as opposed to the log-domain.

\subsection{Smoothing in log-domain preserves manifold structure }\label{sec:preserves}
Effective capture of the geometry underlying the data distribution is a critical aspect of effective generative modelling. We briefly provide some intuition for why log-domain smoothing plays a vital role here. Consider a data distribution that is concentrated on a manifold within the larger data space. The density of this distribution would be positive on this manifold and zero (or practically negligible) elsewhere. Data-domain smoothing techniques, such as the KDE, yield a positive probability density wherever the smoothing kernel overlaps with the manifold, leading to a \textit{smearing} of the density away from the manifold.
In contrast, smoothing in the log-domain offers a distinct advantage—when we transition to the log-domain, locations where the original density is zero are mapped to $-\infty$. Consequently, if a smoothing kernel extends into regions off-manifold, the resulting smoothed log-density in those regions will also equal $-\infty$. While the empirical density  $\hat{p}_t$ is technically positive everywhere, it will have exceedingly small values in regions distant from the data manifold, and so the intuition carries over.

In Section \ref{sec:geometry-adaptivity-of-log-domain-smoothing}, we make this intuition more concrete, theoretically showing that smoothing the empirical density in the log-domain approximates smoothing along the data manifold. We start by analysing the case in which the data is supported on a linear manifold (see Section \ref{sec:warmup}), where we obtain a perfect correspondence between smoothing the empirical density in the log-domain and smoothing along the (linear) data manifold. Then, in Section \ref{sec:main_result}, we state our main theoretical result that generalises this to the curved manifold setting—showing that smoothing in the log-domain approximates a geometry-adapted smoothing, which generates samples close to the underlying manifold.

\begin{figure}[t]
    \centering
    \begin{subfigure}[b]{0.2\linewidth}
        \centering
        \includegraphics[width=\textwidth]{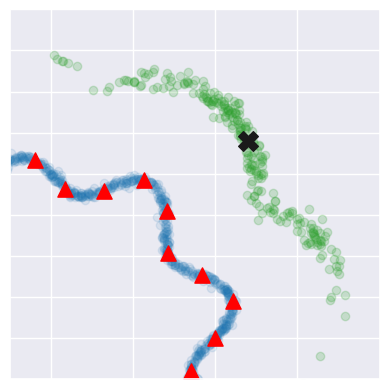}
    \end{subfigure}%
    \hspace{0.3em}%
    \begin{subfigure}[b]{0.2\linewidth}
        \centering
        \includegraphics[width=\textwidth]{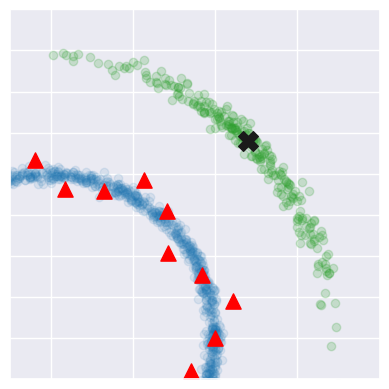}
    \end{subfigure}
    \hspace{2em}
    \begin{subfigure}[b]{0.2\linewidth} %
        \centering
        \includegraphics[width=\linewidth]{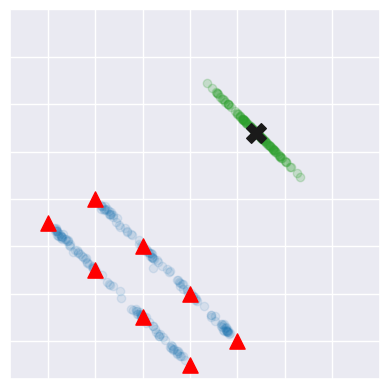}
    \end{subfigure}
    \hspace{0.3em}%
    \begin{subfigure}[b]{0.2\linewidth}
        \centering
        \includegraphics[width=\linewidth]{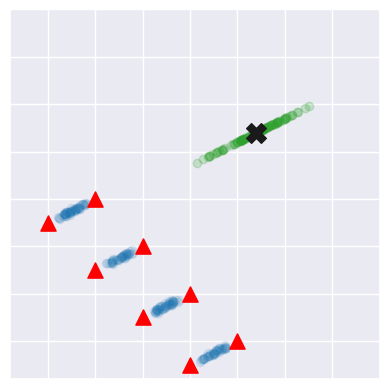}
    \end{subfigure}
    \caption{\textbf{The choice of smoothing kernel influences the manifold on which generated samples lie.} The empirical score function corresponding to the training data ($\textcolor{red}{\blacktriangle}$) is smoothed with different (data-dependent) kernels. To visualize the smoothing kernels, we generate samples ($\textcolor{LimeGreen}{\bullet}$) from $k_x$. We use the smoothed score functions to generate samples ($\textcolor{MidnightBlue}{\bullet}$) from the resulting diffusion models. Notice that despite using the same training data, different smoothing kernels generate samples that lie on different manifolds.}
    \label{fig:toy_different_smoothing}
\end{figure}

\subsection{Choosing an interpolating manifold via geometric bias}\label{sec:intro_geom_bias}
The manifold hypothesis traditionally assumes that data lies on a low-dimensional \emph{true} submanifold that must be identified by your learning algorithm. However, in many real-world scenarios, this assumption is too rigid: rather than adhering to a single well-defined manifold, data likely exhibits geometric structure that different interpolating manifolds can approximate. This is especially true when the size of the dataset is small relative to the dimension and curvature of the space. In such settings, the focus is no longer on recovering a \emph{true} manifold, but on choosing a \emph{plausible} interpolating manifold. With this, we arrive at our next question of interest: how does the algorithm \emph{choose} the interpolating manifold?

Returning to score smoothing, this reframes our central question to one of understanding how smoothing induces biases in the geometric structure of generated samples. As Figure~\ref{fig:toy_different_smoothing} illustrates, the geometry of the output distribution depends entirely on the directions of smoothing. By choosing a kernel that aligns with certain geometric structures (e.g., tangent to a circle), the diffusion model is biased to interpolate along the corresponding manifold. We term this relationship between sample geometry and the smoothing kernel---or more broadly, the inductive bias of the score matching algorithm---as the \emph{geometric bias} of the diffusion model. In Section \ref{sec:geometric_bias}, we develop theory and experiments that identify key structural properties of the smoothing kernel that dictate this bias.

\section{Geometry-adaptivity of log-domain smoothing}\label{sec:geometry-adaptivity-of-log-domain-smoothing}

In this section, we provide theoretical results that aim to capture and make concrete the intuition presented in \Cref{sec:preserves}, examining the smoothed density $\hat{p}_\epsilon^k$ in \eqref{eq:log_smoothed_density} as a tractable proxy for the diffusion model output. We also consider a \textit{manifold-adapted} counterpart to $\hat{p}_\epsilon^k$, denoted by $\hat{p}_\epsilon^{k^\mathcal{M}}$. The kernel $k^\mathcal{M}$ acts similarly to $k$, but restricts the smoothing to occur only along level sets of the manifold, spreading mass along the manifold without destroying the geometric structure. Note that a priori, one may not have knowledge of the manifold structure and thus could not construct such a kernel $k^\M$---here, we use it merely as a theoretical tool to represent a desirable behaviour of a generative model: that interpolation is performed in a way that identifies and preserves geometric structure. That is, to show that smoothing with a generic kernel \(k\) is geometry-adaptive, we show that $\hat{p}_\epsilon^k$ is \textit{close} to its manifold-adapted counterpart, $\hat{p}_\epsilon^{k^\mathcal{M}}$.

This raises the following natural questions: How do these two objects $\hat{p}_\epsilon^k$ and $\hat{p}_\epsilon^{k^\mathcal{M}}$ relate? Ideally, they should be similar as this would show that the uninformed kernel $k$ can automatically adapt to manifold structure. Moreover, this raises the question of which properties of the data manifold and the diffusion model influence this relation?
In this section, we present theoretical results addressing these questions.

\subsection{Warm-up: linear setting}\label{sec:warmup}
We first restrict our analysis to the setting where the data distribution is supported on a \(d^*\)-dimensional affine subspace \(\M = \{x \in \R^d: Ax = b\}\), where \(A \in \R^{d^* \times d}\) is a row-orthonormal matrix and \(b \in \R^{d^*}\). This allows us to provide intuition for our main result in \cref{sec:main_result} where we generalise to the more complex general case. Consider the simplified setting where the kernel $k$ is location-independent, i.e. it is of the form
\begin{equation*}
    k_x := \law(x + \xi),
\end{equation*}
where \(\law(\cdot)\) denotes the distribution of a given random variable and \(\xi\) is a zero-mean random variable, independent of \(x\). In this setting, we establish the following result.
\begin{proposition}\label{prop:linear_result}
The log-domain smoothed density satisfies the property,
\begin{equation}\label{eq:warmup_manifold_kernel}
    \hat{p}^k_\epsilon = \hat{p}^{k^\M}_\epsilon, \qquad \text{where } k^{\M}_x := \law(x + P\xi),
\end{equation}
where \(P := I-A^T A\) is the projection onto $\text{Null}(A) =  \{x \in \R^d: Ax = 0\}$.
\end{proposition}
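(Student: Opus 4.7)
The plan is to exploit the orthogonal decomposition $\R^d = \mathrm{Null}(A) \oplus \mathrm{Range}(A^T)$ together with the fact that the OU transition kernel of \eqref{eqn:forward_sde} is isotropic Gaussian. Write $P = I - A^T A$ and $Q = A^T A$ for the two orthogonal projections and $m_\epsilon := e^{-\alpha \epsilon}$. Because every training point satisfies $Q x_i = A^T b$, a vector independent of $i$, each Gaussian component in the mixture $\hat p_\epsilon$ factorises across the two subspaces, giving
\begin{equation*}
    \hat p_\epsilon(y) = \phi^\perp_\epsilon(Q y - m_\epsilon A^T b) \cdot g_\epsilon(P y),
\end{equation*}
where $\phi^\perp_\epsilon$ is a single Gaussian density on $\mathrm{Range}(A^T)$ and $g_\epsilon$ is a Gaussian mixture on $\mathrm{Null}(A)$ with centres $\{m_\epsilon P x_i\}_i$. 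Taking logarithms yields the additive decomposition $\log \hat p_\epsilon(y) = h_\perp(Q y) + h_\parallel(P y)$, in which $h_\perp$ is a quadratic function (inherited from a single Gaussian) and $h_\parallel$ depends only on the tangent component.

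With this decomposition in hand, I would compute the two integrals defining $\hat p^k_\epsilon$ and $\hat p^{k^\M}_\epsilon$ via \eqref{eq:log_smoothed_density}. Since $P(P\xi) = P\xi$ and $Q(P\xi) = 0$, the tangent contributions match identically:
\begin{equation*}
    \E\bigl[h_\parallel(P(x + \xi))\bigr] = \E\bigl[h_\parallel(Px + P\xi)\bigr] = \E\bigl[h_\parallel(P(x + P\xi))\bigr].
\end{equation*}
For the normal part, expanding the quadratic $h_\perp$ and using $\E[\xi] = 0$ gives
\begin{equation*}
    \E\bigl[h_\perp(Q(x + \xi))\bigr] = h_\perp(Q x) + C_\xi = \E\bigl[h_\perp(Q(x + P \xi))\bigr] + C_\xi,
\end{equation*}
where $C_\xi$ depends only on the covariance of $Q \xi$ and not on $x$, and where I used $Q(P\xi) = 0$ to reduce the right-hand expectation to $h_\perp(Q x)$. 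Hence the two log-integrands differ by the $x$-independent constant $C_\xi$.

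Exponentiating, the unnormalised versions of $\hat p^k_\epsilon$ and $\hat p^{k^\M}_\epsilon$ differ only by the multiplicative constant $e^{C_\xi}$, and since both are probability densities, normalisation forces equality. The only genuine obstacle is spotting the orthogonal factorisation of $\hat p_\epsilon$, which hinges jointly on isotropy of the forward noise and on $Q x_i$ being shared across all training points; after that, the proof reduces to expanding a quadratic and invoking $\E[\xi] = 0$.
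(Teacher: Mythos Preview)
Your proposal is correct and follows essentially the same approach as the paper. The paper phrases the tangent--normal decomposition via the $\lse$ function and its shift invariance (Fact~\ref{fact:lse_shift}), whereas you obtain it by factorising the isotropic Gaussian mixture as $\phi^\perp_\epsilon \cdot g_\epsilon$; these are equivalent, and the remaining steps---matching the tangent parts via $P^2=P$, handling the quadratic normal part using $\E[\xi]=0$, and concluding by normalisation---are identical.
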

The kernel \(k^{\M}_x\) is a modification of \(k_x\) that smooths only along the plane parallel to $\M$ passing through $x$. From this proposition, we see that in the affine setting, smoothing in the log-domain with respect to a generic kernel \(k\) is equivalent to smoothing with the geometry-adapted kernel \(k^\M\). In other words, log-domain smoothing is fundamentally \textit{geometry-adaptive}.

We now provide a brief exposition of the proof technique which also forms the basis of the proof in the more general setting. Given the training set $\{x_i\}_{i = 1}^N$, we can directly compute the noised empirical densities $\hat{p}_t(x)$ and the corresponding score functions. Recall that the LogSumExp ($\lse$) function is defined on any finite set \(\{r_i\}_i \subset \R\) and is given by \(\lse(\{r_i\}_i) := \log(\sum_i \exp(r_i))\). Using this function, we can succinctly express the empirical log-density as
\begin{equation}\label{eqn:emp_density_lse}
    \log \hat{p}_t(x) = \lse \Big ( \big \{ - \|x - \mu_t x_i\|^2/(2 \sigma^2_t) \big \}_{i=1}^N \Big ) +C_t,
\end{equation}
for data-independent quantities \(C_t, \mu_t, \sigma_t\) given in Appendix \ref{app:diffusion_details}. This allows us to utilise the following property of the \(\lse\) function.
\begin{fact}\label{fact:lse_shift}
For any \(\{r_i\}_i \subset \R\) and any constant \(c \in \R\), $ \lse(\{r_i + c\}_i) = \lse(\{r_i\}_i) + c.$
\end{fact}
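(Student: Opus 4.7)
The plan is to verify the identity by direct computation, unwinding the definition $\lse(\{r_i\}_i) = \log\bigl(\sum_i \exp(r_i)\bigr)$ and exploiting the fact that the exponential turns the additive shift by $c$ into a multiplicative factor that can be pulled out of the sum. Concretely, I would start from $\lse(\{r_i + c\}_i) = \log\bigl(\sum_i \exp(r_i + c)\bigr)$, apply $\exp(r_i + c) = \exp(c)\exp(r_i)$, factor $\exp(c)$ out of the (finite) sum, and then use $\log(ab) = \log a + \log b$ together with $\log \exp(c) = c$ to conclude.

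Since the index set is finite, there are no convergence subtleties to worry about, and each step uses only the elementary algebraic properties of $\exp$ and $\log$ on $\R$. The only mild step worth flagging is the factoring of $\exp(c)$ out of the sum, which relies on the distributive law; this is routine but is the content of the identity. Because the statement is essentially the definition combined with one algebraic manipulation, there is no real obstacle: the chain of equalities can be written as a single display, and no auxiliary lemmas or assumptions on the $r_i$ (e.g.\ signs or magnitudes) are needed. Accordingly, my proof would consist of one short display of equalities followed by an explicit identification of the left-hand and right-hand sides.
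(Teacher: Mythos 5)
Your proof is correct. Since the paper states this as a ``Fact'' without an explicit self-contained proof, the natural comparison is to how the paper \emph{does} establish it: in Appendix C.1 the paper proves the more general Lemma~C.1, which expresses $\lse(\{x_i+\epsilon_i\}_i)-\lse(\{x_i\}_i)$ as an integral of a softmax-weighted average of the $\epsilon_i$ via the fundamental theorem of calculus, and then remarks that Fact~2.1 follows by taking all $\epsilon_i$ equal to $c$ (so the weighted average is identically $c$). Your route is genuinely different and, for this particular statement, strictly simpler: a one-line algebraic manipulation using $\exp(r_i+c)=e^c\exp(r_i)$, factoring the constant out of the finite sum, and applying $\log(ab)=\log a+\log b$. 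What your approach buys is transparency and minimality — no calculus, no smoothness of $\lse$, no auxiliary lemma. What the paper's approach buys is reuse: the integral representation in Lemma~C.1 is needed later to control \emph{non-uniform} perturbations $\{\epsilon_i\}$ (e.g.\ in the proof of Lemma~C.6), so deriving the shift identity as a special case avoids a separate argument. Both are valid; yours is the more natural standalone proof.
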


Using this fact, we can decompose the log-density into directions tangent and normal to the data manifold. Indeed, using the fact that \(x_i \in \M\) we obtain,
\begin{align}
    \log \hat{p}_t(x + \xi) &= \lse \Big ( \big \{ -(\|P(x + \xi - x_i)\|^2 + \|A^T A(x + \xi - x_i)\|^2)/(2 \sigma_t^2) \big\}_{i} \Big ) + C_t \nonumber \\
    &= \lse \Big ( \big \{ -(\|P(x + \xi - x_i)\|^2 + \|A(x + \xi) - b\|^2)/(2 \sigma_t^2) \big\}_{i} \Big ) + C_t \nonumber \\
    &= \underbrace{\lse \Big ( \big \{ -\|P(x + \xi - x_i)\|^2/(2 \sigma_t^2) \big\}_{i} \Big )}_{\text{tangent}} 
    - \underbrace{\|A(x + \xi) - b\|^2 / (2 \sigma_t^2)}_{\text{normal}} + C_t,\label{eqn:tang_norm_decomp}
\end{align}
In other words, interactions between the noise \(\xi\) and the data occur only in the tangent direction, and the normal direction is constant with respect to the examples \(\{x_i\}_i\). Once taking the expectation of the above expression, we obtain that the log-density of \(\hat{p}_t^k\) is identical, up to a constant, to the log-density of \(\hat{p}^{k^\M}_t\), which only applies smoothing in directions tangent to the manifold. We refer to Appendix \ref{app:linear} for the complete derivation.

\subsection{The case of curved manifolds}\label{sec:main_result}
In this section, we state the main theoretical contribution of this work in which we show that smoothing in log-density is fundamentally geometry-adaptive. Similar to the analysis in Section \ref{sec:warmup}, we do this by deriving a relationship between $p_\epsilon^k$ using an uninformed kernel $k$, and $p_\epsilon^{k^\mathcal{M}}$ using its manifold-adapted counterpart $k^\mathcal{M}$. We depart from the linear case, allowing for curved manifolds satisfying the following assumption.

\begin{assumption}\label{ass:manifold}
Suppose that \(\mud\) lies on a smooth compact submanifold \(\M \subset \R^d\), and that \(\mud\) restricted to \(\M\) admits a density \(p_\mu\) satisfying \(c_\mu := \inf_{\M} p_{\mu} > 0\).\footnote{Here, we take \(p_\mu\) to be the density with respect to the volume measure of the manifold \(\M\), which is itself inherited from the Lebesgue measure.}
\end{assumption}

Our approach to generalising the proof from the previous section is to use the defining feature of Riemannian manifolds---that in proximity, the manifold behaves as if it were flat. The distance that one must be to the manifold depends on the curvature of the manifold, which we control with an object from differential geometry called the \textit{reach}. This object defines the maximum distance from the manifold for which the projection to the manifold, \(\Pi_\M\), is well-defined, i.e. a unique element of the manifold is closest.
\begin{assumption}[Manifold reach]\label{ass:reach}
The manifold \(\M\) has a reach no smaller than \(\tau > 0\), i.e. for all \(x \in \R^d\) with \(\dist(x, \M) < \tau\), there exists a \textit{unique} \(x^\star \in \M\) such that \(\dist(x, \M) = \|x - x^\star\|\).
\end{assumption}
For example, if \(\M\) were a sphere, the projection to the manifold would be unique as soon as the distance to the manifold is restricted to be less than the radius. Hence, the reach in this case is given by the radius. In the case where the manifold is an affine subspace, the projection is always well-defined, and so the reach is infinite. The reach is related to the maximum curvature of the manifold, becoming smaller as the manifold becomes more curved. By assuming that the reach is bounded below, we are effectively requiring that the curvature of the manifold is globally upper-bounded. This assumption, as well as the lower bound on the density, has been used in several recent works and is common in the manifold hypothesis and manifold learning literature \citep{Aamari2017-he, potaptchik2024linearconvergencediffusionmodels, azangulov2024convergencediffusionmodelsmanifold}. We refer to Appendix \ref{app:reach} for further discussion and details regarding the reach of the manifold.

To generalise the manifold-adapted kernel in \eqref{eq:warmup_manifold_kernel} to this more general setting, we consider the projection of the kernel onto the level sets of the manifold \(\M_r = \{x \in \R^d: \dist(x, \M) = r\}\). We define this manifold-adapted modification by
\begin{equation}\label{eq:man_adapt_curved}
    k^\M_x := ( \Pi_{\mu_\epsilon \M_{r(x)}} )_* k_x, \qquad r(x) := \E_{Y \sim k_x}[\dist(Y, \mu_\epsilon \M)^2]^{1/2}
\end{equation}
where \(\mu_\epsilon \M\) is the element-wise scaling of \(\M\) by \(\mu_\epsilon\) and \((\Pi_{\mu_\epsilon \M_{r(x)}})_*\) denotes the push-forward by the projection mapping \(\Pi_{\mu_\epsilon \M_{r(x)}}\), that is, the distribution of \(\Pi_{(\mu_\epsilon \M)_{r(x)}}(Y), Y \sim k_x\). The function \(r(x)\) approximates the distance of \(x\) to the manifold, but with some correction according to the variance of the kernel in directions normal to the manifold. Therefore, similar to the definition in \eqref{eq:warmup_manifold_kernel}, the kernel \(k^\M_x\) is a modification of \(k_x\) adapted to the geometry of \(\M\) by smoothing only in directions tangential to the manifold. We refer to Appendix \ref{app:projection_details} for some additional details regarding the definition of \(k^\M\), including a discussion on the well-posedness of the projection function.

The variance of the smoothing kernel \(k\) in directions normal to the manifold will prove to be an important object in our bound, leading us to make the following assumption.
\begin{assumption}\label{ass:sg_smooth}
There are constants \(K, K_{\max} \geq 0\) such that for all \(x \in \R^d\), \(Y \sim k_x\),
\begin{equation*}
    \E \big [ | \dist(Y, \M) - \dist(x, \M) |^2 \big ] \leq K^2, \qquad | \dist(Y, \M) - \dist(x, \M) | \leq K_{\max}, \ \text{ almost surely.}
\end{equation*}
\end{assumption}
By measuring the change in distance to the manifold caused by the smoothing, the quantities \(K\) and \(K_{\max}\) capture the scale of the noise applied in directions normal to the manifold. This quantity also captures how adapted the smoothing kernel \(k\) already is to the manifold structure since if the majority of the mass of \(k\) lies tangential to the manifold, then \(Y \sim k_x\) will remain close to the level set going through \(x\) and so \(K\) can be chosen to be small. For example, in the Gaussian case \(k_x = \mathcal{N}(x, \sigma^2 I_d)\), we have that \(K^2 \approx (d-d^*) \sigma^2\) whenever \(\sigma\) is taken small.

Unlike in the affine case, we cannot obtain equivalence between \(\hat{p}^k_\epsilon\) and its manifold-adapted counterpart \(p_\epsilon^{k^\M}\) in general, so instead we show that these distributions are \emph{close}. The notion of closeness we consider is the R\'enyi divergence, \(D_q\)---a natural generalisation of the Kullback-Leibler divergence (the case of \(q = 1\)) that interpolates between stronger divergences as \(q\) is taken larger. For the sake of brevity, we leave the definition and a brief exposition on the R\'enyi divergence to Appendix \ref{app:renyi} and we state our main result.

\begin{theorem}\label{thm:density_ratio_bound_simplified}
Suppose that assumptions \ref{ass:manifold}, \ref{ass:reach}, and \ref{ass:sg_smooth} hold and that \(K_{\max} < \tau/96\). Then, for any \(q \in [1, 1+\tau/96K], \delta \in (0, 1]\), whenever \(N > N_{\min}(\delta), \epsilon < \epsilon_{\max}\) we obtain with probability at least \(1 - \delta\) that,
\begin{equation*}
    D_q \big ( \hat{p}_\epsilon^{k^\M} \big \| \hat{p}_\epsilon^k \big ) \lesssim \frac{K}{\tau} \, \max \Big \{ d^* + 1, \, \big ( c_\mu^2 N \big )^{-\frac{1}{d^*}} \epsilon^{-1} \Big \}\,,
\end{equation*}
where the quantities \(\epsilon_{\max}\) and \(N_{\min}(\delta) \lesssim (d^* + \log(\delta^{-1})) \tau^{-2d^*}\) are defined in \eqref{eq:epsilon_min} and \eqref{eq:N_min}, respectively.\footnote{Here, \(\lesssim\) denotes an upper bound that ignores multiplicative logarithmic factors.}
\end{theorem}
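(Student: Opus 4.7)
The plan is to extend the tangent/normal decomposition from Section \ref{sec:warmup} to the curved setting, with the reach $\tau$ acting as a quantitative proxy for local flatness. Starting from the identity
\[ \log \hat{p}_\epsilon(x) = \lse\big(\{-\|x - \mu_\epsilon x_i\|^2/(2\sigma_\epsilon^2)\}_{i=1}^N\big) + C_\epsilon, \]
for any $x$ with $\dist(x, \mu_\epsilon\M)$ less than the reach of $\mu_\epsilon\M$, I would set $x^* := \Pi_{\mu_\epsilon\M}(x)$ and expand
\[ \|x - \mu_\epsilon x_i\|^2 = \|x - x^*\|^2 + \|x^* - \mu_\epsilon x_i\|^2 + 2\langle x - x^*,\, x^* - \mu_\epsilon x_i\rangle. \]
The first summand is independent of $i$ and, by Fact \ref{fact:lse_shift}, passes out of the LSE as a shift of the normalisation; the second is the intrinsic squared distance on $\mu_\epsilon\M$ to the nearest sample; the third is the only obstruction to the exact linear equivalence of Proposition \ref{prop:linear_result}. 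A standard consequence of the reach assumption (Assumption \ref{ass:reach}) is the estimate $|\langle x - x^*,\, x^* - \mu_\epsilon x_i\rangle| \lesssim \|x - x^*\| \cdot \|x^* - \mu_\epsilon x_i\|^2/(\mu_\epsilon \tau)$, so up to an error of this order, $\log \hat{p}_\epsilon(x)$ depends on $x$ only through $\dist(x, \mu_\epsilon\M)$ and its projection $x^*$.

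Next, I would integrate this decomposition against $Y \sim k_x$. Under Assumption \ref{ass:sg_smooth}, $\dist(Y, \mu_\epsilon\M)$ concentrates around the quantity $r(x)$ of \eqref{eq:man_adapt_curved}, so $Y$ sits on the level set $\mu_\epsilon\M_{r(x)}$ up to small fluctuations whose projected distribution is precisely $k^\M_x$; this identifies the main term of $k * \log\hat{p}_\epsilon(x)$ with $k^\M * \log\hat{p}_\epsilon(x)$. The remaining error is driven by the LSE-dominant indices, i.e.\ those $x_i$ for which $\mu_\epsilon x_i$ is closest to $\Pi_{\mu_\epsilon\M}(Y)$, and under Assumption \ref{ass:manifold} a standard covering/coupon-collector argument on a compact $d^*$-dimensional manifold with density lower bound $c_\mu$ gives that, with probability at least $1-\delta$ once $N \geq N_{\min}(\delta) \lesssim (d^* + \log(\delta^{-1}))\tau^{-2d^*}$, the set $\{\mu_\epsilon x_i\}_i$ covers $\mu_\epsilon\M$ at resolution $O((c_\mu^2 N)^{-1/d^*})$. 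Combining these with $\mu_\epsilon/\sigma_\epsilon^2 \asymp \epsilon^{-1}$ in the regime $\epsilon < \epsilon_{\max}$, the cross-term yields an additive pointwise discrepancy of order $(K/\tau)\max\{d^*+1,\, (c_\mu^2 N)^{-1/d^*}\epsilon^{-1}\}$: the prefactor $K/\tau$ comes from $\|x-x^*\|/\tau$ via Assumption \ref{ass:sg_smooth}, the $d^*+1$ floor from the intrinsic tangential spread of the kernel inside the LSE, and the $(c_\mu^2 N)^{-1/d^*}\epsilon^{-1}$ term from the empirical covering radius on the manifold.

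To upgrade this pointwise control of $\log(\hat{p}_\epsilon^{k^\M}/\hat{p}_\epsilon^k)$ into the stated R\'enyi bound, I would use that both densities are produced by the same recipe \eqref{eq:log_smoothed_density} applied to different kernels: a uniform log-ratio estimate passes through the normalising constants and converts directly into a bound on $D_q$ for each admissible $q$. The thresholds $K_{\max} < \tau/96$ and $q \leq 1 + \tau/(96K)$ arise precisely in bounding $\E[\exp(q\cdot\text{error})]$, where Assumption \ref{ass:sg_smooth} provides the sub-Gaussian-type tail control needed to keep this finite. The hardest part, I expect, is extending the near-manifold decomposition to a \emph{global} R\'enyi bound: the reach-based estimate only holds inside the tube $\{x: \dist(x, \mu_\epsilon\M) < \mu_\epsilon\tau\}$, so one must separately argue that both $\hat{p}_\epsilon^{k}$ and $\hat{p}_\epsilon^{k^\M}$ assign negligible mass outside this tube (exploiting the Gaussian-tail decay inherited by the LSE from the individual $\|x-\mu_\epsilon x_i\|^2$ terms), so that the $D_q$ integral is dominated by the near-manifold region where the decomposition is valid.
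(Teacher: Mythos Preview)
Your outline matches the paper's approach closely: the same tangent/normal expansion of $\|x-\mu_\epsilon x_i\|^2$, the same reach inequality $|\langle x-x^*, x^*-\mu_\epsilon x_i\rangle|\lesssim \|x-x^*\|\,\|x^*-\mu_\epsilon x_i\|^2/\tau$, the same empirical-covering argument on $\M$ to control the dominant LSE indices, and the same recognition that concentration of both measures near $\mu_\epsilon\M$ is needed to close the argument. The paper packages the cross-term contribution into a quantity $\Delta_i = 2\langle Y-\tilde Y,\,\Pi_{\M^\epsilon}(Y)-\mu_\epsilon x_i\rangle$ and bounds the resulting LSE perturbation via an integral stability estimate (Lemma \ref{lem:lse_properties}) together with a two-scale index decomposition $I_0\subset I_1$; your ``LSE-dominant indices'' is the right intuition for this.

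The one place where your plan is slightly off is the phrase ``a uniform log-ratio estimate passes through the normalising constants.'' The pointwise bound the paper obtains on $\E[\Delta\lse_\M(x)\mid S]$ (Lemma \ref{lem:tighter_delta_bound}) is \emph{not} uniform in $x$: it carries explicit dependence on $D_x:=\dist(x,\M^\epsilon)$, both linearly (through a $D_x/\tau$ factor multiplying the $d^*$-term) and quadratically (through a $(K^{1/2}K_{\max}^{1/2}+D_x)^2/\sigma_\epsilon^2$ term, active only when $D_x\gtrsim\tau$). Consequently the R\'enyi bound is obtained by writing $D_q\le \tfrac{2}{\beta}\log\E_{Z\sim\hat p_\epsilon^{k^\M}}[\exp(\beta|\E[\Delta\lse_\M(Z)\mid S,Z]|)]$ and then controlling $\E[\exp(cD_Z)]$ and $\E[\exp(cD_Z^2\mathbbm{1}_{D_Z\ge\mu_\epsilon\tau/4})]$ separately. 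This requires a sub-Gaussian concentration bound for $D_Z$ under $\hat p_\epsilon^{k^\M}$, which the paper proves (Lemma \ref{lem:man_conc_simple}) via Weyl's tube formula to handle the volume of the level sets $\M^\epsilon_{\le r}$---a step you gesture at with ``Gaussian-tail decay inherited by the LSE,'' but which in the curved setting needs this extra geometric input to compare the integrals defining the normalising constant and the tail mass.
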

For large \(N\), the right-hand side becomes a function of three terms capturing dimensionality, curvature of the manifold and scale of the smoothing kernel. This bound shows that for log-domain smoothing to become geometry-adaptive, it is sufficient for the scale of smoothing normal to the manifold to be small relative to the manifold curvature and dimension. When \(N\) is small relative to \(\epsilon^{-d^*}\), the bound becomes \(K/\tau \epsilon\), highlighting the role that early stopping plays in the data-sparse setting. The dependence on $K$ also provides insight for how the behaviour of $\hat{p}_\epsilon^k$ depends on the kernel's manifold-alignment---when $k$ is already more aligned with the manifold structure, $K$ is smaller, and the closer $\hat{p}_\epsilon^k$ is to its manifold-adapted counterpart $\hat{p}_\epsilon^{k^\mathcal{M}}$.

We once again emphasise the key difference between the log-domain smoothing that we consider, and the traditional KDE approach which instead smooths the empirical measure in the density-domain. As KDE bandwidth increases, samples rapidly leave the data manifold, whereas smoothed-score diffusion models produce new samples along the manifold structure without deviating far from it.

\begin{figure}[t]
    \centering
    \begin{minipage}[t]{0.65\textwidth}
        \vspace{-13em}  %
        \includegraphics[width=\linewidth]{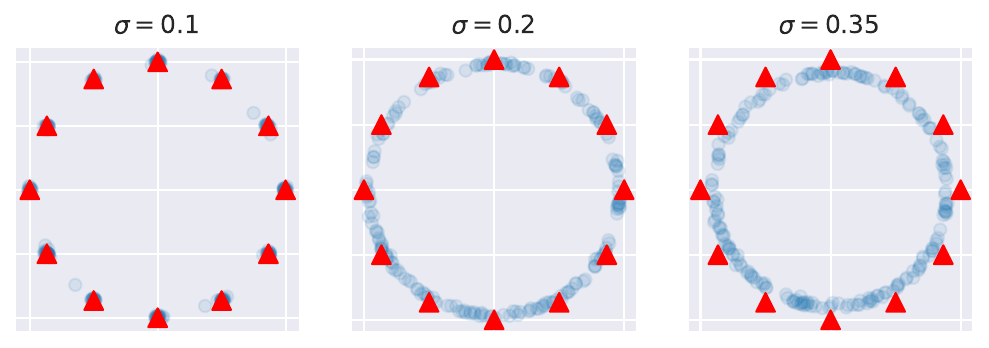}
    \end{minipage}%
    \hspace{1em}%
    \begin{minipage}[t]{0.3\textwidth}
        \includegraphics[width=\linewidth]{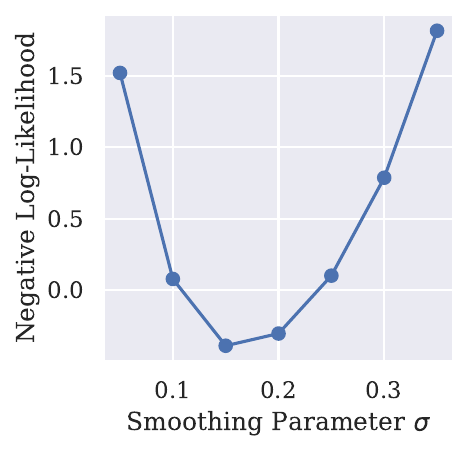}
    \end{minipage}
    \caption{\textbf{Score smoothing can promote generalisation along curved manifolds, but too much smoothing can distort the desired structure.}
    Left: Training data ($\textcolor{red}{\blacktriangle}$) against generated samples ($\textcolor{MidnightBlue}{\bullet}$) using isotropic Gaussian score smoothing with variance \(\sigma^2\). Right: Corresponding population negative log-likelihood, calculated for 1000 points on the true circular manifold. See Appendix \ref{app:circle_details} for details.}
    \label{fig:2d_circle_smoothing}
\end{figure}

\subsection{Log-domain smoothing and generalisation}\label{sec:theory-manifold-adapted-smoothing}
So far, we have presented theoretical results pertaining to the similarity of $\hat{p}_\epsilon^k$ and its manifold-adapted counterpart $\hat{p}_\epsilon^{k^\mathcal{M}}$. While it is intuitively clear that smoothing with the manifold adapted kernel $k^\mathcal{M}$ will help promote the kind of generalisation behaviour that we desire, we provide two results to validate that this is indeed the case. The following result demonstrates that \(\hat{p}_\epsilon^{k}\) preserves mass concentration around the manifold structure.
\begin{corollary}
\label{prop:manifold_concentration}
Consider the setting of Theorem \ref{thm:density_ratio_bound_simplified}, then for any \(\delta \in (0, 1]\), whenever \(\epsilon < \epsilon_{\max}, N > \max\{N_{\min}(\delta), \epsilon^{-2}\}\), we obtain that with probability at least \(1-2\delta\) that,
\begin{equation*}
    \prob_{Y \sim \hat{p}^{k}_\epsilon} \Big ( \operatorname{dist}(Y, \M) \geq r + m \Big | S \Big ) \leq 2 \exp \Big (- r^2 / 8 \epsilon \Big ), \qquad \text{for all } r \geq 0,
\end{equation*}
where \(m > 0\) is some quantity satisfying,
\begin{equation*}
    m^2 \lesssim K^2 + \frac{K}{\tau} \, \max \Big \{ d^* + 1, \, \big ( c_\mu^2 N \big )^{-\frac{1}{d^*}} \epsilon^{-1} \Big \} + \epsilon d + (\epsilon/c_\mu)^{2/d^*}.
\end{equation*}
\end{corollary}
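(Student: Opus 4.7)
The plan is to proceed in two stages: first, establish a manifold-concentration bound for the manifold-adapted proxy $\hat{p}_\epsilon^{k^\M}$; then transfer this concentration to $\hat{p}_\epsilon^k$ using Theorem~\ref{thm:density_ratio_bound_simplified}. The additive decomposition of $m^2$ into Stage~1 terms $K^2 + \epsilon d + (\epsilon/c_\mu)^{2/d^*}$ and a Stage~2 term that is exactly the Rényi bound from Theorem~\ref{thm:density_ratio_bound_simplified} strongly suggests this structure, and a union bound on the two failure events of probability $\delta$ each explains the $1-2\delta$ confidence.

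For Stage~1, I would start from the representation $\log \hat{p}_\epsilon^{k^\M}(x) = \int \log \hat{p}_\epsilon(y)\, k_x^\M(\mathd y) + \mathrm{const}$ and the $\lse$ form of the empirical log-density \eqref{eqn:emp_density_lse}. Keeping only the nearest training point in the $\lse$ yields $\log \hat{p}_\epsilon(y) \leq -\dist(y, \mu_\epsilon\{x_i\}_i)^2/(2\sigma_\epsilon^2) + C_\epsilon$. A covering-number argument, valid with probability $1-\delta$ once $N > \max\{N_{\min}(\delta), \epsilon^{-2}\}$ under the density lower bound of Assumption~\ref{ass:manifold}, then replaces $\dist(y, \mu_\epsilon\{x_i\}_i)$ by $\dist(y, \mu_\epsilon\M)$ up to the covering resolution $(\epsilon/c_\mu)^{1/d^*}$. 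Because $k_x^\M$ is supported on the level set $\mu_\epsilon\M_{r(x)}$, integrating yields $\log \hat{p}_\epsilon^{k^\M}(x) \leq -r(x)^2/(2\sigma_\epsilon^2) + C$; Assumption~\ref{ass:sg_smooth} links $r(x)^2$ to $\dist(x,\M)^2$ with a $K^2$ correction. After accounting for the volume of tubular level sets (which contributes the $\epsilon d$ term), the resulting upper bound on the log-density integrates to give a Gaussian tail around $\M$ of the form $\exp(-r^2/(8\epsilon))$ with offset $m_1^2 \lesssim K^2 + \epsilon d + (\epsilon/c_\mu)^{2/d^*}$.

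For Stage~2, Theorem~\ref{thm:density_ratio_bound_simplified} provides $D_q(\hat{p}_\epsilon^{k^\M}\|\hat{p}_\epsilon^k) \lesssim \frac{K}{\tau}\max\{d^*+1, (c_\mu^2 N)^{-1/d^*}\epsilon^{-1}\}$ with the required probability. The main obstacle is that the Rényi divergence is bounded in the direction $D_q(\hat{p}_\epsilon^{k^\M}\|\hat{p}_\epsilon^k)$, whereas the obvious Hölder inequality $P(A) \leq Q(A)^{(q-1)/q}\exp\bigl((q-1)D_q(P\|Q)/q\bigr)$ with $P = \hat{p}_\epsilon^{k^\M}$, $Q = \hat{p}_\epsilon^k$ controls $\hat{p}_\epsilon^{k^\M}$-probabilities by $\hat{p}_\epsilon^k$-probabilities---the opposite of what the Corollary asks for. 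The way around this is to view the Rényi bound as an $L^q(\hat{p}_\epsilon^k)$ control on the log-density ratio $\log(\mathd\hat{p}_\epsilon^{k^\M}/\mathd\hat{p}_\epsilon^k)$: on a high-probability region this ratio is bounded by (a constant times) $D_q$, and combining this with the Stage~1 Gaussian tail absorbs the log-density gap as a purely additive shift to the squared offset. This mechanism is consistent with the $\epsilon$-independent appearance of the $D_q$ term in $m^2$: the Rényi bound quantifies a log-density discrepancy that translates directly into a distance-squared offset, rather than being rescaled by $\epsilon$.
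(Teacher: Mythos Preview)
Your two-stage plan is correct and matches the paper. Stage~1 is essentially what the paper does: it packages the argument as a standalone concentration lemma for $\hat{p}_\epsilon^{k^\M}$, using the $\lse$ upper bound, a lower bound on $\hat{p}_\epsilon$ via a ball-mass estimate for $\hat\mu_{\mathrm{data}}$, and Weyl's tube formula to handle the tubular-volume normalisation. Optimising the ball radius $\varepsilon_0$ produces the $(\epsilon/c_\mu)^{2/d^*}$ term, and the Gaussian tail in $d$ dimensions gives the $\epsilon d$ offset.

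Where you diverge from the paper is Stage~2. You correctly flag that Theorem~\ref{thm:density_ratio_bound_simplified} is stated for $D_q(\hat{p}_\epsilon^{k^\M}\|\hat{p}_\epsilon^k)$ while the transfer needs the reverse direction, but your proposed workaround---extracting an $L^q$ bound on the log-ratio and arguing on a high-probability region---is vaguer than necessary and would require care to make rigorous (a one-sided R\'enyi bound does not by itself give a two-sided pointwise log-ratio bound). The paper's route is cleaner: the \emph{proof} of Theorem~\ref{thm:density_ratio_bound_simplified} actually bounds $\lvert\mathbb{E}[\Delta\lse_\M(x)\mid S]\rvert$ and integrates against $\hat{p}_\epsilon^{k^\M}$, which simultaneously yields the reverse R\'enyi $D_2(\hat{p}_\epsilon^k\|\hat{p}_\epsilon^{k^\M})$ with the same upper bound. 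One then applies the Cauchy--Schwarz change-of-measure (Lemma~21 of Chewi et al.): for any event $A$,
\[
\hat{p}_\epsilon^k(A)\ \le\ \exp\!\Big(\tfrac{1}{2}D_2(\hat{p}_\epsilon^k\|\hat{p}_\epsilon^{k^\M})\Big)\,\sqrt{\hat{p}_\epsilon^{k^\M}(A)}\,,
\]
which converts the sub-Gaussian tail with variance parameter $4\sigma_\epsilon^2$ under $\hat{p}_\epsilon^{k^\M}$ into one with variance parameter $8\sigma_\epsilon^2$ under $\hat{p}_\epsilon^k$, at the cost of an additional offset $\sqrt{4\sigma_\epsilon^2 D_2}$.

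This also corrects your final remark: the $D_2$ contribution to $m^2$ is $\sigma_\epsilon^2 D_2 \lesssim \epsilon\cdot D_2$, not $D_2$ itself. The R\'enyi bound does \emph{not} enter as a bare log-density gap; it is multiplied by the diffusion variance $\sigma_\epsilon^2$. The corollary's statement simply drops this favourable $\epsilon$ factor inside the $\lesssim$, so the apparent ``$\epsilon$-independence'' you highlight is an artefact of presentation, not of the mechanism.
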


This corollary shows that the distance to the manifold decays exponentially fast, obtaining nearly the same rate of concentration as the noised empirical measure \(\hat{p}_\epsilon\), prior to smoothing. In other words, smoothing in the log-density domain \textit{preserves} concentration to the manifold. We note that when \(K\) is large, the concentration bound becomes less strong.

Next, we show that smoothing with $k^\M$ does indeed distribute mass along the manifold structure. We let \(T_x \M\) denote the tangent space of \(\M\) at \(x\), i.e the set of vectors tangent to \(\M\) at \(x\).
\begin{proposition}
\label{prop:distributed_on_manifold}
Consider the setting of Theorem \ref{thm:density_ratio_bound_simplified}, let \(\delta \in (0, 1]\) and suppose that \(N > N_{\min}(\delta)\). Then, with probability at least \(1-\delta\), it holds that for any \(x \in \M\),
\begin{equation*}
     \hat{p}^{k}_\epsilon(x) \geq \hat{p}^{k}_\epsilon(x_i^\star) \exp \bigg ( - \frac{C F (K K_{\max} + (c_\mu^2 N)^{-1/2d^*})}{\sigma_\epsilon^{2}}  \|x - x_i^\star\| \bigg ), \qquad x_i^\star := \operatorname{argmin}_{\{x_i\}_{i=1}^N} \|x - x_i\|,
\end{equation*}
for some \(C \lesssim 1\) where we define \(F^2 := \sup_{x \in \M, v \in T_x \M} \frac{v^T \mathcal{I}(x) v}{\|v\|^2}\) and \(\mathcal{I}(x) \in \R^{d \times d}\) is the Fisher information matrix of \(k_x\).
\end{proposition}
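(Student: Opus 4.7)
The plan is to express the log-ratio as a path integral along a unit-speed geodesic $\gamma:[0,L]\to\M$ joining $x_i^\star$ to $x$, and to bound $\tfrac{d}{dt}\log\hat{p}^k_\epsilon(\gamma(t))$ in terms of the Fisher information of the kernel. Under Assumptions~\ref{ass:manifold} and \ref{ass:reach}, together with the high-probability covering of $\M$ by the training samples available when $N\geq N_{\min}(\delta)$, the Euclidean distance $\|x-x_i^\star\|$ is small relative to $\tau$, so standard reach estimates give $L\leq C\|x-x_i^\star\|$. Writing $q(y;z)$ for the density of $k_z$, the log-derivative trick combined with the score identity $\E_{Y\sim k_z}[\nabla_z\log q(Y;z)]=0$ yields, for any constant $c$,
\[
\tfrac{d}{dt}\,\E_{Y\sim k_{\gamma(t)}}[\log\hat{p}_\epsilon(Y)] \;=\; \dot\gamma(t)^T\,\E\!\bigl[(\log\hat{p}_\epsilon(Y)-c)\,\nabla_z\log q(Y;\gamma(t))\bigr].
\]

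Choosing $c=\E[\log\hat{p}_\epsilon(Y)]$ and applying Cauchy-Schwarz, together with $\dot\gamma(t)\in T_{\gamma(t)}\M$ of unit norm, I would then obtain
\[
\left|\tfrac{d}{dt}\log\hat{p}^k_\epsilon(\gamma(t))\right| \;\leq\; \sqrt{\dot\gamma(t)^T\mathcal{I}(\gamma(t))\dot\gamma(t)}\cdot\sqrt{\mathrm{Var}_{Y\sim k_{\gamma(t)}}(\log\hat{p}_\epsilon(Y))}\;\leq\; F\,\sqrt{\mathrm{Var}_{Y\sim k_{\gamma(t)}}(\log\hat{p}_\epsilon(Y))},
\]
using $F$ to bound the tangential quadratic form of $\mathcal{I}$. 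Integrating over $t\in[0,L]$ then reduces the proposition to the variance bound $\mathrm{Var}_{Y\sim k_{\gamma(t)}}(\log\hat{p}_\epsilon(Y))\lesssim\bigl(KK_{\max}+(c_\mu^2 N)^{-1/2d^*}\bigr)^2/\sigma_\epsilon^4$.

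To establish this variance bound, I would decompose $\log\hat{p}_\epsilon(Y)$ via the LSE representation~\eqref{eqn:emp_density_lse} and a tangent/normal split analogous to~\eqref{eqn:tang_norm_decomp}, absorbing the curvature correction (of order $\dist(Y,\M)^3/\tau^2$) into constants using $K_{\max}<\tau/96$. The normal piece $-\dist(Y,\mu_\epsilon\M)^2/(2\sigma_\epsilon^2)$ is controlled directly by Assumption~\ref{ass:sg_smooth}: since $\gamma(t)\in\M$ and $\dist(Y,\M)\leq K_{\max}$ almost surely, $\mathrm{Var}(\dist(Y,\M)^2)\leq K^2 K_{\max}^2$, contributing $KK_{\max}/\sigma_\epsilon^2$ to the standard deviation. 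The tangent LSE is the remaining piece: here a covering argument, combined with the high-probability control of the covering radius of $\{x_i\}_{i=1}^N$ on $\M$ available for $N\geq N_{\min}(\delta)$, should produce a variance of order $(c_\mu^2 N)^{-1/d^*}/\sigma_\epsilon^4$, contributing the remaining $(c_\mu^2 N)^{-1/2d^*}/\sigma_\epsilon^2$ to the standard deviation. Assembling the two contributions and integrating over the geodesic then recovers the claim after absorbing the distortion between geodesic and Euclidean distance into the constant $C$.

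The main obstacle is the tangent-LSE variance bound at the advertised rate. One must control how the nearest-neighbour structure of the training samples varies across the support of $k_x$ and verify that the resulting fluctuation, once integrated against the kernel, matches the $(c_\mu^2 N)^{-1/2d^*}$ scale. This step interleaves probabilistic control of the empirical sample on $\M$ (where Assumption~\ref{ass:manifold} and $N_{\min}(\delta)$ enter through a covering estimate) with a deterministic Lipschitz-type bound on the tangent LSE in the projected coordinates. Tracking the curvature corrections from Assumption~\ref{ass:reach} cleanly enough that they are dominated by the stated terms under $K_{\max}<\tau/96$ is the other delicate piece of the argument.
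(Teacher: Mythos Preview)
Your overall architecture coincides with the paper's: express $\log\hat p^k_\epsilon(x)-\log\hat p^k_\epsilon(x_i^\star)$ as a path integral along a geodesic in $\M$, apply Cauchy--Schwarz to separate the tangential Fisher information $F$ from an $L^2$-norm of the (centred) log-density $\log\hat p_\epsilon(Y)$ under $k_{\gamma(t)}$, and control the geodesic length by $\|x-x_i^\star\|$ via the reach bound once the high-probability covering gives $\|x-x_i^\star\|\leq\tau/2$.

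The one point where the paper proceeds differently, and more simply, is the second-moment bound. You centre at $c=\E[\log\hat p_\epsilon(Y)]$ and therefore need to control the \emph{variance}, which you propose to obtain by a tangent/normal decomposition of the LSE; you correctly identify the tangent-LSE variance as the delicate step and leave it open. The paper instead centres at the fixed constant $c=-\tfrac{d}{2}\log(2\pi\sigma_\epsilon^2)$ and bounds $|\log\hat p_\epsilon(y)-c|$ \emph{pointwise} on both sides: the upper bound is immediate from the Gaussian-mixture form, and the lower bound is exactly the estimate \eqref{eq:log_bound} from Lemma~\ref{lem:man_conc_simple}, which already encapsulates the empirical-ball/covering argument. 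This yields $|\log\hat p_\epsilon(y)-c|\lesssim \sigma_\epsilon^{-2}(\dist(y,\M^\epsilon)+K)^2$ plus an additive constant of the claimed $N$-dependent size; taking the $L^2$-norm under $k_x$ then only requires $\E[(\dist(Y,\M^\epsilon)+K)^4]^{1/2}\lesssim K K_{\max}$ for $x\in\M$, which follows directly from Assumption~\ref{ass:sg_smooth}. Since variance is dominated by the second moment about any constant, the paper's route in fact implies your variance bound and bypasses your ``main obstacle'' entirely---no separate analysis of the tangent-LSE fluctuation is needed.
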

The quantity \(F\) is an upper bound on the Fisher information of the kernel along the manifold. In the case where \(k\) is a Gaussian kernel with variance \(\sigma^2 I_d\), we have that \(F \approx 1/\sigma^2\) for \(\sigma^2\) small. Thus, whenever \(\sigma_\epsilon^2 = \mathcal{O}(\epsilon)\) is small relative to \(\sigma^2\), we obtain that arbitrary points on the manifold receive similar density as the training data.

The combination of these two results show that log-domain smoothing distributes probability mass along the manifold (\Cref{prop:distributed_on_manifold}) while preserving geometric structure (\Cref{prop:manifold_concentration}). While further work must be done to obtain rigorous generalisation bounds, the above results already suggest an interesting relationship between the scale of log-domain smoothing and generalisation. As the scale of the smoothing grows, the quantity \(K\) grows also, and once it becomes large relative to \(\tau/\epsilon d^*\) the strength of the bound in Corollary \ref{prop:manifold_concentration} weakens. Simultaneously, as the scale of smoothing grows, \(F\) decays, increasing the distribution of mass along the manifold. This suggests a possible trade-off in the generalisation error that is governed by the scale of the smoothing and its relationship to \(\epsilon, d^*\) and \(\tau\). We explore this in Figure \ref{fig:2d_circle_smoothing} where we plot population error (given by the negative log-likelihood) against scale of smoothing for a simple 2\(d\) example. This produces a U-shaped curve, suggesting that such a trade-off is indeed present and that the smoothing parameter can be tuned to obtain generalisation gains over the empirical measure \(\hat{p}_\epsilon\), with too much smoothing worsening generalisation.

\section{Rethinking the manifold hypothesis: geometry and inductive bias} \label{sec:geometric_bias}
So far, we have considered the traditional setting of the manifold hypothesis, where the goal is to identify the \emph{true} geometry hidden within the data. In Section \ref{sec:geometry-adaptivity-of-log-domain-smoothing}, we perform our analysis under the assumption that the data distribution is supported on a well-defined ground truth manifold.
However, given a finite dataset there are often many plausible ways to interpolate between data---particularly in high dimensions, where the distance between neighbouring examples becomes large. In practice, the notion of a \textit{correct} interpolation is task-specific and subjective: as long as the interpolation aligns with application-specific criteria, the resulting generative model is deemed successful. This is exemplified by the use of evaluation metrics such as FID, which are designed to align with human visual perception rather than reflect distances in the original data space.
Therefore, we argue that practitioners are not aiming to recover the \emph{true} interpolation, but are instead implicitly designing for biases, through network architectures and training algorithms, that steer the model towards desirable generative behaviour. The success of the bias is then judged by how well the generated samples align with subjective or task-specific criteria.

This motivates the shift in perspective in this section: rather than assuming the existence of a ground truth manifold, we advocate for studying how inductive biases in these models influence how training data is interpolated---that is, how the model \textit{chooses} an interpolating manifold. We refer to this form of bias as the \textit{geometric bias} of the model. In the setting of diffusion models, understanding how geometric bias manifests requires not only understanding inductive bias at the stage of score matching, but also how that inductive bias is transformed into geometry by the sampling algorithm. By analysing this transformation, we can begin to understand which kinds of generalisation are possible, how inductive bias shapes the nature of samples produced, and how such biases can be purposefully engineered to match the needs of specific applications.

\subsection{Geometric bias of log-domain smoothing}\label{sec:geometric_bias_theory}

\begin{figure}[t]
    \centering
    \begin{subfigure}[b]{0.25\linewidth}
      \centering
      \caption*{$\sigma=0.1$}
      \includegraphics[width=0.9\textwidth]{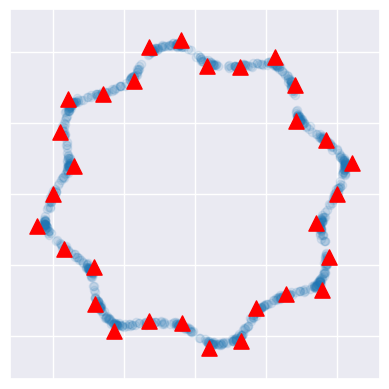}
    \end{subfigure}%
    \begin{subfigure}[b]{0.25\linewidth}
      \centering
      \caption*{$\sigma=0.25$}
      \includegraphics[width=0.9\textwidth]{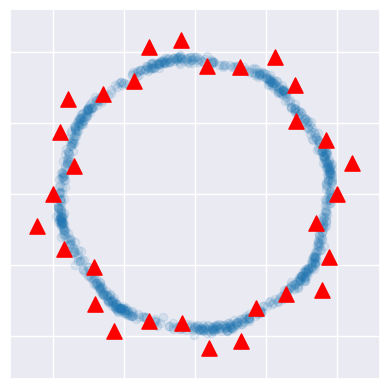}
    \end{subfigure}%
    \begin{subfigure}[b]{0.25\linewidth}
      \centering
      \caption*{$\sigma=1.0$}
      \includegraphics[width=0.9\textwidth]{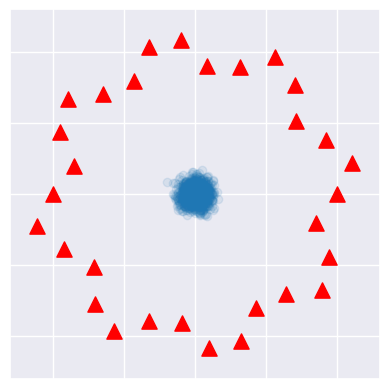}
    \end{subfigure}
    \caption{\textbf{Different smoothing kernels can isolate alternative manifolds, given the same training data.} Training data ($\textcolor{red}{\blacktriangle}$) against generated samples ($\textcolor{MidnightBlue}{\bullet}$) using isotropic Gaussian score smoothing. By changing the smoothing variance \(\sigma^2\), different geometries are realised.}
    \label{fig:choosing_different_manifolds}
\end{figure}

In this section, we consider this alternative perspective from the viewpoint of log-domain smoothing, inspecting how the smoothing kernel influences the choice of manifold that the resulting diffusion model is attracted to. We begin with some experiments in simple toy settings.

In Figure \ref{fig:toy_different_smoothing}, we investigate settings where the data (red triangles) could reasonably be interpolated by several different manifolds with the generated samples (blue) revealing how different choices of smoothing kernel influences the recovered manifold. The two figures on the left-hand side of Figure \ref{fig:toy_different_smoothing} consider a setting where data is sampled from a wavy circle. The plots demonstrate that when smoothing is adapted to level sets of the wavy structure, samples remain faithful to this geometry, whereas when smoothing aligns with level sets of the base circle, the samples conform to this simpler shape. These observations demonstrate that choosing a kernel that smooths the empirical score \emph{parallel} or \emph{tangentially} to a target manifold $\mathcal{M}$ induces a bias towards that manifold. On the right-hand side of Figure \ref{fig:toy_different_smoothing}, we show that by tailoring the kernel to favour particular geometric structures, the diffusion model can interpolate between drastically different geometries, altering both the dimension and the connectedness of the resulting manifold. In Figure \ref{fig:choosing_different_manifolds}, we once again consider the wavy circle example but with isotropic Gaussian smoothing, and identify how the choice of the manifold can be influenced also by the \emph{scale} of the smoothing. At small scales, the generated samples preserve the finer details of the wavy structure, while at larger scales the broader circular shape is recovered. Increasing the noise level further results in samples collapsing to the centre point, which can be considered an overly-simplified manifold representation of the training data.

\subsection{Geometry-adaptivity and geometric bias}\label{sec:theoretical-geometric-bias}
The theoretical analysis of Section \ref{sec:main_result} can also be extended to the present setting, allowing us to further elucidate the relationship between the smoothing kernel and geometric bias. In particular, we provide a modification of Theorem \ref{thm:density_ratio_bound_simplified} that quantifies how well log-domain smoothing adapts to different manifolds, without the requirement for the data to belong to that manifold.

We define the set of \textit{permissible manifolds} \(\mathbb{M}_\mu\) to be the set of all smooth compact submanifolds \(\M \subseteq \R^d\) with non-zero reach \(\tau_\M > 0\), satisfying the property \(c_\M := \operatorname{ess \, inf}_{\mud} p_{\mu, \M} > 0\) where \(p_{\mu, \M}\) denotes the density of \((\Pi_\M)_* \mud\) with respect to the volume measure on \(\M\). In other words, \(\mathbb{M}_\mu\) consists of all manifolds with bounded curvature, such that the projection of \(\mud\) onto \(\M\) has full support. Furthermore, given any \(\M \in \mathbb{M}_\mu\), we let \(d^*_\M\) denote the manifold dimension and we define,
\begin{equation*}
    K_\M^2 := \sup_{x \in \R^d} \E_{Y \sim k_x} \big [ | \dist(Y, \M) - \dist(x, \M) |^2 \big ], \qquad K_{\max, \M} := \| \dist(Y, \M) - \dist(x, \M) \|_{L^\infty}.
\end{equation*}
With this, we can state our second main result.

\begin{theorem}
\label{thm:geometric_bias}
Let \(\M \in \mathbb{M}_{\mu}\) and \(\Delta_\M := \dist(\{x_i\}_{i=1}^N, \M)\). Then, for any \(\delta \in (0, 1]\), whenever \(K_{\M, \max} + \gap \leq \tau_\M/96\) and \(N, \epsilon^{-1}\) is taken sufficiently large, we obtain with probability at least \(1 - \delta\) that,
\begin{equation}\label{eq:geometric_bias_00}
    D_2 \big ( \hat{p}_\epsilon^{k^{\M}} \big \| \hat{p}^k_\epsilon \big ) \lesssim \frac{K_\M (d_\M^* + 1)}{\tau_\M} + \frac{K_\M \gap}{\epsilon} + \frac{K_\M^2 \gap^2 d_\M^*}{\tau_\M \, \epsilon}.
\end{equation}
\end{theorem}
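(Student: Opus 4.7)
The plan is to reduce Theorem \ref{thm:geometric_bias} to Theorem \ref{thm:density_ratio_bound_simplified} by introducing an auxiliary dataset living exactly on \(\M\), namely \(\tilde{x}_i := \Pi_\M(x_i)\), which is well defined since the hypothesis \(K_{\M,\max} + \gap \leq \tau_\M/96\) forces \(\gap < \tau_\M\). The permissibility condition \(c_\M > 0\) ensures that \((\Pi_\M)_* \mud\) admits a density bounded below on \(\M\), so the projected empirical distribution satisfies the hypotheses of Theorem \ref{thm:density_ratio_bound_simplified} with manifold \(\M\), dimension \(d^*_\M\), and reach \(\tau_\M\). Let \(\tilde{p}_t\), \(\tilde{p}_\epsilon^k\) and \(\tilde{p}_\epsilon^{k^\M}\) denote the corresponding noised empirical density and its log-domain smoothings. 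I would then decompose the relevant Radon-Nikodym derivative into the product of the three ratios \(\hat{p}_\epsilon^{k^\M}/\tilde{p}_\epsilon^{k^\M}\), \(\tilde{p}_\epsilon^{k^\M}/\tilde{p}_\epsilon^k\), and \(\tilde{p}_\epsilon^k/\hat{p}_\epsilon^k\), and combine these via a weak triangle inequality for R\'enyi divergences (H\"older in the density ratios).

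First I would apply Theorem \ref{thm:density_ratio_bound_simplified} to the auxiliary system with some \(q > 2\) chosen large enough to absorb the outer H\"older factors, so that taking \(N, \epsilon^{-1}\) sufficiently large absorbs the sample-complexity contribution and produces the leading term \(K_\M (d^*_\M + 1)/\tau_\M\) of \eqref{eq:geometric_bias_00}. The bulk of the remaining work is to control the two outer ratios via a log-density perturbation estimate between \(\hat{p}_t\) and \(\tilde{p}_t\). Writing \(n_i := x_i - \tilde{x}_i\) with \(\|n_i\| \leq \gap\) and using the LSE representation \eqref{eqn:emp_density_lse} together with Fact \ref{fact:lse_shift}, I would Taylor-expand \(\log \hat{p}_t(y) - \log \tilde{p}_t(y)\) to second order in the \(n_i\), splitting into a softmax-weighted linear term in \(\langle y - \tilde{x}_i,\, n_i\rangle/\sigma_t^2\) and a quadratic remainder of order \(\gap^2/\sigma_t^2\). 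Integrating the linear term under \(k_x\) versus \(k^\M_x\), and exploiting that their difference is concentrated along directions normal to \(\M\), bounds its contribution by \(K_\M \gap/\sigma_\epsilon^2 \asymp K_\M \gap/\epsilon\), matching the second term in \eqref{eq:geometric_bias_00}. The quadratic remainder, combined with the tangential spread of \(k^\M_x\) of order \(d^*_\M/\tau_\M\) induced by the curvature-dependent projection onto the level set, yields \(K_\M^2 \gap^2 d^*_\M / (\tau_\M \epsilon)\).

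The main obstacle I anticipate is the uniform control of the remainder in this LSE expansion and, closely related, the Lipschitz stability of the manifold-adapted kernel \(k^\M_x\) under the \(\gap\)-perturbation of the underlying data. Because \(k^\M_x\) depends on \(x\) both through \(r(x)\) and through the projection \(\Pi_{\mu_\epsilon \M_{r(x)}}\), one must show that replacing \(x_i\) by \(\tilde{x}_i\) only perturbs these maps by \(O(\gap)\) in the relevant operator-norm sense. The hypothesis \(K_{\M,\max} + \gap \leq \tau_\M/96\) is precisely what keeps all points involved within the well-posed regime of the projection---the effective reach \(\tau_\M - \gap - K_{\M,\max}\) remains a positive fraction of \(\tau_\M\)---so the reach-based estimates used in the proof of Theorem \ref{thm:density_ratio_bound_simplified} transfer at the cost of multiplicative constants. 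Squaring the resulting log-density perturbation and integrating against \(\hat{p}_\epsilon^{k^\M}\) as required by \(D_2\), and then collecting the three contributions, yields the stated bound.
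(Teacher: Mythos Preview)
Your projection-and-perturbation route is genuinely different from the paper's argument, and while it could in principle be made to work, it is more indirect and leaves a real gap in the derivation of the third term.

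The paper does \emph{not} introduce an auxiliary on-manifold dataset. Instead it reuses the same core machinery as Theorem~\ref{thm:density_ratio_bound_simplified}: the pointwise bound on \(\E[|\Delta\lse_\M(x)|\,|\,S]\) (Lemma~\ref{lem:tighter_delta_bound}) is already stated with a parameter \(\gap\) measuring how far the data sit from \(\M\). The proof of Theorem~\ref{thm:geometric_bias} simply plugs \(\gap>0\) into that lemma, chooses \(\varepsilon_0^2=d_\M^*\sigma_\epsilon^2\) in the infimum, and then controls the exponential moments exactly as before, with the one difference that the manifold-concentration constant in Lemma~\ref{lem:man_conc_simple} now carries an extra \((K_\M+\gap)^2/\sigma_\epsilon^2\). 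The three terms in \eqref{eq:geometric_bias_00} fall out immediately: the first from the \(\gap\)-free part of the \(\Delta\lse_\M\) bound, the second from the new \((\gap+\tau_\M+K_\M+D_x)\gap/\sigma_\epsilon^2\) contribution, and the third from the cross term obtained when the degraded concentration of \(D_Z\) meets the exponential-moment coefficient \(c\asymp (K_\M/\tau_\M)(1/\tau_\M+\gap^2/\sigma_\epsilon^2)d_\M^*\).

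In your approach, the middle factor \(\tilde p_\epsilon^{k^\M}/\tilde p_\epsilon^k\) is handled by Theorem~\ref{thm:density_ratio_bound_simplified}, but that theorem gives an integrated bound against \(\tilde p_\epsilon^{k^\M}\), not a pointwise one, so combining it with the two outer factors via a weak R\'enyi triangle inequality in three pieces is delicate and will cost you in the order \(q\). More importantly, your explanation of where the \(K_\M^2\gap^2 d_\M^*/(\tau_\M\epsilon)\) term comes from---a ``quadratic remainder'' combined with a ``tangential spread of \(k^\M_x\) of order \(d_\M^*/\tau_\M\)''---does not match the actual mechanism. The \(d_\M^*\) factor enters through the covering-number/ball-mass argument (Lemma~\ref{lem:lower_emp_meas}) controlling how many data points lie near a projected point, not through any spread of the kernel; and the \(\gap^2/\epsilon\) factor enters through the worsened manifold concentration of \(Z\sim\hat p_\epsilon^{k^\M}\) when the data are off-manifold, which then amplifies the linear-in-\(D_Z\) piece of the \(\Delta\lse_\M\) bound. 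A second-order Taylor expansion of the LSE in the displacements \(n_i\) does not by itself produce this interaction, so as written your sketch does not account for the third term.
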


While Theorem \ref{thm:density_ratio_bound_simplified} controls how closely log-domain smooths adapts to an underlying true geometry, Theorem \ref{thm:geometric_bias} instead bounds how well \emph{any given} manifold describes the geometry produced by log-domain smoothing. When a manifold \(\M \in \mathbb{M}_\mu\) causes the right-hand side of this bound to be small, this indicates that smoothing with respect to a generic kernel \(k\) is similar to smoothing with respect to the kernel \(k^\M\) which is adapted to \(\M\). Indeed, any manifold \(\M\) that makes the right-hand side small is one that \emph{effectively captures the geometric bias of the smoothing kernel \(k\)}. With this, we have a strategy for identifying which manifolds log-domain smoothing will adapt to: by optimising the right-hand side with respect to \(\M \in \mathbb{M}_\mu\).

Furthermore, by analysing the effects of different parameters on this optimisation problem, we can begin to understand the logic that dictates the relationship between the smoothing kernel and the geometric bias of the resulting diffusion model. Indeed, the two terms of the bound in \eqref{eq:geometric_bias_00} represent a trade-off between decreasing the curvature \(\tau_\M^{-1}\), and decreasing the accuracy of the interpolating manifold \(\gap\), where the trade-off is dictated by the parameters, \(K_\M\), \(d_\M^*\) and \(\epsilon\). Since \(\epsilon\) is small, the second term can be large in magnitude and so optimising with respect to \(\M\) will prioritise manifolds with small \(\Delta_\M\), even if it comes at the expense of a larger \(\tau_\M\). However, as the scale of the smoothing grows, \(K_\M\) grows with it and so the term that is quadratic in \(K_\M\) will dominate. As a result, the optimisation of this bound will prioritise manifolds with lower curvature. This captures the phenomenon in Figure \ref{fig:choosing_different_manifolds} where larger smoothing scales results in the diffusion model selecting lower curvature geometries. Additionally, this analysis is consistent with the outcome of the experiment in Figure \ref{fig:toy_different_smoothing}. Indeed, if \(k_x\) prioritises particular directions in its smoothing, then choosing \(\M\) tangent to those directions will make \(K_\M\) small and so optimising \eqref{eq:geometric_bias_00} will produce the manifold tangent to those directions that minimises \(\gap\) and \(d^*_\M\). That is, a low-dimensional manifold that is tangent to the smoothing directions while best interpolating the data.

While our analysis agrees with our empirical findings, fully characterising the geometric bias of score function smoothing through this strategy requires the development of a matching lower bound. This  would require significant additional analysis and would likely require different techniques to the upper bound, so we leave this for future work.

\section{High-dimensional experiments}\label{sec:high-dim-experiments}

\begin{figure}[!t]
    \begin{minipage}[t]{0.55\textwidth}
        \centering
        \includegraphics[width=\textwidth]{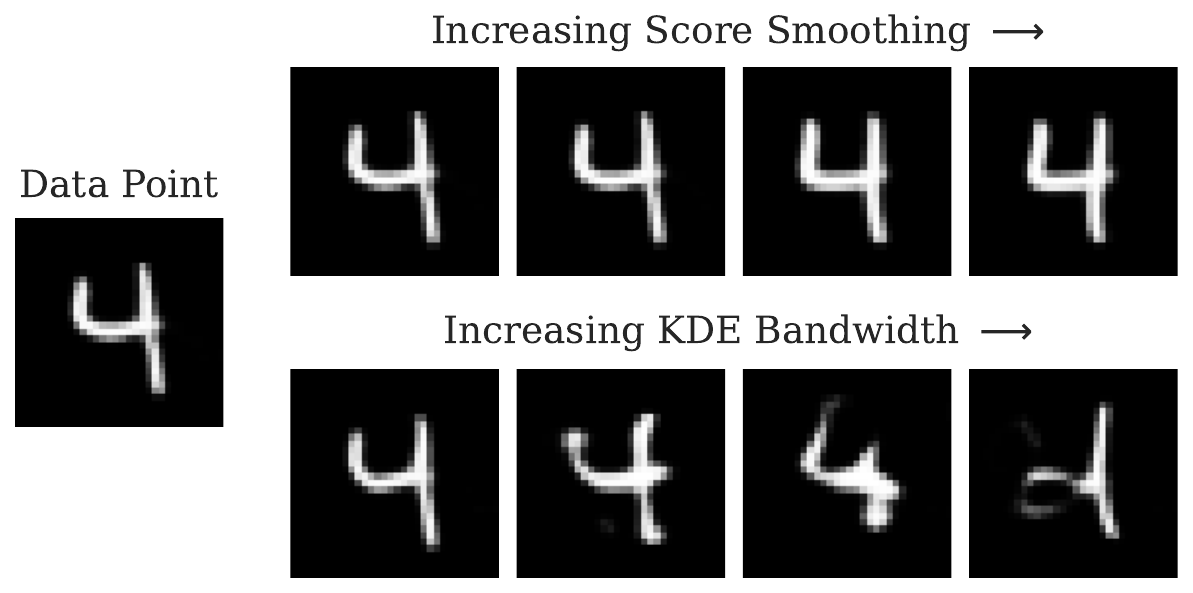}%
        \captionof{figure}{As smoothing level increases, generations from the score-smoothed diffusion model remain in the manifold structure. In contrast, samples from KDE quickly deviate from the manifold as the kernel scale increases, leading to poor reconstructions.}
        \label{fig:mnist_KDE_smoothing_plots_4s}
    \end{minipage}
    \hfill
    \begin{minipage}[t]{0.4\textwidth}
        \centering
        \includegraphics[width=0.85\textwidth]{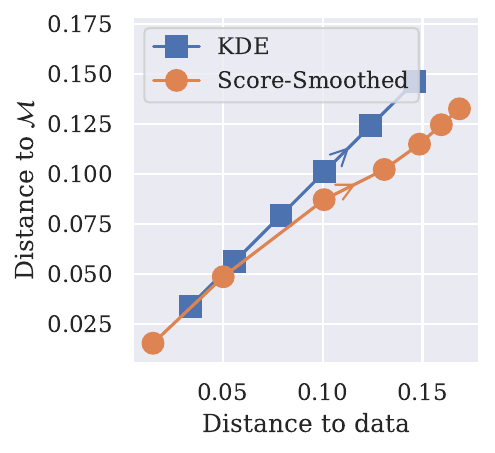}
        \captionof{figure}{Comparison of distance to closest point in training data and closest point in $\mathcal{M}$, for score-smoothed diffusion and KDE. Arrows indicate increasing amounts of smoothing.}
        \label{fig:mnist_KDE_smoothing_L2_4s}
    \end{minipage}%
\end{figure}

Up until this section, our experiments have focused on illustrative low-dimensional settings to complement our theoretical presentation. In this section, we consider higher-dimensional settings to investigate the extent to which the identified phenomena extend to scenarios more representative of practical diffusion model applications. Specifically, we focus on the setting of image synthesis.

\subsection{Generation in latent space}
\label{sec:KDE_vs_smooth_experiments}
We first present an experiment on the MNIST dataset that illustrates how score-function smoothing can preserve the geometry of the data, in contrast to density-level smoothing which quickly loses such structure.
In order to consider a well-structured manifold, we follow \citet{rombach2022_latentdm} and use a 32-dimensional VAE latent-space encoding of MNIST digits, and perform generation in the latent space. We consider $\mathcal{M}$ to be the set corresponding to the digit \(4\), which comprises a lower-dimensional structure in the latent space. This ground-truth manifold is approximated using all samples of the digit 4, from which we use a subset of 100 samples as our training dataset. We consider a smoothed-score diffusion model using an isotropic Gaussian kernel, and compare with kernel density estimation which corresponds to smoothing at the density level. For details regarding the experimental setup, see Appendix \ref{app:KDE_vs_smooth_experiments}.

We wish to assess how well the manifold structure is preserved and therefore how well generated samples decode to resemble the digit \(4\).
On the top row of \cref{fig:mnist_KDE_smoothing_plots_4s}, we display samples from a score-smoothed diffusion model as the scale of smoothing increases. For small scales of smoothing, the samples perfectly recover one of the training examples (plotted on the left). As the amount of smoothing increases, the samples become novel images that are not present in the dataset yet nonetheless decode to resemble samples from the class of \(4\)'s, suggesting that they remain close to the underlying geometry. The bottom row displays KDE samples centred on the same training example at different scales of smoothing. We see that the quality of the reconstructed quickly deteriorates, indicating that in order to induce the same degree of novelty and difference from the training set, the KDE moves significantly further off-manifold, thereby failing to preserve the geometric structure of the data.

We also provide a quantitative assessment of this behaviour in \cref{fig:mnist_KDE_smoothing_L2_4s}, which plots the average distance to the manifold against the average distance to the training dataset. As the KDE smoothing kernel increases in scale, the distance to the manifold increases identically with the distance to the data. By contrast, as smoothing increases the score-smoothed diffusion samples sometimes become closer to other points outside of the training dataset that are still within the class of \(4\)'s, indicating that mass is being spread along the manifold structure.

\subsection{Generation in pixel space}

While image generation in latent space is common, diffusion models are also known to succeed when learning in pixel space directly. Thus, in this section, we consider the effect of different types of smoothing in this setting. As working in pixel space is more challenging, we will focus on simple one-dimensional image manifolds.

Now that we are operating in the pixel space, the training datapoints are further apart and thus there are many permissible manifolds that interpolate the data. In \Cref{sec:geometric_bias}, we considered how the \textit{type} of smoothing can induce different structures in the generated samples, and illustrated this effect with low-dimensional experiments. In this section, we assess to what extent this intuition transfers to higher-dimensional settings by also considering smoothing kernels adapted to the manifold structure, and seeing whether this can influence the geometric structure of the sampling distribution.

\subsubsection{Synthetic image manifold}
\label{sec:synthetic image manifold}

We begin with a synthetic image setting in which the data is constructed to lie on a 1-dimensional manifold produced by a closed curve $\phi : [0, 2\pi) \to \R^{\text{64} \times \text{64}}$. The curve maps each angle to an image of a bump function centred at the corresponding angle around a circle. This construction yields a closed manifold in pixel space, and enables precise control over the geometric structure whilst retaining key properties of image-like data.  A visualisation showing the traversal of the manifold is provided in \cref{fig:gaussian_bump_manifold}. The training dataset is then constructed using 16 equidistant points along the synthetic manifold.
For full details of the experimental setup, see Appendix \ref{app:synthetic_image}.

\begin{figure}[t]
    \centering
    \includegraphics[width=0.52\textwidth]{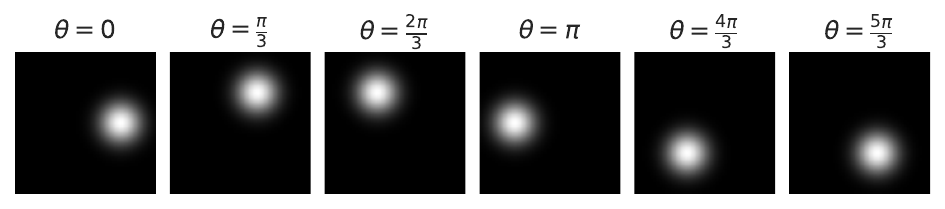}
    \caption{Visualisation of traversing the synthetic image manifold.}
    \label{fig:gaussian_bump_manifold}
\end{figure}

In \Cref{fig:synth_images_smoothing_comparison}, we again compare isotropic Gaussian score function smoothing to the KDE by examining the average distances to the manifold relative to the empirical dataset, for different scales of smoothing. 
For the KDE, we again observe that the distance to the manifold increases identically with the distance to the data, while the Gaussian score-smoothing exhibits a slight downwards curve.

We additionally consider a smoothed-score diffusion model using a \textit{manifold-adapted} kernel, constructed by smoothing along the a translated version of the manifold $\mathcal{M}$ passing through the current point. Importantly, the sampling process incorporates knowledge of the true manifold only through the smoothing mechanism---the empirical score function that is being smoothed only contains information about the training dataset. We see in \Cref{fig:synth_images_smoothing_comparison} that the manifold-adapted smoothing generates samples significantly closer to the true manifold compared to simple isotropic Gaussian smoothing---this illustrates that the \textit{type} of smoothing can influence the geometric structure of the generated samples, supporting the discussion in \Cref{sec:geometric_bias}. We include additional plots that further elucidate this effect in the Appendix \ref{app:synthetic_image}.

We also observe in \cref{fig:circle_images_curvature} that changing the width of the bump function, which we denote by $\eta$, influences the degree to which this effect occurs. For larger $\eta$, manifold-adapted smoothing is able to effectively generate samples close to the manifold, but far from training data, avoiding direct memorization of training data. However, as $\eta$ decreases, the generated samples deviate further from the manifold. We anticipate that this reflects our theoretical results which suggest that higher manifold curvature makes it less likely that samples remain close to the manifold.

\subsubsection{Image manifolds}
\label{sec:image manifold}
We now return to the MNIST dataset, and consider generations in pixel space. Following a similar setup to above, we define a 1-dimensional image manifold given by a curve $\phi : [0,1] \to \R^{\text{32} \times \text{32}}$. This is constructed by interpolating between three datapoints from the same class, drawing a triangle between their latent representations using the same VAE as in \cref{sec:KDE_vs_smooth_experiments}, and decoding this triangular latent path to obtain the closed loop $\phi(t)$ in pixel space. We take 10 points along this curve to form the dataset and, as above, we consider Gaussian and manifold-adapted smoothing kernels. We emphasise that the diffusion procedure takes place in \textit{pixel space}; the VAE is used only to define the manifold structure.

To quantitively assess the generated samples, in \cref{fig:mnist_L2_4s} we again plot distance to the manifold $\mathcal{M}$ versus distance to the training dataset. The same pattern as before is observable in the $L_2$ distances as previously; samples generated using manifold-adaptive smoothing remain comparatively closer to the manifold compared to those generated using Gaussian smoothing. As a measure of generalisation and visual sample quality, we also report the Fréchet inception distance (FID) \citep{heusel17_FID} of samples compared to a held-out test set. We plot this in \cref{fig:mnist_fid_4s} for both smoothing kernels, demonstrating a generalisation benefit to score function smoothing. 
The Gaussian smoothing is known to produce barycentres of training datapoints \citep{scarvelis2023closedformdiffusionmodels}, which is not well-suited for generations in pixel space as it results in `blurring' as the smoothing level increases. This is observed in a steep increase in the FID value for large enough smoothing. In contrast, the manifold-adapted smoothing tends to suffer less from this blurring phenomenon and therefore avoids the same steep increase in FID value. We include full details, and additional plots for different curves $\phi$, in \cref{app:MNIST_compare_smoothing}.

\begin{figure}[!t]
    \noindent
    \begin{minipage}[t]{0.48\textwidth}
        \begin{figure}[H]
            \centering
            \begin{subfigure}[b]{0.48\linewidth}
                \includegraphics[width=\textwidth]{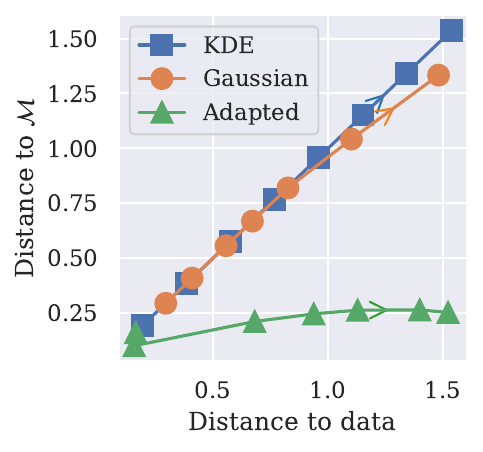}
                \caption{Comparing $L_2$ distance to data and $\mathcal{M}$.}
                \label{fig:synth_images_smoothing_comparison}
            \end{subfigure}
            \vspace{0.5em}
            \begin{subfigure}[b]{0.47\linewidth}
                \includegraphics[width=\textwidth]{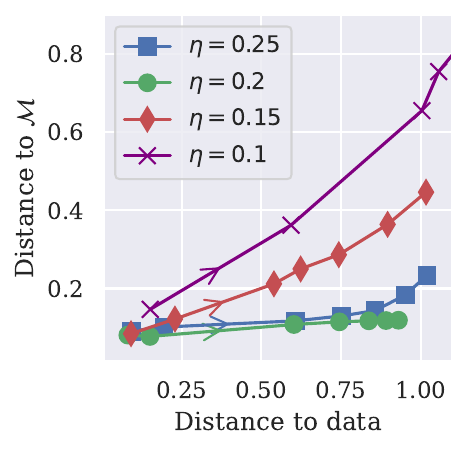}
                \caption{Effect of bump width $\eta$, for adapted smoothing.}
                \label{fig:circle_images_curvature}
            \end{subfigure}
            \caption{Comparison of Gaussian and manifold-adapted smoothing kernels for the synthetic image manifold. Arrows indicate increasing smoothing.}
            \label{fig:circle_images_plots}
        \end{figure}
    \end{minipage}%
    \hfill
    \begin{minipage}[t]{0.48\textwidth}
        \begin{figure}[H]
            \centering
            \begin{subfigure}[b]{0.48\linewidth}
                \vspace{-2em}
                \includegraphics[width=\textwidth]{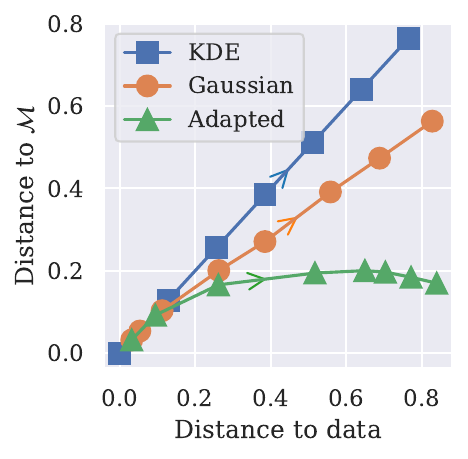}
                \caption{Comparing $L_2$ distance to data and $\mathcal{M}$.}
                \label{fig:mnist_L2_4s}
            \end{subfigure}
            \hfill
            \begin{subfigure}[b]{0.48\linewidth}
                \includegraphics[width=\textwidth]{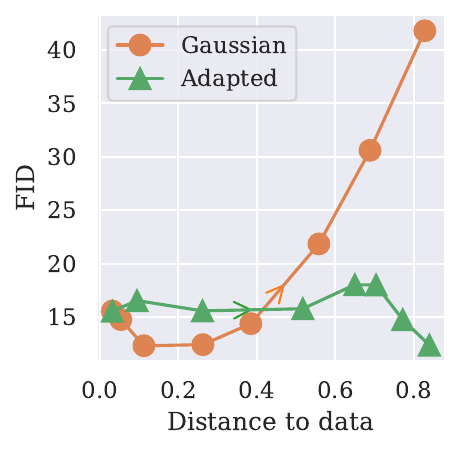}
                \caption{FID of generated samples.}
                \label{fig:mnist_fid_4s}
            \end{subfigure}
            \caption{Comparison of Gaussian and manifold-adapted smoothing kernels for the MNIST image manifold.}
            \label{fig:mnist_experiments}
        \end{figure}
    \end{minipage}
    \vspace{-1em}
\end{figure}

\section{Discussion}

\subsection{Related work}

In the manifold setting, \citet{pidstrigach2022scorebased, de_bortoli2022convergence} provide precise convergence bounds for diffusion models. Recently, there has been a surge of interest in providing refined results in the manifold setting \citep{oko23_minimax, chen23_scoreapproxlowdim, tang24_adaptmanifold,  li2024adapting, azangulov2024convergencediffusionmodelsmanifold, potaptchik2024linearconvergencediffusionmodels}, or under other specific structural settings \citep{shah2023learning, chen2024learninggeneralgaussianmixtures, wang2024diffusionmodelslearnlowdimensional}.
Additionally, many works have focused on empirically validating the manifold hypothesis for data such as images \citep{Fefferman16_testingmanifoldhypothesis, pope21_intrinsicdim, stanczuk2024diffusion, brown2023verifying, kamkari24_flipd}. Our work also shares similarities with wider literature regarding how manifold structure interacts with learning tasks, such as \citet{genovese12_minimaxmanifold,Cheng13_locallinearregression,moscovich17a_minimax,Gao222_adaptivemanifold}.

Recently, there has been an increased interest in understanding generalisation and memorisation in diffusion models. Memorisation of training data has been observed empirically by \citet{somepalli2023_diffusionforgery, Carlini23_extract_training} when the capacity of the network is large relative to the number of training samples. Other works investigate how inductive biases of neural network architectures aid in generalisation \citep{kadkhodaie2024generalization, niedoba2024mechanisticexplanationdiffusionmodel, kamb2024_creativity}. The recent work of \citet{vastola2025generalization} examines the role of noise in the objective. The dichotomy between generalisation and memorisation has been investigated in \citet{yoon2023diffusion, gu2023memorizationdiffusionmodels, wen2024detecting, zhang2024_emergence, baptista2025memorization}. The works of \citet{raya2023spontaneous, Biroli_2024, ventura2025manifoldsrandommatricesspectral} examine the roles of distinct regimes in the generative process.

This work contributes to a small but growing line of research into the effect of score-smoothing in diffusion models.
\citet{scarvelis2023closedformdiffusionmodels} previously studied isotropic Gaussian and Gumbel smoothing of the score function, as a training-free alternative for running diffusion models, and show that this generates barycentres of training datapoints. \citet{chen2025interpolationeffectscoresmoothing} investigates the effect of score smoothing on generalisation in the 1$d$ linear setting.
Concurrently, the work of \citet{gabriel2025kernelsmoothedscoresdenoisingdiffusion} also studies the effect of a kernel-smoothed score function and its relation to preserving manifold structure, though with different analysis techniques.

\subsection{Limitations and further investigations}

Our argument that the log-domain smoothed measure \(\hat{p}^k_{\epsilon}\) approximates the output of the diffusion model with score smoothing relies on the exchangeability of gradients (see \eqref{eq:gradexch}), a property that holds for location-independent kernels. Extending our framework to the more general case of location-dependent kernels is an important next step. Similarly, our key theorems (Theorem \ref{thm:density_ratio_bound_simplified} and \ref{thm:geometric_bias}) currently require the scale of noise normal to the manifold, \(K\), to be small relative to its curvature, \(\tau\). A more complete characterisation of the geometric bias would therefore require relaxing this assumption. Furthermore, a more robust characterisation of the geometric bias of log-domain smoothing would also benefit from a matching lower bound on the R\'enyi divergence. While we have demonstrated through heuristic arguments how our results show the generalisation potential of log-domain smoothing, our work stops short of deriving a formal generalisation error bound. Whether log-domain smoothing alone could produce optimal generalisation bounds is a question that we leave open for future investigation. Finally, we believe this theoretical framework could be a valuable tool for the related literature analysing the memorisation and privacy properties of diffusion models.

The experiments presented here illustrate that the \textit{type} of smoothing can influence the geometric structure of the generated samples. 
However, they also highlight the challenges posed by high-curvature manifolds, where smoothing the empirical score alone may not suffice.
Real-world image manifolds tend to be highly curved, yet diffusion models still generalize well from relatively few training samples \citep{kadkhodaie2024generalization}. Furthermore, in practical settings we do not have knowledge of the ground-truth manifold structure to explicitly apply manifold-adaptive smoothing. If such adaptation occurs, it must arise implicitly from the model's inductive biases. This suggests that other factors must also be at play, such as biases induced by neural architectural choices like convolutions and attention \citep{kamb2024_creativity}. We do not examine the behaviour of such practical architectures in this work, but remark that understanding to what extent architectural designs choices interact with the ideas presented here is an interesting direction for future study.

\subsection{Conclusion}

In this work, we have investigated how the implicit regularisation caused by smoothing of the empirical score function interacts with manifold structure in the data. In particular, we identify that smoothing at the level of the log-domain is implicitly geometry-adaptive, behaving similarly to a manifold-adapted kernel when given enough samples. 
Beyond the data-rich setting, we observe that the \textit{choice} of smoothing kernel can induce different structures in the generated distribution. Future work could investigate how inductive biases due to neural architectural designs influence the types of smoothing that occur in practice.

\section*{Acknowledgements}
Tyler Farghly was supported by Engineering and Physical Sciences Research Council (EPSRC) [grant number EP/T517811/1] and by the DeepMind scholarship.
Peter Potaptchik and Samuel Howard are supported by the EPSRC CDT in Modern Statistics and Statistical Machine Learning [grant number EP/S023151/1].
George Deligiannidis and Jakiw Pidstrigach acknowledge support from EPSRC [grant number EP/Y018273/1]. The authors would like to thank Iskander Azangulov and Arya Akhavan for stimulating discussions and Ioannis Siglidis for valuable feedback.

\bibliography{references}

\newpage
\appendix

\section{Notation and omitted details}\label{sec:main_details}
In this section, we include some technical details concerning the theoretical results of the paper that were omitted for the sake of readability.

\subsection{Properties of the forward process}\label{app:diffusion_details}
In Section \ref{sec:intro}, we introduced the forward process in \eqref{eqn:forward_sde}. Throughout the proofs, we use the following property of the forward process:
\begin{equation}
    X_t|X_0 \sim N(\mu_t X_0, \sigma_t I_d), \qquad \mu_t = e^{- \alpha t}, \qquad \sigma^2_t = \begin{cases}
        \alpha^{-1} (1 - \mu_t^2), & \text{if } \alpha > 0,\\
        2t, & \text{otherwise.}
    \end{cases}\label{eq:cond_score_form}
\end{equation}
When \(\alpha = 0\), this follows immediately from properties of the Wiener process and when \(\alpha > 0\), \(X_t\) becomes the Ornstein-Uhlenbeck process and the result follows from a standard analysis (e.g. see \cite{Pavliotis2014-dc}).

Using this fact, we have the following closed-form expression for the density \(\hat{p}_t\), the density of the empirical forward process \(\hat{X}_t\):
\begin{align*}
    \log \hat{p}_t &= \log \bigg ( \frac{1}{N} \sum_{i=1}^N p_{X_t|X_0}(x|x_i) \bigg )\\
    &= \log \bigg ( \frac{1}{N} \sum_{i=1}^N \exp(-\|x - \mu_t x_i\|^2/2\sigma_t^2) \bigg ) - \frac{d}{2} \log(2 \pi \sigma_t^2)\\
    &= \lse \bigg ( \bigg \{ -\|x - \mu_t x_i\|^2/2\sigma_t^2) \bigg \}_{i=1}^N \bigg ) + C_t,
\end{align*}
where \(C_t = - \log(N) - \frac{d}{2} \log(2 \pi \sigma_t^2)\) and we recall the definition of the function \(\lse(\{r_i\}_i) := \log (\sum_i \exp(r_i))\).

\subsection{R\'enyi divergence}\label{app:renyi}
We provide a brief exposition of the R\'enyi divergence, which is a measure of difference between two measures. Given two measures \(\mu, \nu\) on \(\R^d\) and \(q \in (1, \infty)\) we define the \(q\)-R\'enyi divergence by
\begin{equation*}
    D_q(\mu \| \nu) = \begin{cases}
        \frac{1}{q-1} \log \int (\frac{d\mu}{d\nu}(x))^{q-1} \mu(dx), &\text{ if } \mu \ll \nu,\\
        \infty, &\text{ otherwise.}
    \end{cases}
\end{equation*}
For the case of \(q = 1\) we set \(D_q\) to be the Kullback-Leibler (KL) divergence,
\begin{equation*}
    D_1(\mu \| \nu) = \begin{cases}
        \int \log \frac{d\mu}{d\nu}(x) \ \mu(dx), &\text{ if } \mu \ll \nu,\\
        \infty, &\text{ otherwise.}
    \end{cases}
\end{equation*}
Indeed, whenever \(D_q(\mu \| \nu) < \infty\) for some \(q > 1\), it can be shown that \(\lim_{q \to 1^+} D_q(\mu \| \nu) = D_1(\mu \| \nu)\). Furthermore, whenever \(\frac{d\mu}{d\nu}\) is bounded \(\mu\)-almost surely, we obtain,
\begin{equation*}
    \lim_{q \to \infty} D_q(\mu\|\nu) = \log \bigg ( \operatorname{ess \, sup}_{\mu} \frac{d\mu}{d\nu} \bigg ),
\end{equation*}
which is taken to be \(D_\infty(\mu\|\nu)\). Thus, the R\'{e}nyi divergence provides a natural interpolation between the KL divergence and the worst-case regret, with \(D_q\) increasing in \(q\). This measure of distance recently gained popularity in the sampling \citep{Vempala2019-xd, Chewi2022-of, Erdogdu2022-sp, Mousavi-Hosseini2023-xs} and privacy \citep{Mironov2017-vf} literatures as a stronger alternative to traditional divergences. We refer to \citep{Van_Erven2014-ry} and \citep{Chewi2022-of} for further properties of this divergence.

\subsection{Projections}\label{app:projection_details}
Throughout this work, we frequently utilise the projection mapping \(\Pi_\M: \R^d \to \M\) which maps \(x \in \R^d\) to the nearest element of \(\M\). In cases where \(\M\) is curved, we run in to the issue that the projection is not well-defined as there could be multiple elements that are equally close to \(x\). In most places in the proof we consider quantities \(x\) that are sufficiently close that the projection function is uniquely defined (see reach in Appendix \ref{app:reach}) but, for example, when we define the manifold adapted kernel in \eqref{eq:man_adapt_curved}, we use the projection for all \(x \in \R^d\).

Throughout the proofs of this work we do not utilise any property of the projection aside from the fact that it maps \(x\) to some element of the manifold that is of distance \(\dist(x, \M)\) away from \(x\). For that reason, \(\Pi_\M\) can be taken to be any mapping onto \(\M\) such that \(\|x - \Pi_\M(x)\| = \dist(x, \M)\). Since \(\M\) is taken to be a closed set, such a mapping always exists and we will take this choice of mapping to be fixed throughout the work.

When \(\alpha > 0\) the samples generated at early stopping time \(\epsilon\) are slightly biased due to contractions of the Ornstein-Uhlenbeck process. For this reason, we will frequently consider the contracted manifold,
\begin{equation*}
    \mu_\epsilon \M = \{\mu_\epsilon x: x \in \M\},
\end{equation*}
where \(\mu_\epsilon\) is as defined in \eqref{eq:cond_score_form}, and we will frequently use the shorthand \(\M^\epsilon := \mu_\epsilon \M\). Given a projection mapping \(\Pi_\M\) onto \(\M\), we take the projection mapping \(\Pi_{\M^\epsilon}\) onto \(\M^\epsilon\) to be given by
\begin{equation*}
    \Pi_{\M^\epsilon}(x) = \mu_\epsilon \Pi_\M(x/\mu_\epsilon).
\end{equation*}

\section{Manifold-adaptivity in the affine setting}\label{app:linear}
We begin with the proof of Proposition \ref{prop:linear_result} concerning the affine setting. Recall that we assume that the support of \(\mud\) is restricted to the affine subspace \(\M = \{x \in \R^d: Ax = b\}\), where \(A \in \R^{d^* \times d}\) is row-orthonormal and \(b \in \R^{d^*}\), and we write the smoothing kernel in the following form:
\begin{equation*}
    k_x := \law(x + \xi),
\end{equation*}
where \(\xi\) is a centred random variable independent of \(x\). Throughout the proof, we will use the null space projection matrix \(P := I-A^TA\).

\begin{proof}[Proof of Proposition \ref{prop:linear_result}]
Since $P$ is the projection matrix onto $\text{Null}(A)$, any $z \in \R^d$ can be decomposed as
\begin{align*}
    \|z\|^2 &= \|Pz\|^2 + \|(I - P)z\|^2\\
    &= \|Pz\|^2 + \|A^T Az\|^2\\
    &= \|Pz\|^2 + \|Az\|^2,
\end{align*}
where the final line follows from the fact that \(A\) is row-orthonormal and so \(A A^T = I_{d^*}\). Using \cref{fact:lse_shift} and the assumption that $Ax_i = b$ for every $i \in [N]$, we obtain the identity
\begin{align}
    \log \hat{p}_\epsilon(z) &= \lse \bigg ( \bigg \{ -\frac{\|z - x_i\|^2}{2 \sigma_\epsilon^2} \bigg\}_{i} \bigg ) + C_t\\
    &= \lse \bigg ( \bigg \{ -\frac{\|P(z - x_i)\|^2 + \|A(z - x_i)\|^2}{2 \sigma_\epsilon^2} \bigg\}_{i} \bigg ) + C_t\\
    &= \lse \bigg ( \bigg \{ -\frac{\|P(z - x_i)\|^2}{2 \sigma_\epsilon^2} \bigg\}_{i} \bigg )
    - \frac{\|Az - b\|^2}{2 \sigma_\epsilon^2} + C_t. \label{eqn:tang_norm_decomp}
\end{align}
This decomposition separates the influence of $z$ into normal and tangent directions with respect to $\M$.

Now, let \(x \in \R^d\) and define \(Y_x = (x + \xi) \sim k_x\) and \(\widetilde{Y}_x = (x + P\xi) \sim k_x^{\M}\). Since $P = P^2$, we observe that
\begin{align*}
    \|P(Y_x-x_i)\| = \|P(\tilde{Y}_x-x_i)\|.
\end{align*}
Furthermore, using the fact that \(\xi_x\) is centred, we also have,
\begin{align*}
    \E[\|AY_x - b\|^2] = \|Ax - b\|^2 + \E[\|A \xi\|^2], \qquad
    \E[\|A\tilde{Y}_x - b\|^2] = \|Ax - b\|^2 + \E[\|AP \xi\|^2].
\end{align*}
In particular, substituting into (\ref{eqn:tang_norm_decomp}) and taking the expectation, we conclude that
\begin{align*}
    \E[\log \hat{p}_\epsilon(Y_x)] = \E \bigg [ \lse \bigg ( \bigg \{ -\frac{\|P(Y_x - x_i)\|^2}{2 \sigma_\epsilon^2} \bigg\}_{i} \bigg ) \bigg ] - \frac{\|A x - b\|^2}{2 \sigma_\epsilon^2} + C,
\end{align*}
and
\begin{align*}
    \E[\log \hat{p}_\epsilon(\tilde{Y}_x)] = \E \bigg [ \lse \bigg ( \bigg \{ -\frac{\|P(Y_x - x_i)\|^2}{2 \sigma_\epsilon^2} \bigg\}_{i} \bigg ) \bigg ] - \frac{\|A x - b\|^2}{2 \sigma_\epsilon^2} + \tilde{C},
\end{align*}
for constants $C, \tilde{C}$ independent of $x$. Therefore, the log-density of \(\hat{p}^k_\epsilon\) and \(\hat{p}^{k^\M}_\epsilon\) are identical up to a constant.
\end{proof}

\section{Lemmata}
For proving the results for the more the general manifold setting, we require several additional properties of the log-sum-exp function, smooth manifolds and tubular neighbourhoods. In this section we collect these results.

\subsection{Stability of the \(\lse\) function}

The following lemma provides stability bounds for the LSE function which we make use of throughout our analysis. It can be seen as a generalisation of Fact \ref{fact:lse_shift} which is heavily used in the analysis of the affine case.

\begin{lemma}\label{lem:lse_properties}
For any \(\{x_i\}_{i=1}^N \subset \R\) and \(\{\varepsilon_i\}_{i=1}^N \subset \R\), we have
\begin{equation}\label{eq:lse_properties_1}
    \lse(\{x_i + \epsilon_i\}_{i=1}^N) - \lse(\{x_i\}_{i=1}^N) = \int_0^1 \frac{\sum_{i=1}^N \exp(x_i + r \epsilon_i) \epsilon_i}{\sum_{i=1}^N \exp(x_i + r \epsilon_i)} dr.
\end{equation}
In particular, we have that
\begin{equation}\label{eq:lse_properties_2}
    \lse(\{x_i + \epsilon_i\}_{i=1}^N) - \lse(\{x_i\}_{i=1}^N) \leq \max\{\epsilon_i\}.
\end{equation}
\end{lemma}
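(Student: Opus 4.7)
The plan is to prove the identity \eqref{eq:lse_properties_1} by recognising the difference $\lse(\{x_i + \epsilon_i\}_i) - \lse(\{x_i\}_i)$ as the increment of the smooth path $r \mapsto \phi(r) := \lse(\{x_i + r \epsilon_i\}_{i=1}^N)$ from $r=0$ to $r=1$, and then applying the fundamental theorem of calculus. The bound \eqref{eq:lse_properties_2} will follow as an immediate corollary by bounding the integrand pointwise.

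Concretely, I would first compute the partial derivatives of $\lse$. Since $\lse(y_1,\ldots,y_N) = \log(\sum_i e^{y_i})$, direct differentiation gives
\begin{equation*}
    \partial_j \lse(y_1,\ldots,y_N) = \frac{\exp(y_j)}{\sum_i \exp(y_i)}.
\end{equation*}
Applying the chain rule to the smooth map $r \mapsto (x_1 + r\epsilon_1, \ldots, x_N + r \epsilon_N)$ yields
\begin{equation*}
    \phi'(r) = \sum_{j=1}^N \epsilon_j \cdot \frac{\exp(x_j + r \epsilon_j)}{\sum_{i=1}^N \exp(x_i + r \epsilon_i)}.
\end{equation*}
Since $\phi$ is continuously differentiable on a neighbourhood of $[0,1]$, the fundamental theorem of calculus gives $\phi(1) - \phi(0) = \int_0^1 \phi'(r) \, dr$, which is exactly the claimed identity \eqref{eq:lse_properties_1}.

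For \eqref{eq:lse_properties_2}, I would observe that the softmax weights $w_j(r) := \exp(x_j + r\epsilon_j)/\sum_i \exp(x_i + r \epsilon_i)$ are nonnegative and satisfy $\sum_j w_j(r) = 1$, so $\phi'(r) = \sum_j w_j(r)\, \epsilon_j$ is a convex combination of the $\epsilon_j$'s and therefore bounded above by $\max_j \epsilon_j$. Integrating this pointwise bound over $r \in [0,1]$ preserves it, yielding the desired inequality.

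There is no real obstacle here: the argument is a routine combination of the chain rule, the fundamental theorem of calculus, and the fact that softmax weights form a probability vector. The only detail worth being careful about is that smoothness of $\phi$ is uniform in $r \in [0,1]$ (the denominator $\sum_i \exp(x_i + r\epsilon_i)$ is strictly positive and continuous on a compact interval), so the integral representation holds without any subtlety about integrability.
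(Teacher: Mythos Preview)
Your proposal is correct and follows essentially the same approach as the paper's proof: compute the softmax partial derivatives of $\lse$, apply the chain rule and the fundamental theorem of calculus along the linear path $r \mapsto \{x_i + r\epsilon_i\}$, and then bound the resulting weighted average by $\max_i \epsilon_i$. The only cosmetic difference is that you phrase the second step explicitly as a convex-combination bound, whereas the paper simply calls it a weighted average.
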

\begin{proof}
From the chain rule, we compute the partial derivatives,
\begin{equation*}
    \frac{\partial}{\partial x_j} \lse(\{x_i\}_i) = \frac{\exp(x_j)}{\sum_{i=1}^N \exp(x_i)}.
\end{equation*}
Therefore, by the fundamental theorem of calculus, we obtain that
\begin{align*}
    \lse(\{x_i + \epsilon_i\}_i) - \lse(\{x_i\}_i) &= \int_0^1 \sum_{j=1}^N \epsilon_j \frac{\partial}{\partial x_j} \lse(\{x_i + r \epsilon_i\}_i) \, dr\\
    &= \int_0^1 \sum_{j=1}^N \frac{\exp(x_j + r \epsilon_j) \epsilon_j}{\sum_{i=1}^N \exp(x_i + r \epsilon_i)} dr,
\end{align*}
completing the proof of \eqref{eq:lse_properties_1}. To obtain \eqref{eq:lse_properties_2}, we use the fact that the sum is a weighted average of the sequence $\{\epsilon_i\}_i$, and so it has the property that,
\begin{equation*}
    \sum_{i=1}^N \frac{\exp(x_i + r \epsilon_i) \epsilon_i}{\sum_{j=1}^N \exp(x_j + r \epsilon_j)} \leq \max_i \epsilon_i.
\end{equation*}
\end{proof}
We also note that if all \(\epsilon_i\) are identical, we readily recover Fact \ref{fact:lse_shift} from \eqref{eq:lse_properties_1}.

\subsection{Manifold reach}\label{app:reach}
Next, we collect some facts about the \textit{reach} of the manifold, which quantifies how far one can extend from the manifold before the projection onto it ceases to be unique. We refer to \citep{Aamari2017-he} for a more detailed exposition. We begin with the rigorous definition.
\begin{definition}
The reach of a set \(A \subset \R^d\), is defined by \(\tau_A = \inf_{p \in A} d(p, Med(A))\), where we define the set,
\begin{equation*}
    Med(A) = \Big \{z \in \R^d: \exists p, q \in A \text{ s.t. } p \neq q, \|p - z\| = \|q - z\| = d(z, A) \Big \}.
\end{equation*}
\end{definition}

The reach defines the maximum distance at which the projection to the set is unique. In the case where the set \(A\) is a smooth submanifold of \(\R^d\), the reach captures the curvature of the manifold and provides an upper bound on the distance at which the manifold appears approximately flat. The following lemma demonstrates this, controlling the curvature of paths along the manifold using the reach. We use the notation \(N_x \M\) to denote the \textit{normal space} of the manifold \(\M\) at \(x \in \M\) which consists of all vectors perpendicular to the tangent space at \(x\).

\begin{lemma}\label{lem:reach_angle}
Suppose that the manifold \(\M\) has reach \(\tau_\M > 0\), then for any \(x, y \in \M\), \(v \in N_x \M\),
\begin{equation*}
    |\langle v, x - y \rangle | \leq \frac{\|v\|}{2\tau_\M} \|x - y\|^2.
\end{equation*}
\end{lemma}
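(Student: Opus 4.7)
The plan is to reduce to unit vectors by homogeneity and then obtain the bound from a single quadratic expansion, using the reach to control the projection onto $\M$. Since both sides of the target inequality scale linearly in $\|v\|$, it suffices to prove it for $v \in N_x \M$ with $\|v\| = 1$, and the general case then follows by rescaling.

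The key geometric input is that for every $t \in [0, \tau_\M)$, the point $z_t := x + t v$ satisfies $\dist(z_t, \M) \leq t < \tau_\M$ and therefore does not lie in $Med(\M)$, so it has a unique nearest point in $\M$. Moreover, because $v \in N_x \M$, this unique projection must be $x$ itself---a classical consequence of the definition (equivalently, Federer's tubular neighbourhood theorem for sets of positive reach guarantees that the open normal tube of radius $\tau_\M$ retracts onto $\M$ via the projection map). Taking this as given, we conclude that $\|z_t - y\|^2 \geq t^2$ for every $y \in \M$.

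Expanding $\|z_t - y\|^2 = t^2 + 2t \langle v, x - y \rangle + \|x - y\|^2$ turns this into $\langle v, x - y\rangle \geq -\|x-y\|^2/(2t)$, and sending $t \uparrow \tau_\M$ gives one side of the bound. Applying the same argument to the unit normal $-v$ yields the reverse inequality, so combining the two produces $|\langle v, x - y\rangle| \leq \|x-y\|^2/(2\tau_\M)$. Rescaling by $\|v\|$ recovers the general statement.

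The only genuinely subtle step is the identification of $\Pi_\M(z_t)$ with $x$, as this is where the normality of $v$ actually enters the argument; everything else reduces to a quadratic expansion and a limit. This identification is well-established in the classical theory of sets of positive reach, so I would simply invoke it rather than reprove it.
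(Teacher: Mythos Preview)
Your proposal is correct and essentially identical to the paper's proof: both define $z = x + r\,v/\|v\|$ for $r \in (0,\tau_\M)$, use that $\Pi_\M(z)=x$ to get $\|z-y\|\ge r$, expand the quadratic, flip the sign of $v$ for the reverse inequality, and pass to the limit $r\to\tau_\M^-$. The only cosmetic difference is that you normalise $v$ upfront while the paper carries $\|v\|$ through; the paper also asserts $\Pi_\M(z)=x$ without further comment, just as you invoke Federer.
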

\begin{proof}
Let \(x, y \in \M\), \(v \in N_x \M\) and define \(z = x + r v/\|v\|\) for some \(r \in (0, \tau_\M)\) so that \(z \not \in Med(\M)\). Since the projection is uniquely defined with \(\Pi_\M(z) = x\), we have, 
\begin{align*}
    \|z - y\| \geq \dist(z, M) = r.
\end{align*}
On the other hand, we have,
\begin{align*}
    \|z - y\|^2 = \|x - y\|^2 + r^2 + \frac{2 r}{\|v\|} \langle x - y, v \rangle.
\end{align*}
Combining these two and rearranging, we obtain the bound,
\begin{equation*}
    \langle y - x, v \rangle \leq \frac{\|v\|}{2r} \|x-y\|^2.
\end{equation*}
By replacing \(v\) with \(-v\) which is also in the normal space, we obtain the opposite direction, hence obtaining,
\begin{equation*}
    |\langle x - y, v \rangle| \leq \frac{\|v\|}{2r} \|x-y\|^2.
\end{equation*}
Since this bound holds for all \(r \in (0, \tau_\M)\), we can take \(r \to \tau_\M^-\) to obtain the bound in the statement.
\end{proof}

With this, we can control the geodesic distance on the manifold by the standard Euclidean distance.
\begin{lemma}\label{lem:geo_to_euc}
Suppose that the manifold \(\M\) has reach \(\tau_\M > 0\). Let \(x, y \in \M\) such that \(\|x - y\| \leq \tau_\M/2\) and let \(\gamma_t\) be a geodesic (shortest path) between \(x, y\) on \(\M\). Then, we have the bound,
\begin{equation*}
    \int \|\partial_t \gamma_t\| dt \leq 2 \|x - y\|.
\end{equation*}
\end{lemma}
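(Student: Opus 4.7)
My plan is to parametrize the shortest path by arc length, so that $\|\dot\gamma(s)\| = 1$ on $[0,L]$ with $L := \int \|\partial_t \gamma_t\|\,dt$, and to bound $L \le 2\|x-y\|$ by combining a curvature estimate derived from Lemma~\ref{lem:reach_angle} with a short competitor-path argument.

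The first step is an ambient curvature bound $\|\ddot\gamma(s)\| \le 1/\tau_\M$. Since $\gamma$ is a geodesic in $\M$, $\ddot\gamma(s)$ is normal to $\M$ at $\gamma(s)$, so Lemma~\ref{lem:reach_angle} applies to the pair $\gamma(s), \gamma(s+h)$ with $v = \ddot\gamma(s)$. A second-order Taylor expansion of $\gamma(s+h) - \gamma(s)$---using $\dot\gamma(s) \perp \ddot\gamma(s)$ from the unit-speed parametrization---followed by dividing by $h^2$ and sending $h \to 0$ yields the bound. Integration then gives $\|\dot\gamma(s) - \dot\gamma(0)\| \le s/\tau_\M$, and hence
\begin{equation*}
    \bigl\|(y - x) - L\dot\gamma(0)\bigr\| = \Bigl\|\int_0^L \bigl(\dot\gamma(s) - \dot\gamma(0)\bigr)\,ds\Bigr\| \le \frac{L^2}{2\tau_\M}.
\end{equation*}
Because $\|L\dot\gamma(0)\| = L$, the reverse triangle inequality gives $L - \|x-y\| \le L^2/(2\tau_\M)$. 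With $u := L/\tau_\M$ and $v := \|x-y\|/\tau_\M \le 1/2$, this is equivalent to $u^2 - 2u + 2v \ge 0$, so either $u \le 1 - \sqrt{1-2v}$ or $u \ge 1 + \sqrt{1-2v}$. In the first case, the elementary bound $\sqrt{1-2v} \ge 1 - 2v$ yields $u \le 2v$, which is exactly the desired $L \le 2\|x - y\|$.

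The main obstacle is ruling out the second case, which would force $L \ge \tau_\M$. My strategy is to exhibit an explicit competitor path on $\M$ of length strictly less than $\tau_\M$. Consider the line segment $z(t) := (1-t)x + ty$ for $t \in [0,1]$; since both endpoints lie on $\M$, $\dist(z(t), \M) \le \min(t, 1-t)\|x-y\| \le \|x-y\|/2 \le \tau_\M/4$, so the entire segment lies in the open tube of radius $\tau_\M/4$, where the projection $\Pi_\M$ is uniquely defined by the reach assumption. The projected curve $\tilde\sigma(t) := \Pi_\M(z(t))$ is then a continuous path on $\M$ from $x$ to $y$. Invoking the classical Federer Lipschitz bound on $\Pi_\M$---that it is $\tau_\M/(\tau_\M - r)$-Lipschitz on the tube of radius $r < \tau_\M$---with $r = \tau_\M/4$ gives a Lipschitz constant of $4/3$, so $\tilde\sigma$ has length at most $(4/3)\|x-y\| \le 2\|x-y\| < \tau_\M$. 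Since $\gamma$ is a shortest path, $L$ is bounded by this competitor length, ruling out the second case and closing the argument. The only non-elementary external ingredient is the Federer Lipschitz estimate; alternatively one could bypass the curvature argument altogether and use the projected-segment competitor alone, which already yields the stronger bound $L \le (4/3)\|x - y\|$.
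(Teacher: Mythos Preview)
Your argument is correct. The paper itself does not prove this lemma---it simply defers to Lemma~III.21 of \citet{Aamari2017-he}---so there is no in-paper proof to compare against. Your curvature-bound step (deriving $\|\ddot\gamma\|\le 1/\tau_\M$ from Lemma~\ref{lem:reach_angle} and integrating to the quadratic inequality $L-\|x-y\|\le L^2/(2\tau_\M)$) is essentially the standard route in that reference; the competitor path via Federer's Lipschitz estimate for $\Pi_\M$ is a clean way to eliminate the spurious root $L\ge\tau_\M$. As you already observe, the projected-chord competitor alone gives $L\le (4/3)\|x-y\|$ directly, so the curvature half of your argument is in fact redundant---though harmless, and arguably closer in spirit to the cited source.
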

Therefore, if we are close enough to the manifold, its inherited metric behaves roughly like the Euclidean one. The proof of this lemma can be found in Lemma III.21 of \citep{Aamari2017-he}.

\subsection{Concentration under the manifold hypothesis}

We now turn to results concerning probability measures supported on submanifolds with bounded reach. The next lemma controls the mass of a small ball centred on the manifold, showing that the rates are similar to the affine case.

\begin{lemma}\label{lem:reach_ball_mass}
Suppose that the measure \(\mud\) is supported on a smooth compact submanifold \(\M\) with reach \(\tau_\M > 0\) and dimension \(d^*\). Then, for any \(r \leq \pi \tau_\M/2\sqrt{2}\), we have
\begin{equation*}
    \mud(B_r(x)) \geq c_\mu r^{d^*}, \qquad c_\mu = \inf_{B_r(x)} p_\mu,
\end{equation*}
where \(p_\mu\) denotes the density of \(\mud\) with respect to the volume measure on \(\M\).
\end{lemma}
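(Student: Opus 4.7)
The plan is to reduce this mass lower bound to a purely geometric volume estimate on the manifold, and then invoke Riemannian volume comparison. First, by the definition of the density $p_\mu$ with respect to the volume measure $\operatorname{vol}_\M$ on $\M$, we can write
\[
\mud(B_r(x)) \;=\; \int_{B_r(x) \cap \M} p_\mu \, d\operatorname{vol}_\M \;\geq\; c_\mu \cdot \operatorname{vol}_\M\bigl(B_r(x) \cap \M\bigr),
\]
which reduces the claim to showing $\operatorname{vol}_\M(B_r(x) \cap \M) \gtrsim r^{d^*}$.

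Next, I would pass from the chordal neighbourhood to the intrinsic geodesic ball, which is easier to estimate. Since the chord is never longer than the arc joining two points of $\M$, any $y \in \M$ with intrinsic distance $d_\M(x, y) \leq r$ automatically satisfies $\|x - y\| \leq r$. Writing $B^\M_r(x)$ for the geodesic ball, this gives $B^\M_r(x) \subseteq B_r(x) \cap \M$, so it suffices to lower-bound $\operatorname{vol}_\M(B^\M_r(x))$.

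For the volume bound, I would use standard facts about submanifolds of positive reach: the sectional curvatures of $\M$ are bounded by $1/\tau_\M^2$ and the injectivity radius at every point is at least $\pi \tau_\M$. The hypothesis $r \leq \pi \tau_\M/(2\sqrt{2})$ places us well within the injectivity radius, so $\exp_x: B^{T_x\M}_r(0) \to B^\M_r(x)$ is a diffeomorphism, and by the area formula
\[
\operatorname{vol}_\M\bigl(B^\M_r(x)\bigr) \;=\; \int_{B^{T_x\M}_r(0)} \bigl| \det d\exp_x(v) \bigr| \, dv.
\]
Günther's volume comparison theorem then provides the Jacobian lower bound $|\det d\exp_x(s v)| \geq (\tau_\M \sin(s/\tau_\M)/s)^{d^*-1}$ along any unit radial direction. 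Since $\sin(t)/t$ is decreasing on $[0,\pi]$ and our constraint forces $s/\tau_\M \leq \pi/(2\sqrt{2})$, this factor is bounded below by a positive constant depending only on $d^*$. Integrating in polar coordinates yields $\operatorname{vol}_\M(B^\M_r(x)) \gtrsim r^{d^*}$, with the dimensional constant absorbed into the notation of the lemma.

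The main technical obstacle is the volume comparison step, which rests on two classical but nontrivial facts from the geometry of sets with positive reach: the injectivity radius bound $\operatorname{inj}(\M) \geq \pi \tau_\M$, and the curvature bound $|\operatorname{Sec}| \leq 1/\tau_\M^2$. Both are standard in the manifold-learning literature (e.g.\ \citet{Aamari2017-he}) and can be cited directly. An alternative route, avoiding Günther, would be to estimate the Jacobian of $\exp_x$ along radial geodesics by hand using the bound $\|\mathrm{II}\| \leq 1/\tau_\M$ on the second fundamental form; this is slightly longer but keeps the argument self-contained within the framework already used for Lemma~\ref{lem:reach_angle} and Lemma~\ref{lem:geo_to_euc}.
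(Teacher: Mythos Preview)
The paper does not actually prove this lemma: it simply refers the reader to Proposition~4.3 of \citet{Aamari2019-gk} and Lemma~III.23 of \citet{Aamari2017-he}. Your sketch is the standard argument underlying those references --- reduce to a geodesic-ball volume bound, then use the curvature and injectivity-radius control that positive reach provides together with G\"unther's comparison --- so there is no meaningful methodological difference to discuss.

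One small point worth flagging: your route produces $\operatorname{vol}_\M(B^\M_r(x)) \geq C_{d^*}\, r^{d^*}$ with a dimensional constant coming from the $\bigl(\sin t/t\bigr)^{d^*-1}$ factor and the volume of the unit ball in $\R^{d^*}$, whereas the lemma as stated has no such constant in front of $c_\mu r^{d^*}$. You correctly note that this constant is being ``absorbed into the notation''; that is how the cited sources and the downstream uses in the paper treat it as well, so this is a cosmetic mismatch in the lemma statement rather than a defect in your argument.
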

For the proof of this lemma, we refer to the proof of Proposition 4.3 of \citep{Aamari2019-gk} or Lemma III.23 of \citep{Aamari2017-he}.

We use this bound, to obtain a result concerning the concentration of the empirical measure on the manifold. To this end, we recall a bound on the covering number of the manifold. Given \(r > 0\), the covering number \(N_{\text{cov}}(\M, r)\) is defined as the minimum number of Euclidean balls of radius \(r\) required to cover the subset of \(\R^d\) defined by the space \(\M\). The following lemma is from Proposition III.11 of \cite{Aamari2017-he}.
\begin{lemma}\label{lem:mani_cover}
Consider the setting of Lemma \ref{lem:reach_ball_mass} and suppose that \(c_\mu > 0\), then for any \(\varepsilon \in (0, \tau_\M/2)\) we have,
\begin{equation*}
    N_{\text{cov}}(\M, 2\varepsilon) \leq \frac{1}{c_\mu \varepsilon^{d^*}}.
\end{equation*}
\end{lemma}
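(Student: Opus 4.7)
The plan is to carry out a standard packing/volume argument that converts the lower bound on ball mass supplied by Lemma \ref{lem:reach_ball_mass} into an upper bound on the covering number of $\M$.

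First, I would build a maximal $2\varepsilon$-separated subset $\{x_1, \ldots, x_n\} \subset \M$: a collection of points on $\M$ with $\|x_i - x_j\| > 2\varepsilon$ for $i \neq j$, maximal with respect to inclusion (existence is via a greedy construction, using compactness of $\M$). By maximality, every $y \in \M$ must lie within $2\varepsilon$ of some $x_i$; otherwise $\{x_1, \ldots, x_n, y\}$ would violate maximality. Hence the balls $B_{2\varepsilon}(x_i)$ form a cover of $\M$, giving $N_{\text{cov}}(\M, 2\varepsilon) \leq n$.

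Second, note that the balls $B_\varepsilon(x_i)$ are pairwise disjoint by the triangle inequality (a point in two such balls would force the centres to be within $2\varepsilon$). Since $\varepsilon < \tau_\M/2 < \pi \tau_\M/(2\sqrt{2})$, each centre $x_i \in \M$ satisfies the hypothesis of Lemma \ref{lem:reach_ball_mass} at radius $\varepsilon$, so $\mud(B_\varepsilon(x_i)) \geq c_\mu \varepsilon^{d^*}$. Summing these disjoint contributions against the fact that $\mud$ is a probability measure yields
\[
n \cdot c_\mu \varepsilon^{d^*} \;\leq\; \sum_{i=1}^n \mud(B_\varepsilon(x_i)) \;\leq\; \mud(\R^d) = 1,
\]
so $n \leq 1/(c_\mu \varepsilon^{d^*})$ and combining with the covering inequality above yields the stated bound.

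No real obstacle arises: all of the geometric content (curvature control via the reach and volume comparison to Euclidean balls) is already absorbed into Lemma \ref{lem:reach_ball_mass}. The only quantitative check is that the radius regime $\varepsilon \in (0, \tau_\M/2)$ sits inside the regime $(0, \pi \tau_\M / (2\sqrt{2})]$ where that lemma applies, which holds because $\pi/(2\sqrt{2}) > 1/2$.
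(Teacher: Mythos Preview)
Your argument is correct: it is exactly the standard packing/volume argument, and the radius check against Lemma \ref{lem:reach_ball_mass} is valid since $\pi/(2\sqrt{2}) > 1/2$. The paper itself does not give a proof of this lemma but simply cites Proposition III.11 of \cite{Aamari2017-he}, whose argument is essentially the one you wrote out; so your proposal matches the intended approach.
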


We can now prove a bound on the concentration of the empirical measure.
\begin{lemma}\label{lem:lower_emp_meas}
Suppose that the measure \(\mud\) is supported on a compact smooth submanifold \(\M\) with reach \(\tau_\M > 0\) and dimension \(d^*\) and \(c_\mu > 0\). Then, for any \(r \in (0, \tau_\M]\), \(\delta \in (0, 1)\), we have
\begin{equation*}
    \prob \bigg ( \inf_{x \in M} \hmud(B_r(x)) \geq c_\mu \frac{r^{d^*}}{4} \bigg ) \geq 1 - \delta,
\end{equation*}
whenever,
\begin{equation}\label{eq:lower_emp_meas_1}
    N \geq 64 \Big ( (144 d^* \log(2/r) + 144 \log(2/c_\mu)) \vee \tfrac{\log(\delta^{-1})}{2} \Big ) r^{-2d^*} c_\mu^{-2}.
\end{equation}
\end{lemma}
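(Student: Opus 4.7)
The natural approach is a classical finite-net plus concentration argument, where we reduce the uniform lower bound on \(\hmud(B_r(x))\) over all \(x\in\M\) to a lower bound on the empirical mass at finitely many net points, then transfer back to arbitrary \(x\in\M\) using ball inclusion. The combinatorial input is the covering number bound from Lemma \ref{lem:mani_cover}, the probabilistic input is Lemma \ref{lem:reach_ball_mass} giving a population mass lower bound, and the concentration input is Hoeffding's inequality applied to each Bernoulli empirical mean.

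First, apply Lemma \ref{lem:mani_cover} with \(\varepsilon = r/4\) (which satisfies \(\varepsilon < \tau_\M/2\) since \(r \le \tau_\M\)) to obtain a \((r/2)\)-cover \(\{y_j\}_{j=1}^M \subset \M\) with \(M \le 4^{d^*}/(c_\mu r^{d^*})\). Second, Lemma \ref{lem:reach_ball_mass} (whose hypothesis \(r/2 \le \pi \tau_\M/(2\sqrt{2})\) is implied by \(r \le \tau_\M\)) yields \(p_j := \mud(B_{r/2}(y_j)) \ge c_\mu (r/2)^{d^*}\). Third, write \(N \hmud(B_{r/2}(y_j)) = \sum_{i=1}^N \mathbbm{1}[x_i \in B_{r/2}(y_j)]\) as a sum of iid Bernoulli\((p_j)\) variables, and apply Hoeffding's inequality
\[
    \prob\bigl(\hmud(B_{r/2}(y_j)) \le p_j - t\bigr) \le \exp(-2 N t^2)
\]
with \(t\) chosen so that \(p_j - t \ge c_\mu r^{d^*}/4\); taking \(t\) of order \(c_\mu r^{d^*}\) reduces the exponent to \(-c\, N c_\mu^2 r^{2d^*}\) for an absolute \(c>0\). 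Fourth, a union bound over \(j = 1,\dots,M\) together with \(\log M \lesssim d^* \log(1/r) + \log(1/c_\mu)\) converts the requirement that the aggregate failure probability be at most \(\delta\) into the stated lower bound on \(N\). Fifth, on the good event, any \(x\in\M\) has some \(y_j\) with \(\|x-y_j\|\le r/2\), so \(B_{r/2}(y_j) \subseteq B_r(x)\) and \(\hmud(B_r(x)) \ge \hmud(B_{r/2}(y_j)) \ge c_\mu r^{d^*}/4\).

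The main obstacle is the third step: a priori, the Lemma \ref{lem:reach_ball_mass} lower bound \(p_j \ge 2^{-d^*} c_\mu r^{d^*}\) is only \emph{larger} than the target threshold \(c_\mu r^{d^*}/4\) when \(d^* \le 2\); for \(d^* \ge 3\) the naive Hoeffding deviation \(t = p_j - c_\mu r^{d^*}/4\) is negative and the argument fails as stated. The fix is to choose the cover radius as a small fraction of \(r\) that scales mildly with the dimension (for instance replacing \(r/2\) by \(r(1 - c/d^*)\) so that \((1 - c/d^*)^{d^*}\) stays bounded below by a constant like \(1/2\)), so that \(p_j\) exceeds \(c_\mu r^{d^*}/4\) uniformly in \(d^*\); the resulting \(d^*\)-dependent inflation of the covering number is harmless since it only contributes an additional \(d^* \log d^*\) term which is absorbed into the generously sized \(144\, d^* \log(2/r) + 144\, \log(2/c_\mu)\) appearing in \eqref{eq:lower_emp_meas_1}. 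After this adjustment, combining Hoeffding with the cover-number bound and keeping track of the constants \(64\) and \(144\) is routine.
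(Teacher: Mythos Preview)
Your approach is the same as the paper's: net via Lemma~\ref{lem:mani_cover}, population ball-mass lower bound via Lemma~\ref{lem:reach_ball_mass}, a Hoeffding-type deviation uniformly over the net (the paper invokes a packaged empirical-process bound from \cite{Wainwright2019-rz} in place of explicit Hoeffding plus union, but the content is identical here), then ball inclusion to transfer back to arbitrary $x\in\M$. You also correctly isolate the only nontrivial point, the dimension dependence of the population lower bound.

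The one place to tighten is your fix for that obstacle. The paper takes the inner-ball radius to be $2^{-1/d^*}r$, so that Lemma~\ref{lem:reach_ball_mass} gives exactly $c_\mu(2^{-1/d^*}r)^{d^*}=c_\mu r^{d^*}/2$ uniformly in $d^*$, and the associated cover has size at most $c_\mu^{-1}2^{d^*+1}r^{-d^*}$, so $\log|\mathcal{C}|\lesssim d^*\log(2/r)+\log(2/c_\mu)$ with no $d^*\log d^*$ term. By contrast, your proposed radius $r(1-c/d^*)$ forces the companion cover radius down to order $r/d^*$, giving $\log|\mathcal{C}|\gtrsim d^*\log d^*$; your claim that this is ``absorbed into $144\,d^*\log(2/r)$'' fails when $r$ is bounded away from $0$ and $d^*$ is large, so with your choice you would not recover the explicit constants in \eqref{eq:lower_emp_meas_1}. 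Replacing $r(1-c/d^*)$ by $2^{-1/d^*}r$ resolves this and the remainder of your outline goes through.
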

\begin{proof}
According to Lemma \ref{lem:mani_cover}, there exists a set \(\mathcal{C} \subset \M\) such that \(\{B_{2^{-1/d^*}r}(c): c \in \mathcal{C}\}\) forms a covering of \(\M\) with \(|\mathcal{C}| \leq c_\mu^{-1} 2^{d^* + 1} r^{-d^*}\). Thus, for any \(x \in M\) there exists \(c \in \mathcal{C}\) such that \(x \in B_{2^{-1/d^*} r}(c)\) and therefore \(B_{2^{-1/d^*} r}(c) \subset B_r(x)\). From this we deduce the following bound
\begin{equation*}
    \inf_{x \in \M} \hmud(B_r(x)) \geq \inf_{c \in \mathcal{C}} \hmud(B_{2^{-1/d^*} r}(c)).
\end{equation*}
Therefore, it suffices to lower bound this object on the right-hand side.

Next, using a rudimentary bound from the empirical processes literature (for example, see Section 4.2 of \citep{Wainwright2019-rz} or Section 7.1 of \citep{van-Handel2014-cs}), we obtain the bound,
\begin{equation*}
    \prob \bigg ( \sup_{c \in \mathcal{C}}| \hmud(B_{2^{-1/d^*}r}(c)) - \mud(B_{2^{-1/d^*}r}(c))| \geq 12 \sqrt{\frac{\log{|\mathcal{C}|}}{N}} + \varepsilon \bigg ) \leq \exp(-2N\varepsilon^2).
\end{equation*}
Thus, choosing \(\varepsilon = c_\mu r^{d^*}/8\), we obtain that under \eqref{eq:lower_emp_meas_1}, it follows that
\begin{equation}\label{eq:lower_emp_meas_3}
    \prob \bigg ( \sup_{C \in \mathcal{C}}| \hmud(C) - \mud(C)| \geq c_\mu r^{d^*}/4 \bigg ) \leq \delta.
\end{equation}
To conclude the proof, we use Lemma \ref{lem:reach_ball_mass} to obtain that,
\begin{align*}
    \inf_{c \in \mathcal{C}} \hmud(B_{2^{-1/d^*}r}(c)) &\geq \inf_{c \in \mathcal{C}} \hmud(B_{2^{-1/d^*}r}(c)) - \sup_{c \in M} | \hmud(B_{2^{-1/d^*}r}(c)) - \mud(B_{2^{-1/d^*}r}(c))|\\
    &\geq c_\mu r^{d^*}/2 - \sup_{x \in M} | \hmud(B_{2^{-1/d^*}r}(x)) - \mud(B_{2^{-1/d^*}r}(x))|.
\end{align*}
Combining this with \eqref{eq:lower_emp_meas_3}, we arrive at the bound in the statement.
\end{proof}

\subsection{Weyl's tube formula}\label{app:weyl}

The sets \(\M_r\) and \(\M^\epsilon_r\) are related to the notion of \textit{tubes} that have been investigated in the differential geometry literature \citep{Gray2004-rz, Weyl1939-ne}. We borrow a result from \cite{Weyl1939-ne} that computes the volume enclosing these sets. Let \(\M^\epsilon_{\leq r} := \{x \in \R^d: \dist(x, \M^\epsilon) \leq r\}\).
\begin{proposition}[Weyl's Tube Formula]\label{prop:weyl}
Suppose Assumption \ref{ass:manifold} holds, then for all \(r \geq 0\),
\begin{equation*}
    \lambda(\M^\epsilon_{\leq r}) = \sum_{p=0}^{\lfloor d^*/2 \rfloor} \tilde{k}_{2p}(\M^\epsilon) r^{d-d^*+2p},
\end{equation*}
for some quantities \(\tilde{k}_{2p}(\M^\epsilon) \geq 0\).
\end{proposition}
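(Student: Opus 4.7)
The plan is to follow Weyl's classical argument: parameterise the tube \(\M^\epsilon_{\leq r}\) via the normal bundle of \(\M^\epsilon\) and expand the resulting Jacobian as a polynomial in \(r\).

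First I would introduce the normal bundle \(N\M^\epsilon\) together with the normal exponential map \(\Phi(x, v) := x + v\) for \(v \in N_x \M^\epsilon\). Under Assumption \ref{ass:reach}, \(\M\) has reach at least \(\tau\), so the rescaled manifold \(\M^\epsilon = \mu_\epsilon \M\) has reach at least \(\mu_\epsilon \tau\), and \(\Phi\) is a diffeomorphism from \(\{(x,v) : |v| \leq r\}\) onto \(\M^\epsilon_{\leq r}\) whenever \(r\) stays below this reach (cf.\ the discussion in Appendix \ref{app:reach}). Applying the change-of-variables formula yields
\begin{equation*}
    \lambda(\M^\epsilon_{\leq r}) = \int_{\M^\epsilon} \int_{v \in N_x \M^\epsilon,\, |v| \leq r} \det(I - S_v)\, dv\, d\mathrm{vol}_{\M^\epsilon}(x),
\end{equation*}
where \(S_v\) is the shape operator of \(\M^\epsilon\) in the direction \(v\), acting on \(T_x \M^\epsilon\).

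Next I would reduce the inner integral to a polynomial in \(r\). Writing \(v = s \omega\) with \(s \in [0, r]\) and \(\omega\) on the unit sphere of \(N_x \M^\epsilon\) (a \((d - d^* - 1)\)-sphere), the determinant expands as
\begin{equation*}
    \det(I - s S_\omega) = \sum_{j=0}^{d^*} (-s)^j e_j(\omega),
\end{equation*}
where \(e_j(\omega)\) is the \(j\)-th elementary symmetric polynomial in the eigenvalues of \(S_\omega\). Combined with the spherical volume element \(s^{d-d^*-1}\,ds\,d\omega\) and integrated in \(s\), each term produces a constant multiple of \(r^{d-d^*+j}\). Since \(\omega \mapsto e_j(\omega)\) is a degree-\(j\) polynomial in \(\omega\) and the surface measure on the unit sphere is symmetric under \(\omega \mapsto -\omega\), the odd-\(j\) contributions vanish, leaving only \(j = 2p\). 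Collecting coefficients gives exactly the advertised polynomial
\begin{equation*}
    \lambda(\M^\epsilon_{\leq r}) = \sum_{p=0}^{\lfloor d^*/2 \rfloor} \tilde{k}_{2p}(\M^\epsilon)\, r^{d-d^*+2p}, \qquad \tilde{k}_{2p}(\M^\epsilon) \propto \int_{\M^\epsilon} \int_{S(N_x \M^\epsilon)} e_{2p}(\omega)\, d\omega\, d\mathrm{vol}_{\M^\epsilon}(x).
\end{equation*}

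The main obstacle is the non-negativity claim \(\tilde{k}_{2p}(\M^\epsilon) \geq 0\). For \(p = 0\) one recovers a positive multiple of the manifold volume, and the claim is trivial. For general \(p\), the inner spherical average of \(e_{2p}\) is a classical Lipschitz--Killing curvature, which via the Gauss equation can be rewritten as a contraction of the intrinsic Riemann curvature tensor of \(\M^\epsilon\) expressible as a sum of squared norms---this yields non-negativity. Finally, in the regime where \(r\) exceeds the reach of \(\M^\epsilon\), \(\Phi\) is no longer injective and the stated identity should be interpreted as an upper bound: \(\Phi\) still covers \(\M^\epsilon_{\leq r}\) (with multiplicity), and the right-hand side of the change-of-variables identity dominates the volume. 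This weaker reading is all that is required for the downstream volumetric applications in the main body.
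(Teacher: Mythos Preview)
The paper does not prove this proposition at all: it explicitly ``borrows'' the result from \cite{Weyl1939-ne} and refers to \cite{Gray2004-rz} for details. So there is no in-paper argument to compare against; your sketch of the polynomial identity via the normal exponential map, the expansion of $\det(I - sS_\omega)$ in elementary symmetric functions, and the parity argument killing the odd powers is exactly Weyl's classical computation and is correct for $r$ below the reach.

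The genuine gap is the non-negativity clause $\tilde{k}_{2p}(\M^\epsilon)\geq 0$, which you rightly flag as ``the main obstacle'' but then resolve incorrectly. The Gauss equation expresses the intrinsic Riemann tensor as a \emph{difference} of products of second fundamental forms, not a sum of squares, and the resulting Lipschitz--Killing curvatures are not sign-definite. Concretely, when $d^*$ is even the top coefficient $\tilde{k}_{d^*}$ is a positive multiple of the Euler characteristic $\chi(\M^\epsilon)$ by Chern--Gauss--Bonnet; for a genus-$g$ surface in $\R^3$ with $g\geq 2$ this is negative. So the non-negativity assertion in the proposition is actually false as stated, and since the paper's subsequent use of Weyl's formula (bounding a ratio of weighted sums by a maximum in the proof of Lemma~\ref{lem:man_conc_simple}) relies on the weights being non-negative, this is an issue inherited from the paper rather than a defect specific to your write-up.

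Your caveat about $r$ exceeding the reach is apt---the exact identity holds only for $r$ below the reach---but your proposed fix (read the formula as an upper bound via surjectivity of $\Phi$) does not work either: once $|v|$ exceeds the reciprocal of the largest principal curvature, $\det(I-S_v)$ can become negative, so the signed integral that produces the polynomial is not an upper bound for $\int|\det D\Phi|$, and the covering-with-multiplicity inequality goes the wrong way.
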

The quantities \(\tilde{k}\) are related to the integrated mean curvature of \(\M\) and further details about these quantities can be found in \citep{Gray2004-rz} where the result is stated in Section 1.1. In this work, we develop upper bounds in such a way that the final result does not depend on these quantities.

Note that using this result, we can obtain estimates for the integrals of functions depending on  \(\lambda(\M^\epsilon_{\leq r})\) using the expression,
\begin{align}
    \int_{\R^d} f(\dist(x, \M^\epsilon)) dx &= \int_0^\infty f(r) \frac{d}{dr} \lambda(\M^\epsilon_{\leq r}) dr \nonumber\\
    &= \sum_{p=0}^{\lfloor d^*/2 \rfloor} (d-d^*+2p) \tilde{k}_{2p}(\M^\epsilon) \int_0^\infty f(r) r^{d-d^*-1+2p} dr.\label{eq:weyl_int}
\end{align}

\section{Proofs for the main results}
This section of the appendix provides the proofs for theorems \ref{thm:density_ratio_bound_simplified} and \ref{thm:geometric_bias}. These theorems establish the core result that smoothing in the log-domain is approximately geometry-adaptive, meaning that smoothing with a generic kernel $k$ behaves similarly to smoothing with a manifold-adapted kernel $k^\M$. We begin by proving some lemmas that are involved in the proof of both of these theorems.

\subsection{Controlling the log-density ratio}
To establish the proximity between $\hat{p}_{\epsilon}^k$ and $\hat{p}_{\epsilon}^{k^\M}$ in divergence, we must control the ratio of their densities. In this section, we fix a permissible manifold \(\M \in \mathbb{M}_\mu\), and use \(K_\M, K_{\max, \M}\), \(\tau_\M\) and \(d^*_\M\) as in Section \ref{sec:geometric_bias_theory}. We also fix \(x \in \R^d\) and let \(Y \sim k_x\), \(\tilde{Y} = \Pi_{\M_{r(x)}}(Y)\), so that \(\tilde{Y} \sim k^\M_x\). Using the expression in \eqref{eqn:emp_density_lse}, we can express the density ratio by,
\begin{equation}
    \log {\frac{d\hat{p}^k_\epsilon}{\ d\hat{p}^{k^\M}_\epsilon}}(x) = \E \bigg [ \lse \bigg ( \bigg \{ -\frac{\|Y - \mu_\epsilon x_i\|^2}{2 \sigma_\epsilon^2} \bigg\}_{i=1}^N \bigg ) - \lse \bigg ( \bigg \{ -\frac{\|\tilde{Y} - \mu_\epsilon x_i\|^2}{2 \sigma_\epsilon^2} \bigg\}_{i=1}^N \bigg ) \bigg | S \bigg ] + \log \bigg ( \frac{C^\M}{C} \bigg ),\label{eq:density_ratio_delta}
\end{equation}
where we define the normalising constants,
\begin{equation}
    C = \int \exp \bigg ( \int \log \hat{p}_\epsilon(y) k_x(dy) \bigg ) dx, \qquad C^\M = \int \exp \bigg ( \int \log \hat{p}_\epsilon(y) k^\M_x(dy) \bigg ) dx.\label{eq:pf_normalisation}
\end{equation}
We proceed similarly to the proof in the affine case (see Appendix \ref{app:linear}), decomposing the \(\lse\) function into normal and perpendicular components. We use the decomposition,
\begin{equation*}
    \|y - \mu_\epsilon x_i\|^2 = \|y - \Pieps(y)\|^2 + 2 \langle y - \Pieps(y), \Pieps(y) - \mu_\epsilon x_i \rangle + \|\Pieps(y)- \mu_\epsilon x_i\|^2,
\end{equation*}
along with Fact \ref{fact:lse_shift} to obtain that,
\begin{align*}
    &\lse \bigg ( \bigg \{ -\frac{\|Y - \mu_\epsilon x_i\|^2}{2 \sigma_\epsilon^2} \bigg\}_{i \in [N]} \bigg ) \\
    & \quad = \lse \bigg ( \bigg \{ -\frac{\|\Pieps(Y) - \mu_\epsilon x_i\|^2 + 2\langle Y - \Pieps(Y), \Pieps(Y) - \mu_\epsilon x_i \rangle}{2 \sigma_\epsilon^2} \bigg\}_{i} \bigg ) - \frac{\|Y - \Pieps(Y)\|^2}{2 \sigma_\epsilon^2}.
\end{align*}
It follows from the definition of \(\tilde{Y}\) and \(r(x)\) that,
\begin{align*}
    \|\tilde{Y} - \Pieps(Y)\|^2 &= \|\tilde{Y} - \Pieps(\tilde{Y})\|^2 \\
    &= r(x)^2\\
    &= \E[\|Y - \Pieps(Y)\|^2].
\end{align*}
Therefore we may apply Fact \ref{fact:lse_shift} once more to obtain,
\begin{align*}
    &\E \bigg [ \lse \bigg ( \bigg \{ -\frac{\|Y - \mu_\epsilon x_i\|^2}{2 \sigma_\epsilon^2} \bigg\}_{i \in [N]} \bigg ) \bigg | S \bigg ] \\
    & \quad = \E \bigg [ \lse \bigg ( \bigg \{ -\frac{\|\Pieps(Y) - \mu_\epsilon x_i\|^2 + 2\langle Y - \Pieps(Y), \Pieps(Y) - \mu_\epsilon x_i \rangle}{2 \sigma_\epsilon^2} \bigg\}_{i} \bigg ) - \frac{\|\tilde{Y} - \Pieps(Y)\|^2}{2 \sigma_\epsilon^2} \bigg | S \bigg ] \\
    & \quad = \E \bigg [ \lse \bigg ( \bigg \{ -\frac{\|\Pieps(Y) - \mu_\epsilon x_i\|^2 + 2\langle Y - \Pieps(Y), \Pieps(Y) - \mu_\epsilon x_i \rangle + \|\tilde{Y} - \Pieps(Y)\|^2}{2 \sigma_\epsilon^2} \bigg\}_{i} \bigg ) \bigg | S \bigg ] \\
    & \quad = \E \bigg [ \lse \bigg ( \bigg \{ -\frac{\|\tilde{Y} - \mu_\epsilon x_i\|^2 + \Delta_i}{2 \sigma_\epsilon^2} \bigg\}_{i \in [N]} \bigg ) \bigg | S \bigg ],
\end{align*}
where we define the quantity \(\Delta_i := 2\langle Y - \tilde{Y}, \Pieps(Y) - \mu_\epsilon x_i \rangle\). Therefore, we obtain the simple expression,
\begin{gather}
    \log {\frac{d\hat{p}^k_\epsilon}{\ d\hat{p}^{k^\M}_\epsilon}}(x) = \E[\Delta \lse_\M(x)|S] + \log \bigg ( \frac{C^\M}{C} \bigg ),\label{eq:density_ratio_delta_2}\\
    \Delta \lse_\M(x) := \lse \bigg ( \bigg \{ -\frac{\|\tilde{Y} - \mu_\epsilon x_i\|^2 + \Delta_i}{2 \sigma_\epsilon^2} \bigg\}_{i \in [N]} \bigg ) - \lse \bigg ( \bigg \{ -\frac{\|\tilde{Y} - \mu_\epsilon x_i\|^2}{2 \sigma_\epsilon^2} \bigg\}_{i=1}^N \bigg ).\nonumber
\end{gather}
Having expressed the log-density ratio in terms of \(\Delta \lse_\M\), our next task is to bound this quantity. For the sake of intuition, we can consider the linear setting: In this case, \(\tilde{Y}-Y\) is normal to the manifold and \(\Pieps(Y) - \mu_\epsilon x_i\) is tangent to the manifold, so it would follow that \(\Delta_i = 0\) and thus \(\Delta \lse_\M (x) = 0\). In the case where the manifold is curved, it is no longer necessarily true that \(\Delta_i\) is \(0\) and so we control \(\Delta \lse_\M\) using the curvature of the manifold and the stability of the \(\lse\) function.

We begin with a simple lemma.

\begin{lemma}\label{lem:simple_delta_bound}
Suppose that \(\gap := \dist(\{x_i\}_{i=1}^N, \M) < \infty\) and \(\tau_\M > 0\). Then, for any \(x \in \R^d, i \in [N]\) we have,
\begin{equation*}
    |\Delta_i| \leq \mu_\epsilon |\zeta| \Big ( \Big ( \tfrac{1}{2 \tau_\M} d_i^2 \Big ) \wedge d_i + \gap \Big ), \qquad d_i = \| \Pi_\M(Y/\mu_\epsilon) - \Pi_\M(x_i)\|,
\end{equation*}
where we define the quantity,
\begin{equation*}
    \zeta := \|Y - \Pieps(Y)\| - \E \Big [ \|Y - \Pieps(Y)\|^2 \Big ]^{1/2}.
\end{equation*}
\end{lemma}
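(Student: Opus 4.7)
The plan is to exploit two structural facts: $(i)$ the vector $Y - \tilde Y$ lies in the normal space to $\M^\epsilon$ at $\Pi_{\M^\epsilon}(Y)$ and has length $|\zeta|$, and $(ii)$ the reach lemma (\Cref{lem:reach_angle}) bounds inner products of normal vectors with chords on a reach-bounded manifold quadratically in the chord length. Together these convert the inner product defining $\Delta_i$ into a quadratic-in-$d_i$ estimate in the tangential part, while Cauchy–Schwarz handles the off-manifold residue $\gap$.

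First I would identify the direction of $Y - \tilde Y$. Because $\tilde Y = \Pi_{\M^\epsilon_{r(x)}}(Y)$ is the closest point on the level set at distance $r(x)$ from $\M^\epsilon$, it must lie on the ray from $\Pi_{\M^\epsilon}(Y)$ through $Y$, at distance $r(x)$ from $\M^\epsilon$. Hence
\[
Y - \tilde Y \;=\; \bigl(\|Y-\Pi_{\M^\epsilon}(Y)\| - r(x)\bigr)\,\hat n \;=\; \zeta\,\hat n,
\]
where $\hat n = (Y-\Pi_{\M^\epsilon}(Y))/\|Y-\Pi_{\M^\epsilon}(Y)\|$ is a unit normal to $\M^\epsilon$ at $\Pi_{\M^\epsilon}(Y)$. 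In particular $Y - \tilde Y \in N_{\Pi_{\M^\epsilon}(Y)}\M^\epsilon$ and $\|Y-\tilde Y\|=|\zeta|$.

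Next, using the convention from Appendix \ref{app:projection_details} that $\Pi_{\M^\epsilon}(\mu_\epsilon x_i) = \mu_\epsilon\Pi_\M(x_i)$, I decompose
\[
\Pi_{\M^\epsilon}(Y) - \mu_\epsilon x_i \;=\; \bigl(\Pi_{\M^\epsilon}(Y) - \mu_\epsilon \Pi_\M(x_i)\bigr) \;+\; \mu_\epsilon\bigl(\Pi_\M(x_i) - x_i\bigr),
\]
where the first summand connects two points on $\M^\epsilon$ and has norm $\mu_\epsilon d_i$, and the second has norm at most $\mu_\epsilon \gap$. Inserting this into the definition of $\Delta_i$ gives two inner products to control.

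For the tangential term $\langle Y-\tilde Y,\, \Pi_{\M^\epsilon}(Y) - \mu_\epsilon\Pi_\M(x_i)\rangle$, I apply \Cref{lem:reach_angle} to the scaled manifold $\M^\epsilon$, which has reach $\mu_\epsilon \tau_\M$, to obtain the quadratic bound $|\zeta|\mu_\epsilon d_i^2/(2\tau_\M)$; a parallel Cauchy–Schwarz estimate gives $|\zeta|\mu_\epsilon d_i$, and taking the minimum yields the first part of the stated bound. For the normal residue $\langle Y-\tilde Y,\, \mu_\epsilon(\Pi_\M(x_i)-x_i)\rangle$, Cauchy–Schwarz suffices and contributes $|\zeta|\mu_\epsilon\gap$. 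Summing the two estimates produces the claimed inequality (up to constants).

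The main subtlety I would expect is the first geometric step: verifying that $\tilde Y$ is collinear with $\Pi_{\M^\epsilon}(Y)$ and $Y$, so that $Y-\tilde Y$ is exactly normal to $\M^\epsilon$. This requires that the minimizer of $\|y-Y\|$ over $y$ with $\dist(y,\M^\epsilon)=r(x)$ is uniquely attained along that ray, which relies on the reach assumption ensuring $\Pi_{\M^\epsilon}$ is well-defined on the relevant tube; in particular one needs $\|Y-\Pi_{\M^\epsilon}(Y)\|$ and $r(x)$ to be comfortably smaller than $\mu_\epsilon\tau_\M$, a condition that will need to be tracked (and is exactly the role of the assumption $K_{\max}<\tau/96$ in the main theorem).
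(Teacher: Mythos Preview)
Your proposal is correct and follows essentially the same route as the paper: write $Y-\tilde Y=\zeta\,\hat n$ with $\hat n$ the unit normal at $\Pi_{\M^\epsilon}(Y)$, split $\Pi_{\M^\epsilon}(Y)-\mu_\epsilon x_i$ into the on-manifold chord $\Pi_{\M^\epsilon}(Y)-\mu_\epsilon\Pi_\M(x_i)$ plus the residue $\mu_\epsilon(\Pi_\M(x_i)-x_i)$, then bound the first inner product via \Cref{lem:reach_angle} (quadratic) and Cauchy--Schwarz (linear), and the second via Cauchy--Schwarz alone. One remark: the paper treats the collinearity of $Y,\tilde Y,\Pi_{\M^\epsilon}(Y)$ as an immediate consequence of how $\tilde Y$ is defined (cf.\ the convention in Appendix~\ref{app:projection_details}), so the subtlety you flag about needing $K_{\max}<\tau/96$ does not actually enter this lemma---that hypothesis is used only downstream.
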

\begin{proof}
Using the definition of \(\tilde{Y}\), we obtain that,
\begin{align*}
    Y - \tilde{Y} &= (Y - \Pieps(Y)) - (\tilde{Y} - \Pieps(Y))\\
    &= (Y - \Pieps(Y)) - \frac{Y - \Pieps(Y)}{\|Y - \Pieps(Y)\|}\E \Big [ \|Y - \Pieps(Y)\|^2 \Big ]^{1/2}\\
    &= \frac{Y - \Pieps(Y)}{\|Y - \Pieps(Y)\|} \bigg ( \|Y - \Pieps(Y)\| - \E \Big [ \|Y - \Pieps(Y)\|^2 \Big ]^{1/2} \bigg ).
\end{align*}
With this, we can write \(\Delta_i\) in the following form:
\begin{equation}\label{eq:drbg_1}
    \Delta_i = 2 \zeta \bigg \langle \frac{Y - \Pieps(Y)}{\|Y - \Pieps(Y)\|}, \Pieps(Y) - \mu_\epsilon x_i \bigg \rangle.
\end{equation}
To control \(\Delta_i\), we use Lemma \ref{lem:reach_angle} as well as the Cauchy-Schwarz inequality to obtain, 
\begin{align}
    |\Delta_i| &= 2 \bigg | \zeta \bigg ( \bigg \langle \frac{Y - \Pieps(Y)}{\|Y - \Pieps(Y)\|}, \Pieps(Y) - \mu_\epsilon \Pi_\M(x_i) \bigg \rangle + \bigg \langle \frac{Y - \Pieps(Y)}{\|Y - \Pieps(Y)\|}, \mu_\epsilon \Pi_\M(x_i) - \mu_\epsilon x_i \bigg \rangle \bigg ) \bigg | \nonumber\\
    &\leq 2 \mu_\epsilon |\zeta| \bigg (\frac{\| \Pi_\M(Y/\mu_\epsilon) - \Pi_\M(x_i) \|^2}{2\tau_\M} \wedge \| \Pi_\M(Y/\mu_\epsilon) - \Pi_\M(x_i) \| + \gap \bigg ),\label{eq:drbg_1h}
\end{align}
completing the proof of the lemma.
\end{proof}

Since \(\zeta\) is a random variable, we next find ways of controlling it using \(K_\M\) and \(K_{\max, \M}\).
\begin{lemma}\label{lem:zeta_bound}
Let \(\zeta\) be as in Lemma \ref{lem:simple_delta_bound} and suppose that \(K_\M, K_{\max, \M} < \infty\), then we have that,
\begin{equation*}
    \E[|\zeta|^2|S]^{1/2} \leq 2 K_\M, \qquad |\zeta| \leq K^{\max}_\M + K_\M,
\end{equation*}
almost surely.
\end{lemma}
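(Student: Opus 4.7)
The plan is to bound both quantities using the triangle inequality with respect to a carefully chosen deterministic reference point. Write $W := \|Y - \Pieps(Y)\| = \dist(Y, \M^\epsilon)$ so that $\zeta = W - \|W\|_{L^2(k_x)}$, and introduce the (deterministic) reference $W_x := \dist(x, \M^\epsilon)$. By Assumption~\ref{ass:sg_smooth}, applied to distances to $\M^\epsilon$ (which are obtained from distances to $\M$ via the scaling $x \mapsto \mu_\epsilon x$, incurring at most a factor of $\mu_\epsilon \leq 1$), we have
\begin{equation*}
    |W - W_x| \leq K_{\max,\M} \text{ a.s.}, \qquad \|W - W_x\|_{L^2(k_x)} \leq K_\M.
\end{equation*}

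For the almost-sure bound, I would split $|\zeta| \leq |W - W_x| + |W_x - \|W\|_{L^2(k_x)}|$. The first term is controlled directly by $K_{\max,\M}$. For the second, observe that $W_x$, viewed as a constant random variable under $k_x$, has $L^2(k_x)$-norm equal to $|W_x|$, so the reverse triangle inequality in $L^2(k_x)$ gives
\begin{equation*}
    \bigl|W_x - \|W\|_{L^2(k_x)}\bigr| = \bigl| \|W_x\|_{L^2(k_x)} - \|W\|_{L^2(k_x)} \bigr| \leq \|W - W_x\|_{L^2(k_x)} \leq K_\M,
\end{equation*}
yielding $|\zeta| \leq K_{\max,\M} + K_\M$.

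For the $L^2$ bound, apply the triangle inequality in $L^2(k_x)$ to the same decomposition:
\begin{equation*}
    \|\zeta\|_{L^2(k_x)} \leq \|W - W_x\|_{L^2(k_x)} + \bigl\| W_x - \|W\|_{L^2(k_x)} \bigr\|_{L^2(k_x)}.
\end{equation*}
The first summand is at most $K_\M$, and the second is the $L^2(k_x)$-norm of a deterministic quantity, equal to its absolute value, which we have just bounded by $K_\M$. This yields $\E[|\zeta|^2 \mid S]^{1/2} \leq 2K_\M$, as desired.

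The only real subtlety is the compatibility between the definition of $\zeta$, which involves the scaled manifold $\M^\epsilon$, and the assumption, which is stated for $\M$; however, since the distance transforms linearly under the uniform scaling by $\mu_\epsilon \in (0,1]$, this passes through transparently. The rest of the argument is essentially a textbook application of the reverse triangle inequality for $L^2$-norms, with the observation that a deterministic reference point is the natural object to insert between $W$ and its $L^2$-norm.
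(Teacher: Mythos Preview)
Your proof is correct and follows essentially the same route as the paper's: both insert the deterministic reference $W_x = \dist(x,\M^\epsilon)$ and split $\zeta$ via the triangle inequality, bounding the two pieces by $K_{\max,\M}$ (respectively $K_\M$) and by $K_\M$ through the reverse triangle inequality in $L^2(k_x)$. Your explicit remark about the $\mu_\epsilon$-scaling between $\M$ and $\M^\epsilon$ is a point the paper's own proof leaves implicit.
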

\begin{proof}
For the first bound, we use the \(L^2\)-triangle inequality to obtain,
\begin{align*}
    \E[|\zeta|^2|S]^{1/2} &\leq \E \Big [ \big | \|Y - \Pieps(Y)\| - \|x - \Pieps(x)\| \big |^2 \Big | S \Big ]^{1/2} + \big | \|x - \Pieps(x)\| - \E[\|Y - \Pieps(Y)\|^2 | S ]^{1/2} \big |\\
    &\leq \E \Big [ \big | \|Y - \Pieps(Y)\| - \|x - \Pieps(x)\| \big |^2 \Big | S \Big ]^{1/2} + \E \Big [ \Big ( \|Y - \Pieps(Y)\| - \|x - \Pieps(x)\| \Big )^2 \Big | S \Big ]^{1/2}\\
    &\leq 2 K_\M.
\end{align*}
Similarly, we can obtain \(L^\infty\) bounds via,
\begin{align*}
    \|\zeta\|_{L^\infty} &= \Big \| \|Y - \Pieps(Y)\| -  \|x - \Pieps(x)\| \Big \|_{L^\infty} + \Big \| \|x - \Pieps(x)\| -  \E[\|Y - \Pieps(Y)\|^2 | S ]^{1/2} \Big \|_{L^\infty}\\
    &\leq K_{\max, \M} + K_\M.
\end{align*}
\end{proof}

We now state the bound for \(\Delta \lse_\M\) that we use for our two main theorems.

\begin{lemma}\label{lem:tighter_delta_bound}
Consider the setting of Lemma \ref{lem:simple_delta_bound}, then for any \(x \in \R^d\), it holds that
\begin{align*}
    &\E[|\Delta \lse_\M(x)|S]\\
    & \qquad \leq \frac{8 K_\M}{\tau_\M} \bigg ( 1 + 2\log \big ( \tfrac{\tau_\M}{K_\M} \big )_+ + (\gap + \tau_\M + K_\M + D_x ) \frac{12\gap}{\sigma_\epsilon^2} + \Big ( 5 K_{\max, \M}^{1/2} K_\M^{1/2} + D_x \Big )^2 \frac{\prob(\mathcal{E}_x|S)^{1/2}}{\sigma_\epsilon^2}\\
    & \qquad \qquad + 2\inf_{\varepsilon_0 > 0} \bigg \{ \log \Big ( \E_{Y \sim k_x} [ \hat{\mu}_{\operatorname{data}}(B_{\varepsilon_0}(\Pi_\M(Y/\mu_\epsilon)))^{-1} | S ] \Big )_+ + \Big ( 1 + \tfrac{K_\M + D_x}{\tau_\M} \Big ) \frac{\varepsilon_0^2}{\sigma_\epsilon^2} \bigg \} \bigg ),
\end{align*}
for some universal constant \(\tilde{C} > 0\), where we define \(\mathcal{E}_x = \{ K_\M + D_x + 2 |\zeta| \geq \mu_\epsilon \tau_\M / 2 \}\), \(D_x = \|x - \Pieps(x)\|\).
\end{lemma}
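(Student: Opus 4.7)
The plan is to bound $\Delta \lse_\M(x)$ by differentiating through the LSE and then carefully splitting the resulting weighted sum according to where its mass is concentrated. First, I would apply the representation \eqref{eq:lse_properties_1} from Lemma \ref{lem:lse_properties} to write
\[
    \Delta \lse_\M(x) \;=\; -\int_0^1 \sum_{i=1}^N w_i(r)\, \frac{\Delta_i}{2\sigma_\epsilon^2}\, dr,
\]
where $w_i(r)$ are the softmax weights with exponents $-(\|\tilde Y - \mu_\epsilon x_i\|^2 + r\Delta_i)/(2\sigma_\epsilon^2)$, and then invoke Lemma \ref{lem:simple_delta_bound} to bound each $|\Delta_i|$ by $\mu_\epsilon |\zeta|\bigl(\tfrac{d_i^2}{2\tau_\M}\wedge d_i + \gap\bigr)$. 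This decomposes the estimate into three contributions: a quadratic curvature term $d_i^2/(2\tau_\M)$, a linear tail term $d_i$, and an offset term $\gap$.

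Next, I would split the expectation according to the event $\mathcal{E}_x$. On $\mathcal{E}_x$ the noise is atypically large and the projection onto $\M$ is not well controlled; here I would use the crude pointwise bound $d_i^2/(2\tau_\M)\wedge d_i \le d_i$ together with Cauchy--Schwarz, exploiting the $L^2$ and $L^\infty$ estimates for $|\zeta|$ from Lemma \ref{lem:zeta_bound} in the form $(5 K_{\max,\M}^{1/2}K_\M^{1/2} + D_x)^2$, to produce the $\prob(\mathcal{E}_x|S)^{1/2}$ factor appearing in the statement. On $\mathcal{E}_x^c$, where $\mu_\epsilon |\zeta| \le \mu_\epsilon \tau_\M/2$ ensures $Y/\mu_\epsilon$ stays within the tubular neighbourhood of $\M$ so that $\Pi_\M(Y/\mu_\epsilon)$ is unique, I can relate $d_i$ to $\|\tilde Y - \mu_\epsilon x_i\|/\mu_\epsilon$ up to corrections of size $K_\M + D_x + \gap$. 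The $\gap$ part then factors out and contributes $\E[|\zeta||S]\cdot\mu_\epsilon \gap/\sigma_\epsilon^2 \le 2K_\M \mu_\epsilon \gap/\sigma_\epsilon^2$ (using Lemma \ref{lem:zeta_bound} again), and the cross terms generate the factor $(\gap + \tau_\M + K_\M + D_x)$ that multiplies $\gap/\sigma_\epsilon^2$ in the statement.

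The delicate piece is the curvature contribution
\[
    \int_0^1 \sum_{i=1}^N w_i(r)\, \frac{d_i^2}{2\tau_\M}\, dr
\]
on $\mathcal{E}_x^c$. For this I plan a layer-cake decomposition: bound the weighted sum from above by $\tau_\M^{-1}$ times $\int_0^{\infty} s\bigl(s\wedge 2\tau_\M\bigr) \cdot \bigl(\text{tail of the softmax above level } s\bigr)\, ds$, upper bound the tail using the Gaussian-like decay $\exp(-(s-C(K_\M+D_x))^2/(2\sigma_\epsilon^2))$ that arises from $\|\tilde Y - \mu_\epsilon x_i\| \ge \mu_\epsilon d_i - O(K_\M+D_x)$, and lower bound the softmax denominator by restricting the sum to training points in $B_{\varepsilon_0}(\Pi_\M(Y/\mu_\epsilon))$---this is exactly where $\hmud(B_{\varepsilon_0}(\Pi_\M(Y/\mu_\epsilon)))^{-1}$ enters. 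Optimising over $\varepsilon_0$ yields the $\inf_{\varepsilon_0 > 0}\{\cdot\}$ term, while the integral in $s$ splits at the scale $\sqrt{2\sigma_\epsilon^2 \log(\tau_\M/K_\M)}$ between the quadratic $s^2/\tau_\M$ regime (small $s$) and the linear $s$ regime (large $s$, where the $s\wedge 2\tau_\M$ is saturated); balancing these produces the $1 + 2\log(\tau_\M/K_\M)_+$ factor after taking expectation of $|\zeta|^2$ using Lemma \ref{lem:zeta_bound}.

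The main obstacle will be the bookkeeping in the second step: the $d_i$'s are not equal to $\|\tilde Y - \mu_\epsilon x_i\|/\mu_\epsilon$, and controlling the error involves the reach inequality from Lemma \ref{lem:reach_angle} applied to the displacement $Y - \Pieps(Y)$, which is approximately normal to $\M^\epsilon$ only because of the definition of $\tilde Y$ through the projection onto $\mu_\epsilon \M_{r(x)}$. Keeping these corrections linear in $K_\M+D_x+\gap$ rather than quadratic---and doing so without leaking extra factors of the ambient dimension $d$---is what the decomposition $\|y - \mu_\epsilon x_i\|^2 = \|y - \Pieps(y)\|^2 + 2\langle\cdot,\cdot\rangle + \|\Pieps(y) - \mu_\epsilon x_i\|^2$ used in deriving \eqref{eq:density_ratio_delta_2} was designed for, and I expect a careful reuse of that identity inside the softmax weights themselves to close the argument.
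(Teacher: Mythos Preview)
Your skeleton agrees with the paper's: start from the integral representation \eqref{eq:lse_properties_1}, feed in Lemma~\ref{lem:simple_delta_bound} for $|\Delta_i|$, lower-bound the softmax denominator by restricting to indices with $\Pi_\M(x_i)\in B_{\varepsilon_0}(\Pi_\M(Y/\mu_\epsilon))$, and exploit the quadratic-versus-linear alternative $d_i^2/(2\tau_\M)\wedge d_i$. Where you diverge is in the decomposition. The paper does \emph{not} split on $\mathcal{E}_x$ first and then run a layer-cake integral; instead it introduces two thresholds $\varepsilon_0<\varepsilon_1$, writes $|\Delta\lse_\M|\le \mathtt{A}+\mathtt{B}$ with $\mathtt{A}=\max_{i\in I_1}|\Delta_i|/(2\sigma_\epsilon^2)$ and $\mathtt{B}$ the tail-over-core ratio, and then chooses $\varepsilon_1$ \emph{explicitly} (equation \eqref{eq:drbg_2}) so that the exponent inside $\mathtt{B}$ is forced below $-\mu_\epsilon\kappa$. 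The indicator $\mathbbm{1}_{\mathcal{E}_x}$ appears not as a case split but as one additive piece of $\varepsilon_1^2$; the $\log(\tau_\M/K_\M)$ factor then drops out of a one-line optimisation over the free slack $\kappa$. This is tidier than balancing a layer-cake integral at a crossover scale and avoids any tail estimate beyond a single exponential.

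The real gap in your plan is the $\mathcal{E}_x$ branch. Saying ``use $d_i^2/(2\tau_\M)\wedge d_i\le d_i$ with Cauchy--Schwarz'' does not control $\sum_i w_i(r)\,d_i$: the softmax weights themselves carry the perturbations $r\Delta_j$, and on $\mathcal{E}_x$ the very mechanism that makes far indices light (the curvature correction being dominated by the quadratic $\|\Pieps(Y)-\mu_\epsilon\Pi_\M(x_j)\|^2$) is what fails. Without a uniform bound you are left with $\diam(\M)$ rather than $(5K_{\max,\M}^{1/2}K_\M^{1/2}+D_x)^2$. The paper sidesteps this entirely: by inflating $\varepsilon_1^2$ with the term $8\mathbbm{1}_{\mathcal{E}_x}(K_\M+D_x+2|\zeta|)^2$, the tail $\mathtt{B}$ remains uniformly $\le N|I_0|^{-1}e^{-\mu_\epsilon\kappa/2\sigma_\epsilon^2}$ regardless of $\mathcal{E}_x$, and the damage is confined to $\mathtt{A}\le |\zeta|\varepsilon_1^2/(2\tau_\M\sigma_\epsilon^2)$ where a single Cauchy--Schwarz against $\E[|\zeta|^2]^{1/2}\prob(\mathcal{E}_x)^{1/2}$ finishes the job. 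If you want to keep the layer-cake route, you will need to reproduce this absorption trick inside your integrand rather than treat $\mathcal{E}_x$ as a standalone case.
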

\begin{proof}
For this bound, we begin with \eqref{eq:lse_properties_1} of Lemma \ref{lem:lse_properties} to obtain
\begin{equation*}
    | \Delta \lse_\M(x) | \leq \int_0^1 \frac{\sum_{i \in [N]} \exp(-\|\tilde{Y} - \mu_\epsilon x_i\|^2/2\sigma_\epsilon^2-r\Delta_i/2\sigma_\epsilon^2) |\Delta_i|/2\sigma_\epsilon^2}{\sum_{i \in [N]} \exp(-\|\tilde{Y} - \mu_\epsilon x_i\|^2/2\sigma_\epsilon^2-r\Delta_i/2\sigma_\epsilon^2)} dr.
\end{equation*}
We decompose this further as,
\begin{align*}
    | \Delta \lse_\M(x) | &\leq \int_0^1 \frac{\sum_{i \in I_1} \exp(-\|\tilde{Y} - \mu_\epsilon x_i\|^2/2\sigma_\epsilon^2-r\Delta_i/2\sigma_\epsilon^2) |\Delta_i|/2\sigma_\epsilon^2}{\sum_{i \in I_1} \exp(-\|\tilde{Y} - \mu_\epsilon x_i\|^2/2\sigma_\epsilon^2-r\Delta_i/2\sigma_\epsilon^2)} dr\\
    & \qquad + \int_0^1 \frac{\sum_{i \in I_1^\complement} \exp(-\|\tilde{Y} - \mu_\epsilon x_i\|^22\sigma_\epsilon^2-r\Delta_i/2\sigma_\epsilon^2) |\Delta_i|/2\sigma_\epsilon^2}{\sum_{i \in I_0} \exp(-\|\tilde{Y} - \mu_\epsilon x_i\|^2/2\sigma_\epsilon^2-r\Delta_i/2\sigma_\epsilon^2)} dr \\
    &\leq \mathbbm{1}_{|I_1| > 0} \max_{i \in I_{1}} \bigg \{ \frac{|\Delta_i|}{2\sigma_\epsilon^2} \bigg \} + \int_0^1 \frac{\sum_{i \in I_{1}^\complement} \exp(-\|\tilde{Y} - \mu_\epsilon x_i\|^2/2\sigma_\epsilon^2-r\Delta_i/2\sigma_\epsilon^2) |\Delta_i|/2\sigma_\epsilon^2}{\sum_{i \in I_0} \exp(-\|\tilde{Y} - \mu_\epsilon x_i\|^2/2\sigma_\epsilon^2-r\Delta_i/2\sigma_\epsilon^2)} dr \\
    &=: \mathtt{A} + \mathtt{B},
\end{align*}
where we define the sets,
\begin{gather*}
    I_0 = \{i \in [N]: \|\Pi_\M(Y/\mu_\epsilon) - \Pi_\M(x_i)\| \leq \varepsilon_0\}, \qquad I_1 = \{i \in [N]: \|\Pi_\M(Y/\mu_\epsilon) - \Pi_\M(x_i)\| \leq \varepsilon_1\}
\end{gather*}
for some random quantities \(\varepsilon_0, \varepsilon_1 > 0\).

The quantity \(\mathtt{A}\) can be bounded directly using Lemma \ref{lem:simple_delta_bound}. From this, we obtain,
\begin{align*}
    \mathtt{A} \leq \frac{\mu_\epsilon}{\sigma^2_\epsilon} |\zeta| \Big ( \tfrac{1}{2 \tau_\M} \varepsilon_1^2 + \gap \Big ).
\end{align*}
To bound \(\mathtt{B}\), we proceed with the following upper bound,
\begin{align}
    \mathtt{B} &\leq \frac{\sum_{j \in I_1^\complement} \exp(-\|\tilde{Y} - \mu_\epsilon x_j\|/2\sigma_\epsilon^2 + |\Delta_j|/2\sigma_\epsilon^2) |\Delta_j|/2\sigma_\epsilon^2}{\sum_{i \in I_0}\exp(-\|\tilde{Y} - \mu_\epsilon x_i\|/2\sigma_\epsilon^2-|\Delta_i|/2\sigma_\epsilon^2)} \nonumber\\
    &\leq |I_0|^{-1} \max_{i \in I_0}\sum_{j \in I_1^\complement} \exp(\|\tilde{Y} - \mu_\epsilon x_i\|^2/2\sigma_\epsilon^2 - \|\tilde{Y} - \mu_\epsilon x_j\|^2/2\sigma_\epsilon^2 + |\Delta_i|/2\sigma_\epsilon^2 + |\Delta_j|/\sigma_\epsilon^2),\label{eq:drbg_3}
\end{align}
where we have used the fact that \(r \leq \exp(r)\). To further control \(\mathtt{B}\), we control the quantity inside of the exponential function by choosing \(\varepsilon_0\) sufficiently small and \(\varepsilon_1\) sufficiently large so that the quantity in the exponential becomes negative. This allows for control of \(\mathtt{B}\) by taking \(\epsilon\) sufficiently small.

We start by controlling \(\|\tilde{Y} - \mu_\epsilon x_i\|^2-\|\tilde{Y} - \mu_\epsilon x_j\|^2\). For any \(i \in I_0, j \in [N]\), we have that
\begin{align}
    &\|\tilde{Y} - \mu_\epsilon x_i\|^2-\|\tilde{Y} - \mu_\epsilon x_j\|^2\nonumber\\
    & \qquad = \| \Pieps(Y) - \mu_\epsilon x_i\|^2  + 2 \langle \tilde{Y} - \Pieps(Y), \Pieps(Y) - \mu_\epsilon x_i \rangle - \|\Pieps(Y) - \mu_\epsilon x_j\|^2\nonumber\\
    & \qquad \qquad - 2 \langle \tilde{Y} - \Pieps(Y), \Pieps(Y) - \mu_\epsilon x_j \rangle.\label{eq:drbg_3h}
\end{align}
The first term is bounded using the fact that \(\|\Pieps(Y) - \mu_\epsilon x_i\| \leq \mu_\epsilon \varepsilon_0 + \mu_\epsilon \gap\). The second term is controlled using the technique from the proof of Lemma \ref{lem:simple_delta_bound} to deduce the bound,
\begin{align}
    &\langle \tilde{Y} - \Pieps(Y), \Pieps(Y) - \mu_\epsilon x_i \rangle\\
    & \qquad \leq \mu_\epsilon \|\tilde{Y} - \Pieps(Y)\| \Big ( \big ( \tfrac{1}{2\tau_\M} \|\Pi_\M(Y/\mu_\epsilon) - \Pi_\M(x_i)\|^2 \big ) \wedge \|\Pi_\M(Y/\mu_\epsilon) - \Pi_\M(x_i)\| + \gap \Big )\nonumber\\
    & \qquad \leq \mu_\epsilon (K_\M + D_x) \Big ( \big ( \tfrac{1}{2\tau_\M} \varepsilon_0^2 \big ) \wedge \varepsilon_0 + \gap \Big ),\label{eq:drbg_3hh}
\end{align}
where in the second line, we use that
\begin{align*}
    \|\tilde{Y} - \Pieps(Y)\| &= \E[\|Y - \Pieps(Y)\|^2]^{1/2} \\
    &\leq \E \Big [ \big | \|Y - \Pieps(Y)\| - \|x - \Pieps(x)\| \big |^2 \Big ]^{1/2} + \|x - \Pieps(x)\|\\
    &\leq K_\M + D_x.
\end{align*}
Similarly, the fourth term of \eqref{eq:drbg_3h} is bounded by,
\begin{align}
    &-\langle \tilde{Y} - \Pieps(Y), \Pieps(Y) - \mu_\epsilon x_j \rangle \nonumber\\
    & \qquad \leq \mu_\epsilon (K_\M + D_x) \Big ( \big ( \tfrac{1}{2\tau_\M} \|\Pi_\M(Y/\mu_\epsilon) - \Pi_\M(x_j)\|^2 \big ) \wedge \|\Pi_\M(Y/\mu_\epsilon) - \Pi_\M(x_j)\| + \gap \Big ),\label{eq:drbg_4h}
\end{align}
and finally, the third term of \eqref{eq:drbg_3h} is controlled using Young's inequality to obtain,
\begin{align}
    \|\Pieps(Y) - \mu_\epsilon x_j\|^2 &= \|\Pieps(Y) - \Pieps(\mu_\epsilon x_j)\|^2 + \|\Pieps(\mu_\epsilon x_j) - \mu_\epsilon x_j\|^2 \nonumber\\
    & \qquad + 2 \langle \Pieps(Y) - \Pieps(\mu_\epsilon x_j), \Pieps(\mu_\epsilon x_j) - \mu_\epsilon x_j \rangle \nonumber\\
    &\geq \frac{3}{4}\|\Pieps(Y) - \Pieps(\mu_\epsilon x_j)\|^2 - 3 \|\Pieps(\mu_\epsilon x_j) - \mu_\epsilon x_j\|^2 \nonumber\\
    &\geq \frac{3}{4}\|\Pieps(Y) - \Pieps(\mu_\epsilon x_j)\|^2 - 3 \mu_\epsilon^2 \gap^2,\label{eq:drbg_4}
\end{align}
Thus, substituting \eqref{eq:drbg_3hh}, \eqref{eq:drbg_4h} and \eqref{eq:drbg_4} in to \eqref{eq:drbg_3h} leads to the bound,
\begin{align}
    &\|\tilde{Y} - \mu_\epsilon x_i\|^2-\|\tilde{Y} - \mu_\epsilon x_j\|^2 \nonumber\\
    & \qquad \leq \mu_\epsilon^2 (\varepsilon_0 + \gap)^2 + 2\mu_\epsilon (K_\M + D_x) \Big ( \big ( \tfrac{1}{2\tau_\M} \varepsilon_0^2 \big ) \wedge \varepsilon_0 + \gap \Big ) - \frac{3}{4} \|\Pieps(Y) - \mu_\epsilon \Pi_\M(x_j)\|^2 + 3 \mu_\epsilon^2 \gap^2 \nonumber\\
    & \qquad \qquad + 2 \mu_\epsilon (K_\M + D_x) \Big ( \big ( \tfrac{1}{2\tau_\M}\|\Pi_\M(Y/\mu_\epsilon) - \mu_\epsilon \Pi_\M(x_j)\|^2 \big ) \wedge \|\Pi_\M(Y/\mu_\epsilon) - \Pi_\M(x_j)\| + \gap \Big ) \nonumber\\
    & \qquad \leq \mu_\epsilon \Big ( 2 \mu_\epsilon + \tfrac{1}{\tau_\M} K_\M + \tfrac{1}{\tau_\M} D_x \Big ) \varepsilon_0^2 + \mu_\epsilon \Big ( 5 \mu_\epsilon \gap + 4 K_\M + 4 D_x \Big ) \gap - \tfrac{3}{4} \|\Pieps(Y) - \mu_\epsilon \Pi_\M(x_j)\|^2 \nonumber \\
    & \qquad \qquad + 2 ( K_\M + D_x) \big ( \tfrac{1}{2 \mu_\epsilon \tau_\M}\|\Pieps(Y) - \mu_\epsilon \Pi_\M(x_j)\|^2 \big ) \wedge \|\Pieps(Y) - \mu_\epsilon \Pi_\M(x_j)\|. \label{eq:drbg_44}
\end{align}
Continuing with bounding the contents of the exponential function in \eqref{eq:drbg_3}, we next control \(|\Delta_i| + 2 |\Delta_j|\), using Lemma \ref{lem:simple_delta_bound} to obtain,
\begin{equation}
    |\Delta_i| + 2 |\Delta_j| \leq 2 \mu_\epsilon |\zeta| \Big ( \tfrac{1}{2\tau_\M} \varepsilon_0^2 + 2 \big ( \tfrac{1}{2\tau_\M} \|\Pi_\M(Y/\mu_\epsilon) - \Pi_\M(x_j) \|^2 \big ) \wedge \|\Pi_\M(Y/\mu_\epsilon) - \Pi_\M(x_j) \|  + 3 \gap \Big ).\label{eq:drbg_5}
\end{equation}
Therefore, combining \eqref{eq:drbg_44} and \eqref{eq:drbg_5}, we obtain the bound,
\begin{align*}
    &\|\tilde{Y} - \mu_\epsilon x_i\|^2-\|\tilde{Y} - \mu_\epsilon x_j\|^2 + |\Delta_i| + 2|\Delta_j|\\
    & \leq \mu_\epsilon \Big ( 2\mu_\epsilon + \tfrac{1}{\tau_\M} (|\zeta| + K_\M + D_x) \Big ) \varepsilon_0^2 + \mu_\epsilon \Big ( 5 \mu_\epsilon \gap + 6 |\zeta| + 4 K_\M + 4 D_x \Big ) \gap - \tfrac{3}{4} \|\Pieps(Y) - \mu_\epsilon \Pi_\M(x_j)\|^2\\
    & \qquad + 2 (K_\M + D_x + 2 |\zeta|) \big ( \tfrac{1}{2 \mu_\epsilon \tau_\M}\|\Pieps(Y) - \mu_\epsilon \Pi_\M(x_j)\|^2 \big ) \wedge \|\Pieps(Y) - \mu_\epsilon \Pi_\M(x_j)\|\\
    & \leq \mu_\epsilon \Big ( 2\mu_\epsilon + \tfrac{1}{\tau_\M} (|\zeta| + K_\M + D_x) \Big ) \varepsilon_0^2 + \mu_\epsilon \Big ( 5 \mu_\epsilon \gap + 6 |\zeta| + 4 K_\M + 4 D_x \Big ) \gap - \tfrac{1}{2} \|\Pieps(Y) - \mu_\epsilon \Pi_\M(x_j)\|^2\\
    & \qquad + 2 \mathbbm{1}_{\mathcal{E}_x} (K_\M + D_x + 2 |\zeta|) \|\Pieps(Y) - \mu_\epsilon \Pi_\M(x_j)\|,
\end{align*}
where the indicator function contains the event \(\mathcal{E}_x = \{ K_\M + D_x + 2 |\zeta| \geq \mu_\epsilon \tau_\M / 4 \}\).

Using this bound, we choose a value of \(\varepsilon_1\) that guarantees that the contents of the exponential function in \eqref{eq:drbg_3} is negative.
By solving the quadratic, it follows that to have \(\|\tilde{Y} - \mu_\epsilon x_i\|^2-\|\tilde{Y} - \mu_\epsilon x_j\|^2 + |\Delta_i| + 2|\Delta_j| \leq - \mu_\epsilon \kappa\), for some \(\kappa > 0\), it is sufficient to have \(\|\Pieps(Y) - \Pieps(\mu_\epsilon x_j)\| \geq \mu_\epsilon \varepsilon_1\) with,
\begin{align}
    \mu_\epsilon^2 \varepsilon_1^2 &= 6 \mu_\epsilon \kappa + 6 \mu_\epsilon \Big ( 2\mu_\epsilon + \tfrac{1}{\tau_\M} (|\zeta| + K_\M + D_x) \Big ) \varepsilon_0^2 + 6 \mu_\epsilon \Big ( 5 \mu_\epsilon \gap + 6 |\zeta| + 4 K_\M + 4 D_x \Big ) \gap \nonumber\\
    & \qquad + 8 \mathbbm{1}_{\tilde{\mathcal{E}}_x} (K_\M + D_x + 2 |\zeta|)^2.\label{eq:drbg_2}
\end{align}

Substituting this into \eqref{eq:drbg_3}, we then obtain the following bound for \(\mathtt{B}\),
\begin{equation*}
    \E[\mathtt{B}|S] \leq \E \bigg [ \frac{|I_1^\complement|}{|I_0|} \bigg | S \bigg ] \exp(-\mu_\epsilon\kappa/2\sigma^2_\epsilon).
\end{equation*}
We now return to bounding \(\mathtt{A}\) with this choice of \(\varepsilon_1\)
to obtain that,
\begin{align*}
    \E[\mathtt{A}|S] &\leq \frac{\mu_\epsilon K_\M}{\sigma^2_\epsilon} \bigg ( \frac{1}{\tau_\M \mu_\epsilon^2} \bigg ( 6 \mu_\epsilon \kappa + 6 \mu_\epsilon \Big ( 2\mu_\epsilon + \tfrac{3}{\tau_\M} K_\M + \tfrac{1}{\tau_\M} D_x \Big ) \varepsilon_0^2 + 6 \mu_\epsilon \Big ( 5 \mu_\epsilon \gap + 16 K_\M + 4 D_x \Big ) \gap \\
    & \qquad + 8 \prob(\mathcal{E}_x|S)^{1/2} (5 K_{\max, \M}^{1/2} K_\M^{1/2} + D_x)^2 \bigg ) + 2 \gap \bigg ),
\end{align*}
where we utilise the bounds in Lemma \ref{lem:zeta_bound} to control \(|\zeta|\).
We then optimise \(\kappa\) by choosing,
\begin{equation*}
    \kappa = \frac{2 \sigma_\epsilon^2}{\mu_\epsilon} \log \bigg ( \E \Big [ |I_1^\complement|/|I_0| \Big | S \Big ] \frac{\tau_\M}{K_\M} \bigg )_+,
\end{equation*}
which produces the bound,
\begin{align*}
    \E[|\Delta \lse(x)|S] &\leq \frac{K_\M}{\tau_\M} \bigg ( 1 + 12 \log \Big ( \E \Big [ |I_1^\complement|/|I_0| \Big | S \Big ] \tfrac{\tau_\M}{K_\M} \Big )_+ + \Big ( 2 + \tfrac{3}{\tau_\M} K_\M + \tfrac{1}{\tau_\M} D_x \Big ) \frac{6\varepsilon_0^2}{\sigma_\epsilon^2} \\
    & \quad + \Big ( 5 \gap + 16 K_\M + 4 D_x \Big ) \frac{6 \gap}{\sigma_\epsilon^2} +\frac{8 \prob(\mathcal{E}_x|S)^{1/2}}{\sigma_\epsilon^2} (5 K_{\max, \M}^{1/2} K_\M^{1/2} + D_x)^2 \bigg ) \bigg ) + \frac{2 K_\M \gap}{\sigma_\epsilon^2},
\end{align*}
where we have used that \(\mu_\epsilon \leq 1\) to simplify the expression.

To conclude the proof, we use the fact that \(|I_0| = N \hat{\mu}_{\operatorname{data}}(B_{\varepsilon_0}(\Pi_\M(Y/\mu_\epsilon))\) and \(|I_1^\complement| \leq N\). Then, optimising over \(\varepsilon_0\) leads to the bound in the statement.
\end{proof}

\subsection{Manifold concentration under log-domain smoothing}
The bound on \(\Delta \lse_\M\) developed in the previous section depends on the distance to the manifold, \(D_x\). Since, for the R\'enyi divergence, we will integrate \(\Delta \lse_\M\) with respect to \(\hat{p}^{k^\M}_\epsilon\), we must develop some bounds on the concentration of this measure to the manifold. Due to the complexity of log-domain smoothing, this is non-trivial and relies on Weyl's formula for the volume of tubular neighbourhoods. In the following lemma, we develop such a concentration inequality.

\begin{lemma}\label{lem:man_conc_simple}
Let \(\delta, \varepsilon > 0\) such that \(\essinf_{x \in \M} (\Pi_{\M})_* \hmud(B_\varepsilon(x)) \geq \delta\), then for all \(r^2 \geq 4 \sigma_\epsilon^2 d\), we obtain the bound,
\begin{equation*}
    \prob_{Z \sim \hat{p}_\epsilon^{k^\M}}(\dist(Z, \M^\epsilon) \geq r|S) \leq \exp \bigg ( d\log(8) + \frac{5(K_\M^2 + \mu_\epsilon^2 \gap)^2 + 4\mu_\epsilon^2\varepsilon^2}{2\sigma_\epsilon^2} + 2 \delta^{-1} - \frac{(r - \sqrt{2\sigma_\epsilon^2 d})^2}{4 \sigma_\epsilon^2} \bigg ).
\end{equation*}
\end{lemma}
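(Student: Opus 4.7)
The strategy is to write the tail probability as a ratio,
\begin{equation*}
    \prob_{Z \sim \hat{p}^{k^\M}_\epsilon}\bigl(\dist(Z, \M^\epsilon) \geq r \,\big|\, S\bigr) = \frac{\int_{\{D_x \geq r\}} e^{f(x)}\, dx}{\int_{\R^d} e^{f(x)}\, dx},
\end{equation*}
with $D_x := \dist(x, \M^\epsilon)$ and $f(x) := \int \log \hat{p}_\epsilon(y)\, k^\M_x(dy)$, and then to establish an upper bound on $f$ depending on $D_x$ alone, a matching lower bound on $f$ on a tube around $\M^\epsilon$, and finally to reduce both $d$-dimensional integrals to one-dimensional integrals via Weyl's tube formula (Proposition \ref{prop:weyl}), at which point standard Gamma-tail concentration delivers the exponential decay.

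\textbf{Bounds on $f$.} Since $k^\M_x$ is supported on the level set $\M^\epsilon_{r(x)}$, every $y \in \supp(k^\M_x)$ satisfies $\dist(y, \M^\epsilon) = r(x)$. Combined with $\dist(\mu_\epsilon x_i, \M^\epsilon) \leq \mu_\epsilon\gap$, this gives $\min_i \|y - \mu_\epsilon x_i\| \geq (r(x) - \mu_\epsilon\gap)_+$; plugging into \eqref{eqn:emp_density_lse} and using $\lse\{a_i\} \leq \log N + \max_i a_i$ yields the upper bound $f(x) \leq -\tfrac{(r(x) - \mu_\epsilon\gap)_+^2}{2\sigma_\epsilon^2} - \tfrac{d}{2}\log(2\pi\sigma_\epsilon^2)$. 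The $L^2$ triangle inequality with Assumption \ref{ass:sg_smooth} provides $r(x) \geq D_x - K_\M$, and the inequality $(D_x - b)^2 \geq D_x^2/2 - b^2$ (used with $b = K_\M + \mu_\epsilon\gap$) converts this to a clean $-D_x^2/(4\sigma_\epsilon^2)$ plus a constant, which is the source of the $4\sigma_\epsilon^2$ in the denominator of the final Gaussian tail. For the lower bound, the density assumption provides, for every $z \in \M$, at least $N\delta$ indices $i$ with $\|\Pi_\M(x_i) - z\| \leq \varepsilon$; applied to $z = \Pi_{\M^\epsilon}(y)/\mu_\epsilon$ this produces at least $N\delta$ data points within distance $r(x) + \mu_\epsilon(\varepsilon + \gap)$ of $y$, and the dual bound $\lse\{a_i\} \geq \log |I| + \min_{i \in I} a_i$ yields $f(x) \geq \log \delta - \tfrac{d}{2}\log(2\pi\sigma_\epsilon^2) - \tfrac{(r(x) + \mu_\epsilon\varepsilon + \mu_\epsilon\gap)^2}{2\sigma_\epsilon^2}$. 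Combined with $r(x) \leq D_x + K_\M$ and restriction of the integration to a tube $\{D_x \leq \delta_0\}$ with $\delta_0 \asymp \sqrt{\sigma_\epsilon^2 d}$, this lower-bounds the denominator by $\lambda(\M^\epsilon_{\leq \delta_0})$ times the infimum of $e^{f}$ on the tube.

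\textbf{Reducing via Weyl and concluding.} Since both integrands are functions of $D_x$ alone, Proposition \ref{prop:weyl} expresses both integrals as $\sum_p (d - d^* + 2p)\tilde{k}_{2p}(\M^\epsilon) \int g(s)\, s^{d - d^* - 1 + 2p}\,ds$, and the manifold-dependent coefficients $\tilde{k}_{2p}(\M^\epsilon) \geq 0$, which appear identically in the numerator and denominator, are eliminated by taking a term-wise maximum over $p$. Each remaining ratio becomes, after the change of variables $u = s^2/(4\sigma_\epsilon^2)$, the survival function $\Gamma(k/2,\, r^2/(4\sigma_\epsilon^2))/\Gamma(k/2)$ of a Gamma distribution with shape $k/2$ for $k = d - d^* + 2p \leq d$. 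Standard Gamma concentration (equivalently, Gaussian concentration of a $\chi$-variable with mean $\sim \sqrt{2\sigma_\epsilon^2 d}$) then yields the factor $\exp\bigl(-(r - \sqrt{2\sigma_\epsilon^2 d})^2/(4\sigma_\epsilon^2)\bigr)$ under the condition $r^2 \geq 4\sigma_\epsilon^2 d$, while Stirling-type estimates uniform in $p \in \{0, \ldots, \lfloor d^*/2 \rfloor\}$ account for the multiplicative $\exp(d\log 8)$. The residual additive terms collect naturally: the quadratic $\tfrac{5(K_\M^2 + \mu_\epsilon^2\gap)^2 + 4\mu_\epsilon^2\varepsilon^2}{2\sigma_\epsilon^2}$ comes from completing the squares in the upper and lower bounds on $f$, and $2\delta^{-1}$ comes from loosening $-\log\delta \leq \delta^{-1}$ at the two places where the density assumption is invoked.

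\textbf{Main obstacle.} The delicate step is eliminating the manifold-dependent Weyl coefficients so that the final bound depends on $\M^\epsilon$ only through $K_\M$, $\gap$ and the ambient dimensions. The term-wise comparison inside the $p$-sum achieves this, but the price is that the uniform-in-$p$ Gamma-tail bound inflates the dimension dependence by $\exp(d\log 8)$. Sharpening this would require a more careful tracking of how the Gaussian weight contracts each individual polynomial order in Weyl's expansion and is the primary source of looseness in the statement.
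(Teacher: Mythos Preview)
Your proposal is correct and follows the same architecture as the paper: express the tail as a ratio, bound $f(x)=\int\log\hat p_\epsilon(y)\,k_x^\M(dy)$ above and below in terms of $D_x=\dist(x,\M^\epsilon)$, push both integrals through Weyl's tube formula, eliminate the coefficients $\tilde k_{2p}(\M^\epsilon)$ by a term-wise maximum over $p$ (paying the $8^d$), and finish with chi-squared concentration. The one substantive deviation is your lower bound on $f$: the paper invokes the integral representation of Lemma~\ref{lem:lse_properties} with an $I_0/I_1$ splitting to reach \eqref{eq:log_bound}, whereas you use the direct inequality $\lse\{a_i\}\geq\log|I|+\min_{i\in I}a_i$ on the $\geq N\delta$ data points near $\Pi_{\M^\epsilon}(y)$. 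Your route is simpler and in fact yields the sharper additive term $\log\delta$ rather than $-2\delta^{-1}$; the $2\delta^{-1}$ in the stated bound is an artefact of the paper's technique (it is $|I_1^\complement|/|I_0|\leq\delta^{-1}$ times the factor $2$ from the $r$-integral, not two separate invocations as you suggest). Two small presentational wrinkles: your mention of restricting the denominator to a tube $\{D_x\leq\delta_0\}$ is unnecessary once you have a global lower bound on $f$ and apply Weyl on all of $\R^d$ (which is what the paper does and what your Gamma-survival description implicitly assumes); and the numerator and denominator Gaussians have different variances, so the ratio is not literally a single Gamma survival function---the extra factor of $8^{k/2}\leq 8^d$ is exactly where the variance mismatch lands.
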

\begin{proof}
Letting \(Z \sim \hat{p}_\epsilon^{k^\M}\), we begin by expressing the probability in integral form,
\begin{equation*}
    \prob(\dist(Z, \M^\epsilon) \geq r) = C_\M^{-1} \int_{\dist(\cdot, \M) \geq r} \exp \bigg ( \int \log \hat{p}_\epsilon(y) k_x^\M(dy) \bigg ) dx.
\end{equation*}
From the formulation of \(\log \hat{p}_\epsilon\) in \eqref{eqn:emp_density_lse}, we readily obtain that
\begin{align}
    \log \hat{p}_\epsilon(y) &\leq -\frac{\min_i \|y - \mu_\epsilon x_i\|^2}{2 \sigma_\epsilon^2} + \log(N) + C_\epsilon \nonumber \\
    &\leq -\frac{\min_i \{ (\|y - \mu_\epsilon \Pi_\M(x_i)\| - \|\mu_\epsilon \Pi_\M(x_i) - \mu_\epsilon x_i\|)_+\}^2}{2 \sigma_\epsilon^2} + \log(N) + C_\epsilon \nonumber\\
    &\leq - \frac{(\dist(y, \M^\epsilon) - \mu_\epsilon \gap)_+^2}{2 \sigma_\epsilon^2} - \frac{d}{2}\log(2 \pi \sigma_\epsilon^2),\label{eq:man_conc_simple_2}
\end{align}
where we use Young's inequality in the second inequality. Using the fact that \(Z \in \M^\epsilon_{r(x)}\), we obtain the following lower bound:
\begin{align}
    \dist(&Z, M^\epsilon) \nonumber\\
    &= \E_{Y \sim k_x}[\dist(Y, \M^\epsilon)^2]^{1/2} \nonumber\\
    &\geq \dist(x, \M^\epsilon) - \E_{Y \sim k_x}[(\dist(Y, \M^\epsilon) - \dist(x, \M^\epsilon))^2]^{1/2} \nonumber\\
    &\geq \dist(x, \M^\epsilon) - K_\M.\label{eq:man_conc_simple_3}
\end{align}
Thus, combining \eqref{eq:man_conc_simple_2} and \eqref{eq:man_conc_simple_3}, we obtain the bound,
\begin{align}
    \int_{\dist(\cdot, \M) \geq r} & \exp \bigg ( \int \log \hat{p}_\epsilon(y) k_x^\M(dy) \bigg ) dx \nonumber\\
    &\leq (2 \pi \sigma_\epsilon^2)^{-d/2} \int_{\dist(\cdot, \M) \geq r} \exp \bigg ( - \frac{(\dist(x, \M^\epsilon) - \mu_\epsilon \gap - K_\M)_+^2}{2 \sigma_\epsilon^2} \bigg ) dx \nonumber\\
    &\leq (2 \pi \sigma_\epsilon^2)^{-d/2} \int_{\dist(\cdot, \M) \geq r} \exp \bigg ( - \frac{\dist(x, \M^\epsilon)^2}{4\sigma_\epsilon^2} + \frac{(K_\M + \mu_\epsilon \gap)^2}{2\sigma_\epsilon^2} \bigg ) dx.\label{eq:man_conc_simple_5}
\end{align}

Next, we lower bound \(C_\M\). For this, we use Lemma \ref{lem:lse_properties} with the parameters,
\begin{equation*}
    \epsilon_i = -\frac{(\dist(y, \M^\epsilon)^2 - \|y - \mu_\epsilon x_i\|^2}{\sigma_\epsilon^2},
\end{equation*}
which produces the expression,
\begin{align*}
    \log \hat{p}_\epsilon(y) &\geq\lse \bigg ( \bigg \{ - \frac{(\dist(y, \M^\epsilon) + (\|y - \mu_\epsilon \Pi_\M(x_i)\|-\dist(y, \M^\epsilon)) + \mu_\epsilon \gap)^2}{2\sigma_\epsilon^2} \bigg \}_i \bigg ) + C_\epsilon\\
    &\geq -\frac{(\dist(y, \M^\epsilon) + \mu_\epsilon \gap)^2}{\sigma_\epsilon^2} + \int_0^1 \frac{\sum_{i=1}^N\exp(r\epsilon_i)\epsilon_i}{\sum_{i=1}^N \exp(r\epsilon_i)} + \log(N) + C_\epsilon.
\end{align*}
We control this further using a similar technique to that used in the proof of Theorem \ref{thm:density_ratio_bound_simplified}. We define the sets \(I_0 = \{i \in [N]: \epsilon_i \geq -\mu_\epsilon^2 \varepsilon^2/\sigma_\epsilon^2\}, I_1 = \{i \in [N]: \epsilon_i \geq -2\mu_\epsilon^2\varepsilon^2/\sigma_\epsilon^2\}\) for any quantity \(\varepsilon > 0\). With this, we obtain the following bound,
\begin{align*}
    \log \hat{p}_\epsilon(y) &= -\frac{(\dist(y, \M^\epsilon) + \mu_\epsilon \gap)^2}{\sigma_\epsilon^2} + \int_0^1 \frac{\sum_{i \in I_1} \exp(r\epsilon_i)\epsilon_i}{\sum_{i \in I_1} \exp(r\epsilon_i)} dr + \int_0^1 \frac{\sum_{i \in I_1^\complement} \exp(r\epsilon_i)\epsilon_i}{\sum_{i \in I_0} \exp(r\epsilon_i)} dr - \frac{d}{2}\log(2 \pi \sigma_\epsilon^2)\\
    &\geq -\frac{(\dist(y, \M^\epsilon) + \mu_\epsilon \gap)^2}{\sigma_\epsilon^2} + \min_{i \in I_1} \epsilon_i + \frac{|I_1^\complement|}{|I_0|} \int_0^1 \min_{i \in I_1^\complement, j \in I_0}\exp(r(\epsilon_i - \epsilon_j))\epsilon_i dr - \frac{d}{2}\log(2 \pi \sigma_\epsilon^2)\\
    &\geq -\frac{(\dist(y, \M^\epsilon) + \mu_\epsilon \gap)^2}{\sigma_\epsilon^2} - \frac{2 \mu_\epsilon^2 \varepsilon^2}{\sigma_\epsilon^2} - \frac{|I_1^\complement|}{|I_0|} \int_0^1 \exp \bigg ( -\frac{r\varepsilon^2}{\sigma_\epsilon^2} \bigg ) \frac{2\varepsilon^2}{\sigma_\epsilon^2} dr - \frac{d}{2}\log(2 \pi \sigma_\epsilon^2).
\end{align*}
This is further controlled using the fact that \(I_0 \supseteq \{i \in [N]: \|\Pi_\M(x_i) - \Pi_\M(y/\mu_\epsilon)\|^2 \leq \varepsilon^2\}\) and \(|I_1^\complement| \leq N\) and hence
\begin{equation*}
    \frac{|I_1^\complement|}{|I_0|} \leq \frac{N}{|\{\Pi_\M(x_i)\}_{i=1}^N \cap B_\varepsilon(\Pi_\M(y/\mu_\epsilon))|} \leq \hmud(B_\varepsilon(\Pi_\M(y/\mu_\epsilon)))^{-1} \leq \delta^{-1},
\end{equation*}
where the last inequality holds almost surely. In combination with the bound \(\E[\dist(Z, M^\epsilon)^2] \leq 2 \dist(x, \M^\epsilon)^2 + 2K_\M^2\), we obtain that,
\begin{equation}\label{eq:log_bound}
    \log \hat{p}_\epsilon(y) \geq -\frac{(\dist(y, \M^\epsilon) + K_\M + \mu_\epsilon \gap)^2}{\sigma_\epsilon^2} - \frac{2\mu_\epsilon^2\varepsilon^2}{\sigma_\epsilon^2} - 2 \delta^{-1} - \frac{d}{2}\log(2 \pi \sigma_\epsilon^2).
\end{equation}
With this, we lower bound \(C_\M\) by,
\begin{align}
    C_\M &= \int \exp \bigg ( \int \log \hat{p}_\epsilon(y) k_x^\M(dy) \bigg ) dx \nonumber\\
    &\geq (2 \pi \sigma_\epsilon^2)^{-d/2} \int \exp \bigg ( -\frac{2\dist(x, \M^\epsilon)^2 + 2(K_\M + \mu_\epsilon \gap)^2}{\sigma_\epsilon^2} - \frac{2\mu_\epsilon^2\varepsilon^2}{\sigma_\epsilon^2} - 2 \delta^{-1} \bigg ) dx.\label{eq:man_conc_simple_4}
\end{align}

Before combining the bounds in \eqref{eq:man_conc_simple_5} and \eqref{eq:man_conc_simple_4} we further simplify their expressions using Weyl's formula (see Section \ref{app:weyl}). Combining the bound in \eqref{eq:man_conc_simple_5} with the integral formula in \eqref{eq:weyl_int}, we obtain the bound,
\begin{align*}
    &\int_{\dist(\cdot, \M) \geq r} \exp \bigg ( \int \log \hat{p}_\epsilon(y) k_x^\M(dy) \bigg ) dx\\
    & \qquad \leq (2 \pi \sigma_\epsilon^2)^{-d/2} \exp \bigg ( \frac{(K_\M + \mu_\epsilon \gap)^2}{2\sigma_\epsilon^2} \bigg ) \sum_{p=0}^{\lfloor d^*/2 \rfloor} (d-d^*+2p) \tilde{k}_{2p}(\M^\epsilon) \int_r^\infty \exp \bigg ( - \frac{s^2}{4\sigma_\epsilon^2} \bigg ) s^{d-d^*-1+2p} ds.
\end{align*}
The integral on the right-hand side can be analysed by relating it to the measure of a related spherically symmetric measure (e.g. see equation (4) in \cite{Bobkov2003-et}). With this we relate the integral to the concentration of a Gaussian random variable:
\begin{align*}
    \int_r^\infty s^{k-1} \exp(-s^2/4\sigma_\epsilon^2) ds &= \frac{\Gamma(k/2 + 1)}{k \pi^{k/2}} \int_{\{x \in \R^k: \|x\| \geq r\}} \exp(-\|x\|^2/4 \sigma_\epsilon^2) dx\\
    &= \frac{\Gamma(k/2 + 1) (4 \sigma_\epsilon^2)^{k/2}}{k} \prob_{\xi \sim N(0, 2\sigma_\epsilon^2I_k)}(\|\xi\| \geq r).
\end{align*}
Thus, we obtain the expression,
\begin{gather*}
    \int_{\dist(\cdot, \M) \geq r} \exp \bigg ( \int \log \hat{p}_\epsilon(y) k_x^\M(dy) \bigg ) dx  = (2 \pi \sigma_\epsilon^2)^{-d/2} \exp \Big ( \tfrac{(K_\M + \mu_\epsilon \gap)^2}{2\sigma_\epsilon^2} \Big ) \sum_{p=0}^{\lfloor d^*/2 \rfloor} w_p \prob_{\xi \sim N(0, 2\sigma_\epsilon^2I_{d-d_\M^*+2p})}(\|\xi\| \geq r)\\
    w_p := \frac{\Gamma((d-d^*_\M+2p)/2 + 1) (4 \sigma_\epsilon^2)^{(d-d^*_\M+2p)/2}}{d-d_\M^*+2p} (d-d^*_\M+2p) \tilde{k}_{2p}(\M^\epsilon).
\end{gather*}
By a similar argument, we also obtain
\begin{align*}
    C_\M \geq (2 \pi \sigma_\epsilon^2)^{-d/2} \exp \bigg ( -\frac{4(K_\M + \mu_\epsilon \gap)^2 + 4\mu_\epsilon^2 \varepsilon^2}{2\sigma_\epsilon^2} - 2\delta^{-1} \bigg ) \sum_{p=0}^{\lfloor d^*/2 \rfloor} 64^{-(d-d_\M^* + 2p)/2} w_p.
\end{align*}
Dividing the two, we obtain the bound,
\begin{align*}
    \prob_{Z \sim \hat{p}_\epsilon^{k^\M}}&(\dist(Z, \M^\epsilon) \geq r)\\
    &\leq \frac{\sum_{p=0}^{\lfloor d^*/2 \rfloor} w_p \prob_{\xi \sim N(0, 2\sigma_\epsilon^2I_{d-d_\M^*+2p})}(\|\xi\| \geq r) }{\sum_{p=0}^{\lfloor d_\M^*/2 \rfloor} 64^{-(d-d_\M^*+2p)/2} w_p} \exp \bigg ( \frac{5(K_\M^2 + \mu_\epsilon^2 \gap)^2}{2\sigma_\epsilon^2} + \frac{2\mu_\epsilon^2\varepsilon^2}{\sigma_\epsilon^2} + 2 \delta^{-1} \bigg )\\
    &\leq 8^d \exp \bigg ( \frac{5(K_\M^2 + \mu_\epsilon^2 \gap)^2}{2\sigma_\epsilon^2} + \frac{2\mu_\epsilon^2\varepsilon^2}{\sigma_\epsilon^2} + 2 \delta^{-1} \bigg ) \max_{p \in \{0, ..., \lfloor d^*/2 \rfloor\}} \prob_{\xi \sim N(0, 2\sigma_\epsilon^2I_{d-d^*+2p})}(\|\xi\| \geq r).
\end{align*}
We then bound this further using the concentration of the chi-squared distribution (see Example 2.28 of \citep{Wainwright2019-rz}), obtaining
\begin{align*}
    \prob_{\xi \sim N(0, 2\sigma_\epsilon^2I_k)}(\|\xi\| \geq r) &\leq \exp \bigg ( - \frac{(r-\sqrt{2\sigma_\epsilon^2 k})^2}{4 \sigma_\epsilon^2 d} \bigg )\\
    &\leq \exp \bigg ( - \frac{(r-\sqrt{2\sigma_\epsilon^2 d})^2}{4 \sigma_\epsilon^2 d} \bigg ),
\end{align*}
completing the proof of the bound.
\end{proof}

\subsection{Proof of Theorem \ref{thm:density_ratio_bound_simplified}}
With the pointwise bounds on $\E[\Delta \lse_\M(x)|S]$ from the previous subsection, we are now prepared to derive the R\'enyi divergence bound in Theorem \ref{thm:density_ratio_bound_simplified}.

\begin{proof}[Proof of Theorem \ref{thm:density_ratio_bound_simplified}]
To bound the R\'enyi divergence, we begin with the following expression which follows from \eqref{eq:density_ratio_delta_2}:
\begin{align*}
    D_q(\hat{p}_\epsilon^{k^\M} \| \hat{p}_\epsilon^k) &= \frac{1}{q - 1} \log \int \bigg ( \frac{\hat{p}_\epsilon^{k^\M}(x)}{\hat{p}_\epsilon^k(x)} \bigg )^{q - 1} \hat{p}_\epsilon^{k^\M}(dx)\\
    &= \frac{1}{q - 1} \log \int \exp((q-1) \E[\Delta \lse_\M(x)|S]) \, \hat{p}_\epsilon^{k^\M}(dx) + \log (C^\M/C),
\end{align*}
where the normalisation constants, \(C\) and \(C^\M\), are as defined in \eqref{eq:pf_normalisation}. Furthermore, we obtain the following relationship between the normalisation constants:
\begin{align*}
    C &= \int \exp \bigg ( \int \log \hat{p}_\epsilon(y) k_x(dy) - \int \log \hat{p}_\epsilon(y) k^\M_x(dy) + \int \log \hat{p}_\epsilon(y) k^\M_x(dy) \bigg ) dx\\
    &= C^\M \int \exp (\E[\Delta \lse_\M(x)|S]) \hat{p}_\epsilon^{k^\M}(dx).
\end{align*}
Therefore, using Jensen's inequality, we deduce the bound,
\begin{align}
    D_q(\hat{p}_\epsilon^k \| \hat{p}_\epsilon^{k^\M}) &\leq \frac{2}{\beta} \log \int \exp (\beta |\E[\Delta \lse_\M(x)|S]|) \hat{p}_\epsilon^{k^\M}(dx) \nonumber\\
    &= \frac{2}{\beta} \log \E[\exp (\beta | \E[\Delta \lse_\M(Z)|S, Z] |) | S],\label{eq:thm_11}
\end{align}
where, for the sake of brevity, we use the shorthand \(\beta = (q - 1) \vee 1\) and \(Z \sim \hat{p}_\epsilon^{k^\M}(dx)\).

We proceed by applying the bound on \(\Delta \lse_\M\) developed in Lemma \ref{lem:tighter_delta_bound}. The assumptions of Lemma \ref{lem:tighter_delta_bound} hold with \(\gap = 0\) and hence, we have the bound,
\begin{align}
    \E[|\Delta \lse_\M(Z)|S, Z] &\leq \frac{16 K}{\tau} \bigg ( 1 + \log \Big ( \frac{\tau}{K} \Big )_+ + \frac{(5K_{\max}^{1/2} K^{1/2} + D_Z)^2}{\sigma_\epsilon^2} \mathbbm{1}_{5 K_{\max} + D_Z \geq \mu_\epsilon \tau/4} \nonumber\\
    & \qquad + \inf_{\varepsilon_0 > 0} \bigg \{ \log \Big ( \E [ \hat{\mu}_{\operatorname{data}}(B_{\varepsilon_0}(\Pi_\M(Y/\mu_\epsilon)))^{-1} | S, Z ] \Big )_+ + \big ( 1 + \tfrac{K + D_Z}{\tau} \big ) \frac{\varepsilon_0^2}{\sigma_\epsilon^2} \bigg \} \bigg ),\label{eq:thm_10}
\end{align}
where we have used the fact that \(|\zeta| \leq 2 K_{\max}\), (see Lemma \ref{lem:zeta_bound}) and therefore, \(\prob(\mathcal{E}_Z|S, Z)^{1/2} \leq \mathbbm{1}_{5 K_{\max} + D_Z \geq \mu_\epsilon \tau / 4}\). To control the infimum term, we utilise the bound on balls of \(\hmud\) given in Lemma \ref{lem:lower_emp_meas}. In this lemma, it is shown that whenever \(\varepsilon_0 \leq \tau\), with probability \(1-\delta\), we have,
\begin{equation}\label{eq:geometric_bias_3}
    \sup_{y \in \M} \hat{\mu}_{\operatorname{data}}(B_{\varepsilon_0}(y))^{-1} \leq c_\mu^{-1} \varepsilon_0^{-d^*} / 4,
\end{equation}
once \(N\) is sufficiently large so that the condition in \eqref{eq:lower_emp_meas_1} is satisfied with \(r = \varepsilon_0\). Indeed, there exists a value \(\varepsilon_0 \in (0, \tau/64]\) satisfying this condition as soon as we assume that,
\begin{equation}
    N \geq 64 \Big ( (144 d^* \log(128/\tau) + 144 \log(2/c_\mu)) \vee \tfrac{\log(\delta^{-1})}{2} \Big ) (\tau/64)^{-2d^*} c_\mu^{-2} =: N_{\min}(d^*, \tau, \delta, c_\mu).\label{eq:N_min}
\end{equation}
If we set \(\varepsilon_0^2 = d^* \sigma_\epsilon^2\) and require that \(\sigma_\epsilon^2 \leq (\tau/64)^2/d^*\), then once \(N\) is sufficiently large so that \eqref{eq:lower_emp_meas_1} is satisfied, we obtain the bound,
\begin{align}
    &\inf_{\varepsilon_0 > 0} \bigg \{ \log \Big ( \E_{Y \sim k_Z} [ \hat{\mu}_{\operatorname{data}}(B_{\varepsilon_0}(\Pi_\M(Y/\mu_\epsilon)))^{-1} | S, Z ] \Big )_+ + \big ( 1 + \tfrac{K + D_Z}{\tau} \big ) \frac{\varepsilon_0^2}{\sigma_\epsilon^2} \bigg \}\nonumber\\
    & \qquad \qquad \leq \inf_{\varepsilon_0 \in (0, \tau_\M/64]} \sup_{y \in \M} \bigg \{ \log \big ( \hat{\mu}_{\operatorname{data}}(B_{\varepsilon_0}(y))^{-1} \big )_+ + \big ( 1 + \tfrac{K + D_Z}{\tau} \big ) \frac{\varepsilon_0^2}{\sigma_\epsilon^2} \bigg \}\nonumber\\
    & \qquad \qquad \lesssim \Big ( 1 + \tfrac{K + D_Z}{\tau} \Big ) d^*.\label{eq:thm_8}
\end{align}
Indeed, this would require \(N \gtrsim (d^* + 1) c_\mu^{-2} (d^* \sigma_\epsilon^2)^{-d^*}\). If \(N\) does not satisfy this, then we instead set \(\varepsilon_0\) to be the smallest \(r\) such that \eqref{eq:lower_emp_meas_1} is satisfied, arriving at a quantity with \((c_\mu^2 N)^{-1/d^*} \lesssim \varepsilon_0^2 \lesssim (c_\mu^2 N)^{-1/d^*}\) whenever \(d^* > 0\), and \(\varepsilon^2_0 = 0\) whenever \(d^* = 0\). Thus, we obtain the bound,
\begin{align}
    &\inf_{\varepsilon_0 > 0} \bigg \{ \log \Big ( \E_{Y \sim k_Z} [ \hat{\mu}_{\operatorname{data}}(B_{\varepsilon_0}(\Pi_\M(Y/\mu_\epsilon)))^{-1} | S, Z ] \Big )_+ + \big ( 1 + \tfrac{K + D_Z}{\tau} \big ) \frac{\varepsilon_0^2}{\sigma_\epsilon^2} \bigg \} \lesssim 1 + \big ( 1 + \tfrac{K + D_Z}{\tau} \big ) \tfrac{(c_\mu^2 N)^{-1/d^*}}{\sigma_\epsilon^2} \mathbbm{1}_{d^* > 0}.\label{eq:thm_7}
\end{align}
We can then combine the bounds in \eqref{eq:thm_8} and \eqref{eq:thm_7}, to obtain that there exists a quantity \(C_2 > 0\) that depends only logarithmically on structural parameters and satisfies,
\begin{equation}\label{eq:thm_9}
    2\inf_{\varepsilon_0 > 0} \bigg \{ \log \Big ( \E[ \hat{\mu}_{\operatorname{data}}(B_{\varepsilon_0}(\Pi_\M(Y/\mu_\epsilon)))^{-1} | S, Z ] \Big )_+ + \big ( 1 + \tfrac{K + D_Z}{\tau} \big ) \frac{\varepsilon_0^2}{\sigma_\epsilon^2} \bigg \} \leq C_2 \bigg ( 1 + \Big ( 1 + \tfrac{D_Z}{\tau} \Big ) d^* \vee ((c_\mu^2 N)^{-\frac{1}{d^*}} \sigma_\epsilon^{-2}) \bigg ),
\end{equation}
where we have also used the fact that \(K \leq K_{\max} \leq \tau/96\).

Returning to bounding \eqref{eq:thm_11}, we use \eqref{eq:thm_10} and \eqref{eq:thm_9} to derive the upper bound,
\begin{align}
    &\frac{2}{\beta} \log \E[\exp (\beta | \E[\Delta \lse_\M(Z)|S, Z] |) | S]\nonumber\\
    & \quad \leq \frac{2}{\beta} \log \E \Big [ \exp \Big ( \tfrac{8 \beta K}{\tau} \Big ( 1 + 2\log \big ( \tfrac{\tau}{K} \big )_+ + \tfrac{(5K_{\max}^{1/2} K^{1/2} + D_Z)^2}{\sigma_\epsilon^2} \mathbbm{1}_{5 K_{\max} + D_Z \geq \mu_\epsilon \tau / 4} + C_2 \nonumber\\
    & \qquad \quad + C_2 \Big ( 1 + \tfrac{D_Z}{\tau} \Big ) d^* \vee \tfrac{(c_\mu^2 N)^{-\frac{1}{d^*}}}{\sigma_\epsilon^{2}} \Big ) \Big ) \Big | S \Big ]\nonumber\\
    & \quad \leq \frac{16 K}{\tau} \bigg ( 1 + 2 \log \big ( \frac{\tau}{K} \big ) + C_2 + C_2 d^* \vee \tfrac{(c_\mu^2 N)^{-\frac{1}{d^*}}}{\sigma_\epsilon^{2}} \bigg ) + \frac{1}{20\beta} \log \E \bigg [ \exp \bigg ( \tfrac{40\beta K}{\tau} \tfrac{C_2 D_Z}{\tau} d^* \vee \tfrac{(c_\mu^2 N)^{-\frac{1}{d^*}}}{\sigma_\epsilon^{2}} \bigg ) \bigg | S \bigg ]\nonumber\\
    & \quad \qquad + \frac{1}{5\beta} \log \E \bigg [ \exp \bigg ( \tfrac{10\beta K}{\tau} \frac{(5 K^{1/2} K_{\max}^{1/2} + D_Z)^2}{\sigma^2_\epsilon} \mathbbm{1}_{5 K_{\max} + D_Z \geq \mu_\epsilon \tau / 4} \bigg ) \bigg | S \bigg ],\label{eq:thm1_2}
\end{align}
where in the second inequality, we use H\"older's inequality.
We now bound the last two terms, starting with the second.

We utilise Lemma \ref{lem:man_conc_simple} which we apply with \(\varepsilon = \tau/64\). As a result of \eqref{eq:geometric_bias_3} and the assumed lower bound on \(N\), the assumptions of Lemma \ref{lem:man_conc_simple} are satisfied with \(\delta = c_\mu (\tau/64)^{d^*}/4\), and so, for any \(r^2 \geq 4 \sigma_\epsilon^2 d\),
\begin{align*}
    \prob(D_Z \geq r|S) \leq \exp \Big ( C - \tfrac{(r-\sqrt{2\sigma_\epsilon^2 d})^2}{4 \sigma_\epsilon^2} \Big ), \qquad C := \log(8) d + \frac{5K^2 + 2^{-10} \mu_\epsilon^2 \tau^2}{2\sigma_\epsilon^2} + 8 c_\mu^{-1} (\tau/64)^{-d^*}.
\end{align*}
Thus, for any \(c, R > 0\), we have the bound,
\begin{align*}
    \E \big [ \exp \big ( c D_Z \big ) \big | S \big ] &= \int_0^\infty \prob ( D_Z \geq \log(r)/c ) dr\\
    &\qquad \leq \exp(cR) + \int_{\exp(cR)}^\infty \prob ( D_Z \geq \log(r)/c ) dr\\
    &\qquad \leq \exp(cR) + \int_{\exp(cR)}^\infty \exp \bigg ( C - \frac{(\log(r)/c - \sqrt{2 \sigma_\epsilon^2 d})^2}{4 c^2 \sigma_\epsilon^2} \bigg ) dr.
\end{align*}
This is simplified using the change of variables,
\begin{align*}
    \int_{\exp(cR)}^\infty \exp \bigg ( C - \frac{(\log(r)/c - \sqrt{2 \sigma_\epsilon^2 d})^2}{4 c^2 \sigma_\epsilon^2} \bigg ) dr &= c \int_0^\infty \exp \bigg ( C - \frac{(u+R-\sqrt{2 \sigma_\epsilon^2 d})^2}{4 \sigma_\epsilon^2} + c(u + R) \bigg ) du.
\end{align*}
We then choose \(R := \sqrt{2 \sigma_\epsilon^2 d} + 16 c \sigma_\epsilon^2 + \sqrt{256 c^2 \sigma_\epsilon^4 + 32 \sigma_\epsilon^2 C}\) to further simplify the expression, obtaining,
\begin{align*}
    \int_{\exp(cR)}^\infty \exp \bigg ( C - \frac{(\log(r)/c - \sqrt{2 \sigma_\epsilon^2 d})^2}{32 c^2 \sigma_\epsilon^2} \bigg ) dr &= c \int_0^\infty \exp \bigg ( - \frac{u^2}{32 \sigma_\epsilon^2} + u \bigg ( c - \frac{R}{16 \sigma_\epsilon^2} \bigg ) + c \sqrt{2 \sigma_\epsilon^2 d} \bigg ) du\\
    &\leq c \exp(c \sqrt{2 \sigma_\epsilon^2 d}) \int_0^\infty \exp \bigg ( - \frac{u^2}{32 \sigma_\epsilon^2} \bigg ) du\\
    &= (32 \pi)^{1/2} \exp(c \sqrt{2 \sigma_\epsilon^2 d}) c \sigma_\epsilon,
\end{align*}
where the last line follows from the Gaussian integral. With this, we obtain a bound on the MGF:
\begin{align*}
    \log \E \big [ \exp \big ( c D_Z \big ) \big | S \big ] &\leq \log \big ( \exp(cR) + (32 \pi)^{1/2} \exp(c \sqrt{2 \sigma_\epsilon^2 d}) c \sigma_\epsilon \big )\nonumber\\
    &\leq cR + \log \big (1 + (32 \pi)^{1/2} \exp(c \sqrt{2 \sigma_\epsilon^2 d} - cR) c \sigma_\epsilon \big )\nonumber\\
    &\leq cR + (32 \pi)^{1/2} c \sigma_\epsilon\nonumber\\
    &\lesssim \sqrt{\sigma_\epsilon^2 d} + c^2 \sigma_\epsilon^2 + c(K + \tau) + c \sigma_\epsilon (\sqrt{d} + c_\mu^{-1/2} (\tau/64)^{-d^*/2}).
\end{align*}
Substituting values for \(c\) and \(R\), we obtain the bound,
\begin{align}
    \frac{1}{20\beta} &\log \E \bigg [ \exp \bigg ( \tfrac{40\beta K}{\tau} \tfrac{C_2 D_Z}{\tau} d^* \vee \tfrac{(c_\mu^2 N)^{-\frac{1}{d^*}}}{\sigma_\epsilon^{2}} \bigg ) \bigg | S \bigg ]\nonumber\\
    &\lesssim \sqrt{\sigma_\epsilon^2 d} + \frac{K}{\tau^2} \bigg ( d^* \vee \tfrac{(c_\mu^2 N)^{-\frac{1}{d^*}}}{\sigma_\epsilon^{2}} \bigg ) \bigg ( \tau + \sigma_\epsilon (\sqrt{d} + c_\mu^{-1/2} (\tau/64)^{-d^*/2}) + \sigma_\epsilon^2 \frac{K}{\tau^2} \bigg ( d^* \vee \tfrac{(c_\mu^2 N)^{-\frac{1}{d^*}}}{\sigma_\epsilon^{2}} \bigg ) \bigg )\nonumber\\
    &\lesssim \sqrt{\sigma_\epsilon^2 d} + \frac{K}{\tau} \bigg ( d^* \vee \tfrac{(c_\mu^2 N)^{-\frac{1}{d^*}}}{\sigma_\epsilon^{2}} \bigg ) \bigg ( 1 + \frac{\sigma_\epsilon}{\tau} (\sqrt{d} + c_\mu^{-1/2} (\tau/64)^{-d^*/2}) + \frac{K}{\tau^3} \bigg ( (\sigma_\epsilon^2 d^*) \vee \tau^2 \bigg ) \bigg ).\label{eq:thm1_7}
\end{align}
Thus, as soon as we require that,
\begin{equation*}
    \sigma_\epsilon^2 \leq \Big ( \frac{K^2}{d\tau^2} \Big ) \wedge \frac{\tau^2}{(\sqrt{d} + c_\mu^{-1/2} (\tau/64)^{-d^*/2})^2} \wedge \Big ( \frac{\tau^2}{d^*} \Big ) =: \sigma_{\max, 1}^2(d, d^*, \tau, K, c_\mu),
\end{equation*}
we obtain the bound,
\begin{equation}\label{eq:thm1_4}
    \frac{1}{20\beta} \log \E \bigg [ \exp \bigg ( \tfrac{40\beta K}{\tau} \tfrac{C_2 D_Z}{\tau} d^* \vee \tfrac{(c_\mu^2 N)^{-\frac{1}{d^*}}}{\sigma_\epsilon^{2}} \bigg ) \bigg | S \bigg ] \lesssim \frac{K}{\tau} \bigg ( (d^* + 1) \vee \tfrac{(c_\mu^2 N)^{-\frac{1}{d^*}}}{\sigma_\epsilon^{2}} \bigg ).
\end{equation}

We now bound the third term of \eqref{eq:thm1_2}. We once again use the integral formula for the expectation to obtain,
\begin{align}
    \E \bigg [ & \exp \bigg ( \tfrac{10\beta K}{\tau} \frac{(5 K^{1/2} K_{\max}^{1/2} + D_Z)^2}{\sigma^2_\epsilon} \mathbbm{1}_{5 K_{\max} + D_Z \geq \mu_\epsilon \tau / 4} \bigg ) \bigg | S \bigg ]\nonumber\\
    &\qquad = \int_0^\infty \prob \bigg ( \exp \bigg ( \tfrac{10\beta K}{\tau} \frac{(5 K^{1/2} K_{\max}^{1/2} + D_Z)^2}{\sigma^2_\epsilon} \mathbbm{1}_{5 K_{\max} + D_Z \geq \mu_\epsilon \tau / 4} \bigg ) \geq r \bigg | S \bigg ) dr\nonumber\\
    &\qquad \leq \int_0^\infty \prob \bigg ( (5 K_{\max} + D_Z)^2 \mathbbm{1}_{5 K_{\max} + D_Z \geq \mu_\epsilon \tau / 4} \geq  \frac{\sigma^2_\epsilon \tau}{10\beta K} \log(r) \bigg | S \bigg ) dr\nonumber\\
    &\qquad = 1 + \Big ( \exp \Big ( \tfrac{10\beta K}{\sigma^2_\epsilon \tau}(\mu_\epsilon \tau/4)^2 \Big ) - 1 \Big ) \prob ( D_Z \geq \mu_\epsilon \tau/4 - 5 K_{\max} | S ) \nonumber\\
    &\qquad \qquad + \int_{\exp \big ( \tfrac{10\beta K}{\sigma^2_\epsilon \tau}(\mu_\epsilon \tau/4)^2 \big )}^\infty \prob \bigg ( (5 K^{1/2} K_{\max}^{1/2} + D_Z)^2 \geq \frac{\sigma^2_\epsilon \tau}{10\beta K} \log(r) \bigg | S \bigg ) dr.\label{eq:thm1_6}
\end{align}
We further bound the last term using \(K \leq K_{\max}\) along with a change of variables, to derive,
\begin{align*}
    &\int_{\exp \big ( \tfrac{10\beta K}{\sigma^2_\epsilon \tau}( \mu_\epsilon \tau/4)^2 \big )}^\infty \prob \bigg ( (5 K_{\max} + D_Z)^2 \geq \frac{\sigma^2_\epsilon \tau}{10\beta K} \log(r) \bigg | S \bigg ) dr\\
    &\qquad = \frac{20\beta K} {\sigma^2_\epsilon \tau}\int_{\mu_\epsilon \tau/4}^\infty \prob  ( 5 K_{\max} + D_Z \geq u | S ) \exp \bigg ( \frac{10\beta K}{\sigma^2_\epsilon \tau} u^2 \bigg ) u du\\
    &\qquad \leq \frac{20\beta K} {\sigma^2_\epsilon \tau}\int_{\mu_\epsilon \tau/4}^\infty \exp \bigg ( C - \frac{1}{4 \sigma_\epsilon^2} \big ( u - 5 K_{\max} - \sqrt{2 \sigma_\epsilon^2 d} \big )^2 + \frac{10\beta K}{\sigma^2_\epsilon \tau} u^2 \bigg )u du
\end{align*}
Setting \(c = 1 - \frac{40\beta K}{\tau} > 1/2\), we can simplify this expression by,
\begin{align*}
    &\frac{20\beta K} {\sigma^2_\epsilon \tau}\int_{\mu_\epsilon \tau/4}^\infty \exp \bigg ( C - \frac{1}{4 \sigma_\epsilon^2} \big ( u - 5 K_{\max} - \sqrt{2 \sigma_\epsilon^2 d} \big )^2 + \frac{10\beta K}{\sigma^2_\epsilon \tau} u^2 \bigg )u du\\
    &\qquad = \frac{20\beta K}{\sigma^2_\epsilon \tau}\int_{\mu_\epsilon \tau/4}^\infty \exp \bigg ( C - \frac{c}{4\sigma_\epsilon^2} \bigg ( u - (5 K_{\max} + \sqrt{2 \sigma_\epsilon^2 d}) c^{-1} \bigg )^2 + \frac{1-c^{-1}}{4\sigma_\epsilon^2} (5 K_{\max} + \sqrt{2 \sigma_\epsilon^2 d})^2 \bigg )u du\\
    &\qquad \leq \frac{20\beta K}{\sigma^2_\epsilon \tau} (4 \sigma_\epsilon^2 \pi/c)^{1/2} \exp \bigg ( C - \frac{c}{4\sigma_\epsilon^2} \bigg ( \mu_\epsilon \tau/4 - (5 K_{\max} + \sqrt{2 \sigma_\epsilon^2 d})c^{-1} \bigg )^2 \bigg ) \sqrt{2 \sigma_\epsilon^2/c},
\end{align*}
where the final inequality follows from the concentration of the Gaussian random variable. Using the fact that \(K_{\max} \leq \tau/96\), \(c^{-1} \leq 2\), and assuming that \(\sigma_\epsilon^2 \leq \frac{K_{\max}^2}{2d}\),
\begin{align*}
    &\frac{20\beta K} {\sigma^2_\epsilon \tau}\int_{\mu_\epsilon \tau/4}^\infty \exp \bigg ( C - \frac{1}{4 \sigma_\epsilon^2} \big ( u - 5 K_{\max} - \sqrt{2 \sigma_\epsilon^2 d} \big )^2 + \frac{10\beta K}{\sigma^2_\epsilon \tau} u^2 \bigg )u du \\
    & \qquad \leq \frac{40\sqrt{2 \pi}\beta K}{\tau c} \exp \bigg ( C - \frac{1}{8\sigma_\epsilon^2} \big ( \mu_\epsilon \tau/8 \big )^2 \bigg )
\end{align*}
Similarly, we can bound the second term of \eqref{eq:thm1_6}, in total, obtaining,
\begin{align*}
    \frac{1}{5\beta} \log & \E \bigg [ \exp \bigg ( \tfrac{10\beta K}{\tau} \frac{(5 K^{1/2} K_{\max}^{1/2} + D_Z)^2}{\sigma^2_\epsilon} \mathbbm{1}_{5 K^{1/2} K_{\max}^{1/2}+D_Z \geq \mu_\epsilon \tau/4} \bigg ) \bigg | S \bigg ]\\
    &\lesssim \Big ( \exp \Big ( \tfrac{10\beta K}{\sigma^2_\epsilon \tau}(\mu_\epsilon \tau/4)^2 \Big ) - 1 \Big ) \prob ( D_Z \geq \mu_\epsilon \tau/4 - 5 K_{\max} | S ) \\
    &\qquad \qquad + \int_{\exp \big ( \tfrac{10\beta K}{\sigma^2_\epsilon \tau}(\mu_\epsilon \tau/4)^2 \big )}^\infty \prob \bigg ( (5 K^{1/2} K_{\max}^{1/2} + D_Z)^2 \geq \frac{\sigma^2_\epsilon \tau}{10\beta K} \log(r) \bigg | S \bigg ) dr\\
    &\lesssim \exp \bigg ( C + \tfrac{10\beta K}{\sigma^2_\epsilon \tau}(\mu_\epsilon \tau/4)^2 - \frac{1}{4\sigma_\epsilon^2} \big ( 3 \mu_\epsilon \tau/16 \big )^2 \bigg ) + \frac{K}{\tau} \exp \bigg ( C - \frac{1}{8\sigma_\epsilon^2} \big ( \mu_\epsilon \tau/8 \big )^2 \bigg )\\
    &\lesssim \exp \bigg ( C - \frac{1}{8\sigma_\epsilon^2} \big ( \mu_\epsilon \tau/8 \big )^2 \bigg ).
\end{align*}
Thus, by requiring that, 
\begin{equation*}
    \sigma_\epsilon^2 \leq 2^{10} \Big ( \log(8) d + 8 c_\mu^{-1} (\tau/64)^{-d^*} \Big ) \log (\tau/K) =: \sigma^2_{\max, 2}(d^*, d, \tau, K, c_\mu),
\end{equation*}
we obtain the upper bound,
\begin{equation}
    \frac{1}{5\beta} \log \E \bigg [ \exp \bigg ( \tfrac{10\beta K}{\tau} \frac{(5 K^{1/2} K_{\max}^{1/2} + D_Z)^2}{\sigma^2_\epsilon} \mathbbm{1}_{5 K^{1/2} K_{\max}^{1/2}+D_Z \geq \mu_\epsilon \tau/4} \bigg ) \bigg | S \bigg ] \lesssim \frac{K}{\tau}.\label{eq:thm1_5}
\end{equation}
Thus, by substituting \eqref{eq:thm1_4} and \eqref{eq:thm1_5} into \eqref{eq:thm1_2} we obtain that, with a probability of at least \(1-\delta\), we have,
\begin{equation*}
    \frac{2}{\beta} \log \E[\exp (\beta | \E[\Delta \lse_\M(Z)|S, Z] |) | S] \lesssim \frac{K}{\tau} \max \bigg \{ d^* + 1, \frac{(c_\mu^2 N)^{-\frac{1}{d^*}}}{\sigma_\epsilon^{2}} \bigg \},
\end{equation*}
completing the proof. We collect the required upper bounds on \(\sigma_\epsilon^2\), which when combined with the fact that \(\sigma_\epsilon^2 \leq \epsilon\), leads to the sufficient condition,
\begin{equation}\label{eq:epsilon_min}
    \epsilon \leq \sigma_{\max, 1}^2(d, d^*, \tau, K, c_\mu) \wedge \sigma^2_{\max, 2}(d^*, d, \tau, K, c_\mu) \wedge \frac{K^2_{\max}}{2d} \wedge \frac{\tau^2}{(64)^2d^*} =: \epsilon_{\max}(d, d^*, \tau, K, K_{\max}, c_\mu).
\end{equation}
\end{proof}

\subsection{Proof of Theorem \ref{thm:geometric_bias}}

\begin{proof}
Let \(\M \in \mathbbm{M}_\mu\) and let \(\tau_\M, d^*_\M, K_\M, K_{\max, \M}, \gap\) and \(c_{\mu, \M}\) be as defined in Section \ref{sec:geometric_bias_theory} and we assume \(\gap < \infty\) and \(K_{\max, \M} \leq \tau/96\). Using the same argument that produced \eqref{eq:thm_10}, we obtain the bound,
\begin{equation*}
    D_2(\hat{p}_\epsilon^k \| \hat{p}_\epsilon^{k^\M}) \leq 2 \log \E[\exp ( | \E[\Delta \lse_\M(Z)|S, Z] |) | S],
\end{equation*}
where \(Z \sim \hat{p}_\epsilon^{k^\M}(dx)\). Using Lemma \ref{lem:tighter_delta_bound}, we obtain pointwise bound,
\begin{align*}
    &\E[|\Delta \lse_\M(Z)|S, Z]\\
    & \qquad \leq \frac{8C_1 K_\M}{\tau_\M} \bigg ( 1 + 2\log \big ( \tfrac{\tau_\M}{K_\M} \big ) + \frac{(5 K_{\max, \M}^{1/2} K_\M^{1/2} + D_Z)^2}{\sigma_\epsilon^2} \mathbbm{1}_{D_Z \leq \mu_\epsilon \tau/4} + \Big (\gap + \tau_\M + K_\M + D_Z \Big ) \frac{12 \gap}{\sigma_\epsilon^2}\\
    & \qquad \qquad + 2\inf_{\varepsilon_0 > 0} \bigg \{ \log \Big ( \E [ \hat{\mu}_{\operatorname{data}}(B_{\varepsilon_0}(\Pi_\M(Y/\mu_\epsilon)))^{-1} | S, Z ] \Big ) + \big ( 1 + \tfrac{K_\M + D_Z}{\tau} \big ) \frac{\varepsilon_0^2}{\sigma_\epsilon^2} \bigg \} \bigg ).
\end{align*}
To bound the term with the infimum, we proceed similarly to the proof of Theorem \ref{thm:density_ratio_bound_simplified}, bounding it using Lemma \ref{lem:lower_emp_meas}. Given that \(N\) is sufficiently large, we obtain from this lemma that,
\begin{equation*}
    \sup_{y \in \M} \hat{\mu}_{\operatorname{data}}(B_{\varepsilon_0}(x))^{-1} \leq c_\mu^{-1} \varepsilon_0^{-d^*} / 4, \qquad \text{ for all } r \in [\sqrt{d^*\sigma_\epsilon^2} \wedge (K/2), \tau],
\end{equation*}
with probability \(1-\delta\). With this we can choose \(\varepsilon_0^2 = d^* \sigma_\epsilon^2\) to obtain the upper bound,
\begin{equation*}
    2\inf_{\varepsilon_0 > 0} \bigg \{ \log \Big ( \E_{Y \sim k_Z} [ \hat{\mu}_{\operatorname{data}}(B_{\varepsilon_0}(\Pi_\M(Y/\mu_\epsilon)))^{-1} | S, Z ] \Big ) + \big ( 1 + \tfrac{K_\M + D_Z}{\tau_\M} \big ) \frac{\varepsilon_0^2}{\sigma_\epsilon^2} \bigg \} \leq C_3\bigg ( 1 + \Big ( 1 + \tfrac{K_\M + D_Z}{\tau_\M} \Big ) d_\M^* \bigg ),
\end{equation*}
for some quantity \(C_3 > 0\) which depends only logarithmically on structural parameters. With this, we obtain the bound,
\begin{align}
    & 2\log \E[\exp ( | \E[\Delta \lse_\M(Z)|S, Z] |) | S] \nonumber\\
    & \qquad \lesssim \frac{K_\M}{\tau_\M} \bigg ( 1 + \Big (\gap + \tau_\M + K_\M\Big ) \frac{\gap}{\sigma_\epsilon^2} + d^* \bigg ) + \log \E \bigg [ \exp \bigg ( \tfrac{40 K_\M}{\tau_\M} \Big ( \tfrac{C_3}{\tau_\M} + \tfrac{\gap^2}{\sigma^2_\epsilon} \Big ) d_\M^* D_Z \bigg ) \bigg | S \bigg ]\nonumber\\
    & \qquad \qquad + \log \E \bigg [ \exp \bigg ( \tfrac{10 K}{\tau} \frac{(5 K^{1/2} K_{\max}^{1/2} + D_Z)^2}{\sigma^2_\epsilon} \mathbbm{1}_{5 K_{\max} + D_Z \geq \mu_\epsilon \tau / 4} \bigg ) \bigg | S \bigg ].\label{eq:geom_2}
\end{align}

We bound the second and third terms similarly to as in the proof of Theorem \ref{thm:density_ratio_bound_simplified}. However, the concentration of \(D_Z\) differs slightly due to the additional error from \(\gap > 0\). We apply Lemma \ref{lem:man_conc_simple} with \(\varepsilon = K/2\) to obtain that for any \(r^2 \geq 4 \sigma_\epsilon^2 d\),
\begin{align*}
    \prob(D_Z \geq r|S) \leq \exp \Big ( C - \tfrac{(r-\sqrt{2 \sigma_\epsilon^2 d})^2}{4 \sigma_\epsilon^2} \Big ),
\end{align*}
for all \(r^2 \geq 4 \sigma_\epsilon^2 d\), where the constant is given by
\begin{align*}
    C &= \big ( \tfrac{1}{8} + \log(32) \big ) d + \tfrac{5(K + \mu_\epsilon \gap)^2 + K^2}{2\sigma_\epsilon^2} + 8 c_\mu^{-1} (\tau/64)^{-d^*}\\
    &\leq  \big ( \tfrac{1}{8} + \log(32) \big ) d + \tfrac{6(K + \mu_\epsilon \gap)^2}{2\sigma_\epsilon^2} + 8 c_\mu^{-1} (\tau/64)^{-d^*}.
\end{align*}
To bound the third term of \eqref{eq:geom_2}, we can directly use the argument in the proof of Theorem \ref{thm:density_ratio_bound_simplified} that produces \eqref{eq:thm1_6}. Indeed, taking \(\sigma_\epsilon^2\) sufficiently small, we obtain,
\begin{equation*}
    \log \E \bigg [ \exp \bigg ( \tfrac{10 K_\M}{\tau_\M} \tfrac{D_Z^2}{\sigma^2_\epsilon} \mathbbm{1}_{D_Z \leq \mu_\epsilon \tau_\M/4} \bigg ) \bigg | S \bigg ] \lesssim \frac{K_\M}{\tau_\M}.
\end{equation*}
Similarly, we can borrow the argument that produces \eqref{eq:thm1_7}, to obtain
\begin{align*}
    &\log \E \bigg [ \exp \bigg ( \tfrac{40 K_\M}{\tau_\M} \Big ( \tfrac{C_3}{\tau_\M} + \tfrac{\gap^2}{\sigma^2_\epsilon} \Big ) d_\M^* D_Z \bigg ) \bigg | S \bigg ] \\
    &\lesssim \sqrt{\sigma_\epsilon^2 d} + c^2 \sigma_\epsilon^2 + cK_\M + c \sigma_\epsilon (\sqrt{d} + c_\mu^{-1/2} (\tau/64)^{-d^*/2})\\
    &\lesssim \sqrt{\sigma_\epsilon^2 d} + \frac{K_\M}{\tau_\M} \Big ( \tfrac{1}{\tau_\M} + \tfrac{\gap^2}{\sigma^2_\epsilon} \Big ) d_\M^* \bigg ( K_\M + \gap +  \sigma_\epsilon (\sqrt{d} + c_\mu^{-1/2} (\tau/64)^{-d^*/2}) + \sigma_\epsilon^2 \frac{K_\M}{\tau_\M} \Big ( \tfrac{1}{\tau_\M} + \tfrac{\gap^2}{\sigma^2_\epsilon} \Big ) d_\M^* \bigg ).
\end{align*}
With this, we obtain that as soon as \(\sigma_\epsilon^2\) is sufficiently small, we obtain,
\begin{equation*}
    \log \E \bigg [ \exp \bigg ( \tfrac{40 K_\M}{\tau_\M} \Big ( \tfrac{C_3}{\tau_\M} + \tfrac{\gap^2}{\sigma^2_\epsilon} \Big ) d_\M^* D_Z \bigg ) \bigg | S \bigg ] \lesssim \frac{K_\M^2}{\tau_\M} \bigg ( \frac{1}{\tau_\M} + \frac{\gap^2}{\sigma^2_\epsilon} \bigg ) d_\M^*.
\end{equation*}
Therefore, returning to \eqref{eq:geom_2} we obtain the bound,
\begin{align*}
    &2\log \E[\exp ( | \E[\Delta \lse_\M(Z)|S, Z] |) | S] \\
    & \qquad \leq \frac{K_\M}{\tau_\M} \bigg ( 1 + \Big (\gap + \tau_\M + K_\M\Big ) \frac{\gap}{\sigma_\epsilon^2} + d^* \bigg ) + \frac{K_\M^2}{\tau_\M} \bigg ( \frac{1}{\tau_\M} + \frac{\gap^2}{\sigma^2_\epsilon} \bigg ) d_\M^*\\
    & \qquad \lesssim \frac{K_\M (d^*+1)}{\tau_\M} + \frac{K_\M \gap}{\sigma_\epsilon^2} \bigg (1 + \frac{K_\M d^*_\M}{\tau_\M} \gap \bigg ).
\end{align*}
\end{proof}

\section{Proofs of other results}
This appendix contains the proofs for Corollary \ref{prop:manifold_concentration} and Proposition \ref{prop:distributed_on_manifold}. These results further elucidate the desirable properties of log-domain smoothing, specifically regarding how it concentrates mass near the manifold and distributes mass along it.

\subsection{Proof of Corollary \ref{prop:manifold_concentration}}

To prove this corollary, we utilise Lemma \ref{lem:man_conc_simple}. As remarked in the proof of Theorem \ref{thm:density_ratio_bound_simplified}, there exists a \(\varepsilon_0 \in [0, \tau/16]\) such that with a probability of at least \(1-\delta\), we obtain with probability bound,
\begin{align*}
    \prob_{Y \sim \hat{p}^{k^\M}_\epsilon} ( \operatorname{dist}(Y, \M) \geq r|S) \leq \exp \Big ( C - \tfrac{(r-\sqrt{2 \sigma_\epsilon^2 d})^2}{4 \sigma_\epsilon^2} \Big ), \qquad C := \log(8) d + \tfrac{5K^2 + 4\mu_\epsilon^2 \varepsilon_0^2}{2\sigma_\epsilon^2} + 8 c_\mu^{-1} \varepsilon_0^{-d^*},
\end{align*}
for any \(r^2 \geq 4 \sigma_\epsilon^2 d\), where \(\varepsilon_0\) must be chosen according to the size of \(N\) and the condition in \eqref{eq:lower_emp_meas_1}. In particular, we have the bound,
\begin{equation*}
    \prob_{Y \sim \hat{p}^{k^\M}_\epsilon} ( \operatorname{dist}(Y, \M) \geq r + \sqrt{2 \sigma_\epsilon^2 d} + \sqrt{4 \sigma_\epsilon^2 C^2} | S) \leq \exp \Big ( - \tfrac{r^2}{4 \sigma_\epsilon^2} \Big ).
\end{equation*}
When \(N\) is large enough, we choose the optimal value of \(\varepsilon_0^2 = \sigma_\epsilon^{\frac{4}{d^*+2}} c_\mu^{-\frac{2}{d^*+2}}\), so that,
\begin{equation*}
    \sigma_\epsilon^2 \bigg ( \tfrac{4\mu_\epsilon^2 \varepsilon_0^2}{2\sigma_\epsilon^2} + 8 c_\mu^{-1} \varepsilon_0^{-d^*} \bigg ) \lesssim \sigma_\epsilon^{\frac{4}{d^*+2}} c_\mu^{-\frac{2}{d^*+2}}.
\end{equation*}
If \(N\) is not sufficiently large, we choose \(\varepsilon_0\) to be the smallest value such that \eqref{eq:lower_emp_meas_1} is satisfied, yielding \((c_\mu^2 N)^{-1/d^*} \lesssim \varepsilon_0^2 \lesssim (c_\mu^2 N)^{-1/d^*}\) and also,
\begin{equation*}
    \sigma_\epsilon^2 \bigg ( \tfrac{4\mu_\epsilon^2 \varepsilon_0^2}{2\sigma_\epsilon^2} + 8 c_\mu^{-1} \varepsilon_0^{-d^*} \bigg ) \lesssim (c_\mu^2 N)^{-1/d^*}.
\end{equation*}
With these choices of \(\varepsilon_0\), we obtain the bound,
\begin{equation*}
    \sqrt{4 \sigma_\epsilon^2 C} \lesssim \sqrt{\sigma_\epsilon^2 d} + K + \max\{(c_\mu^2 N)^{-1/2d^*}, \sigma_\epsilon^{\frac{2}{d^*+2}} c_\mu^{-\frac{1}{d^*+2}}\}.
\end{equation*}
Next we transfer this concentration property to the measure \(\hat{p}_\epsilon^{k}\) by utilising the bound R\'enyi divergence bound in Theorem \ref{thm:density_ratio_bound_simplified}. By utilising Lemma 21 of \cite{Chewi2022-of}, we obtain the bound,
\begin{align*}
    \prob_{Y \sim \hat{p}^{k}_\epsilon} \Big ( \operatorname{dist}(Y, \M) \geq r + \sqrt{2 \sigma_\epsilon^2 d} + \sqrt{4 \sigma_\epsilon^2 C^2} + \sqrt{4 \sigma_\epsilon^2 D_2(\hat{p}^{k}_\epsilon \| \hat{p}^{k^\M}_\epsilon)} \Big | S \Big ) \leq 2 \exp \Big ( - \tfrac{r^2}{8 \sigma_\epsilon^2} \Big ).
\end{align*}
Finally we use Theorem \ref{thm:density_ratio_bound_simplified} to bound the \(2\)-R\'{e}nyi divergence with probability at least \(1-\delta\).

\subsection{Proof of Proposition \ref{prop:distributed_on_manifold}}
Let \(x \in \M\), choose and let \(\gamma^i_t\) be the shortest constant velocity path on the manifold \(\M\) connecting points \(x\) and \(x_i\). Choose \(i\) such that \(\int \|\partial_t \gamma_t\| dt\) is minimised, i.e. the closest example according to the metric on the manifold. Then the density ratio at \(x\) compared with \(x_i\) can be expressed as
\begin{align*}
    \frac{\hat{p}^{k}_\epsilon(x)}{\hat{p}^{k}_\epsilon(x_i)} &= \exp \bigg ( \int \log \hat{p}_\epsilon(y) k_{x}(dy) - \int \log \hat{p}_\epsilon(y) k_{x_i}(dy) \bigg )\\
    &= \exp \bigg ( \int \int \Big ( \log \hat{p}_\epsilon(y) + \frac{d}{2}\log(2\pi\sigma_\epsilon^2) \Big ) \langle \nabla_x k_{x}(y)\vert_{x=\gamma_t}, \partial_t \gamma_t \rangle dy dt \bigg ).
\end{align*}
We further control this using the Fisher information matrix,
\begin{equation*}
    \mathcal{I}(x) = \int (\nabla_x \log k_x(y)) (\nabla_x \log k_x(y))^T k_x(dy),
\end{equation*}
and the quantity \(F = \sup_{x \in \M} \sup_{v \in N_x \M}  \frac{v^T \mathcal{I}(x) v}{\|v\|^2}\). From the Cauchy-Schwarz inequality, we obtain the following bound:
\begin{align*}
    \int \Big ( \log & \hat{p}_\epsilon(y) + \frac{d}{2}\log(2\pi\sigma_\epsilon^2) \Big ) \langle \nabla_x k_{x}(y)\vert_{x=\gamma_t}, \partial_t \gamma_t \rangle dy\\
    &\leq \bigg ( \int |\langle \nabla_x \log k_{x}(y)\vert_{x=\gamma_t}, \partial_t \gamma_t \rangle|^2 k_{x}(dy) \bigg )^{1/2} \bigg ( \int \Big ( \log \hat{p}_\epsilon(y) + \frac{d}{2}\log(2\pi\sigma_\epsilon^2) \Big )^2 k_{x}(dy) \bigg )^{1/2}\\
    &\leq \Big ( (\partial_t \gamma_t)^T \mathcal{I}(x) (\partial_t \gamma_t) \Big )^{1/2} \bigg ( \int \Big ( \log \hat{p}_\epsilon(y) + \frac{d}{2}\log(2\pi\sigma_\epsilon^2) \Big )^2 k_{x}(dy) \bigg )^{1/2}\\
    &\leq F^{1/2} \bigg ( \int \Big ( \log \hat{p}_\epsilon(y) + \frac{d}{2}\log(2\pi\sigma_\epsilon^2) \Big )^2 k_{x}(dy) \bigg )^{1/2}.
\end{align*}
Using the argument in the proof of Corollary \ref{prop:manifold_concentration}, we can choose \(\varepsilon_0 \in [0, \tau/64]\) so that,
\begin{equation*}
    2\mu_\epsilon^2\varepsilon_0^2 + 8 \sigma_\epsilon^2 \varepsilon_0^{-d^*} \lesssim \max\{(c_\mu^2 N)^{-1/d^*}, \sigma_\epsilon^{\frac{4}{d^*+2}} c_\mu^{-\frac{2}{d^*+2}}\}.
\end{equation*}
Thus, borrowing the argument from \eqref{eq:log_bound} in the proof of Lemma \ref{lem:man_conc_simple}, we have that for all \(y \in \R^d\),
\begin{align*}
    \Big | \log \hat{p}_\epsilon(y) + \frac{d}{2}\log(2\pi\sigma_\epsilon^2) \Big | &\lesssim \frac{(\dist(y, \M^\epsilon) + K)^2}{\sigma_\epsilon^2} + \max\{(c_\mu^2 N)^{-1/d^*}, \sigma_\epsilon^{\frac{4}{d^*+2}} c_\mu^{-\frac{2}{d^*+2}}\}.
\end{align*}
Since \(x \in \M\), we also have the bound,
\begin{align*}
    \E_{Y \sim k_x}[(\dist(Y, \M^\epsilon) + K)^4]^{1/4} &\leq \E[(\dist(y, \M^\epsilon) - \dist(x, \M^\epsilon)^4]^{1/4} + K\\
    &\leq 2 K_{\max}^{1/2} K^{1/2}.
\end{align*}
In particular, there exists a quantity \(C \lesssim 1\), such that the following bound on the density ratio holds:
\begin{align*}
    \frac{\hat{p}^{k}_\epsilon(x)}{\hat{p}^{k}_\epsilon(x_i)} &\leq \exp \bigg ( \frac{C F (K K_{\max} + (c_\mu^2 N)^{-1/2d^*})}{\sigma_\epsilon^2} \int \|\partial_t \gamma_t\| dt \bigg ).
\end{align*}

Finally, we must control the distance of the path \(\gamma_t\). We recall from the definition of \(N_{\min}(\delta)\) in \eqref{eq:N_min} and Lemma \ref{lem:lower_emp_meas}, that whenever \(N \geq N_{\min}(\delta)\), we obtain
\begin{equation*}
    \prob \bigg ( \inf_{x \in M} \hmud(B_{\tau/2}(x)) \geq c_\mu \frac{(\tau/2)^{d^*}}{4} \bigg ) \geq 1 - \delta.
\end{equation*}
From this, it follows that with high probability, \(\inf_{i \in [N]} \|x - x_i\| \leq \tau/2\). Once we combine this with Lemma \ref{lem:geo_to_euc}, we obtain the bound,
\begin{equation*}
    \int \|\partial_t \gamma_t\| dt \leq 2 \|x - x_i\|,
\end{equation*}
completing the proof of the proposition.

\section{Experimental details} \label{app:experiment_details}

In this section, we provide detailed descriptions of the experimental settings used in the paper.

\subsection{2-dimensional circle example}\label{app:circle_details}

The plots in \Cref{fig:2d_circle_smoothing} illustrate the trade-off that score-smoothing can provide for generalisation, to complement the theoretical results and discussion in \cref{sec:theory-manifold-adapted-smoothing}. We consider an empirical dataset of 12 uniformly spaced points on the unit circle, and generate samples using the smoothed score function with an isotropic Gaussian kernel. We use a variance exploding diffusion model with $T=9$ and a geometric noise schedule, an Euler-Maruyama discretisation with 100 steps, and 1000 samples in the smoothing evaluation. In \cref{fig:2d_circle_smoothing} we show how the resulting samples behave for different smoothing parameter $\sigma$. Too little smoothing generates only training data, while too much smoothing causes generated samples to move towards the centre of the circle. There is a good choice of smoothing that balances between the two, promoting generalisation along the manifold (a phenomenon noticed by \citet{scarvelis2023closedformdiffusionmodels}).

To further illustrate this trade-off, we also plot how the population negative log-likelihood changes as the degree of smoothing increases, averaged over 1000 points on the true circular manifold. We recall that one can calculate the log-likelihood of a point by integrating the divergence of the probability-flow ODE drift function along the probability flow ODE trajectory \citep{chen18_neuralodes}. In our case, the drift of the probability flow ODE is a smoothed empirical score function.
As we are considering isotropic Gaussian smoothing, the divergence and the kernel convolution can be interchanged, allowing us to compute the log-likelihood by integrating the smoothed divergence of the empirical score function. The resulting plot exhibits a U-shape, clearly demonstrating the generalisation trade-off that arises from varying the smoothing level.

\subsection{Comparing Gaussian smoothing and KDE}
\label{app:KDE_vs_smooth_experiments}
We here describe the experimental setup used in \cref{sec:KDE_vs_smooth_experiments}.

\paragraph{VAE}
We train the VAE on 10,000 samples from the MNIST database.
The VAE uses 16 initial feature channels, with scaling multiples of $(1,2,2,2)$ during downsampling, a convolutional kernel size of 3, a dropout rate of 0.1, and 4 groups for group normalisation. It maps into a 32-dimensional latent space. It is trained for 10,000 training steps with a batch size of 64, using the Adam optimiser \citep{kingma15_adam} with learning rate 1e-3 and default parameters 0.9, 0.999.

\paragraph{Dataset construction} We use the remaining 60,000 points not used to train the VAE. The latent representations of the individual classes comprise lower-dimensional structures in the space, which we verify in \cref{fig:PCA_explained_variance}. To show this, for a particular class we map all points to the latent space. We then pick one of the latent points $z$, and look at its 50 closest neighbours $z_i$. We perform a PCA decomposition on the vectors $z - z_i$ to analyse their local structure. In \cref{fig:PCA_explained_variance} we plot the cumulative explained variance, and observe that most of the variance is captured by fewer than the full 32 dimensions, suggesting that the latent representations for a particular class lie on a lower-dimensional manifold within the latent space.

We restrict to considering the digit 4, of which there are 5842 samples. We use these points to approximate the `true' manifold $\mathcal{M}$, and randomly choose 100 points for use as the empirical dataset.

\paragraph{Experiment hyperparameters}
We use a variance-exploding diffusion model with T = 9.0, a geometric noise schedule, and 100 generation steps with an Euler-Maruyama discretisation scheme. We generate 500 samples to calculate the $L_2$ distances reported in \cref{fig:mnist_KDE_smoothing_L2_4s}. For the isotropic Gaussian kernel, we used smoothing with standard deviations $\sigma \in \{0.04, 0.05, 0.06, 0.07, 0.08, 0.09, 0.1\}$, and we use 1000 smoothing samples at each generation step. For KDE, we use $\sigma \in \{0.006, 0.01, 0.014, 0.018, 0.022, 0.026\}$, which were chosen to induce comparable average distances to the dataset from the generated samples (plotted along the $x$-axis).

To obtain the samples plotted in \cref{fig:mnist_KDE_smoothing_plots_4s}, we use Gaussian-smoothing with standard deviations $\sigma \in \{0.04, 0.05, 0.06, 0.07\}$, and KDE scales $\sigma \in \{0.015, 0.03, 0.045, 0.06\}$. These were selected to induce comparable lateral distances along the manifold, computed as $( d(x,\hat{\mu}_{data})^2 - d(x,\mathcal{M})^2)^{\frac{1}{2}}$ and averaged over 500 generated samples, which corresponds to inducing the same degree of `novelty' relative to the training set. We display the same plot showing more samples in \cref{fig:additional_plots_vae}.

\begin{figure}[t]
    \centering
    \begin{subfigure}[b]{0.28\linewidth}
        \includegraphics[width=\textwidth]{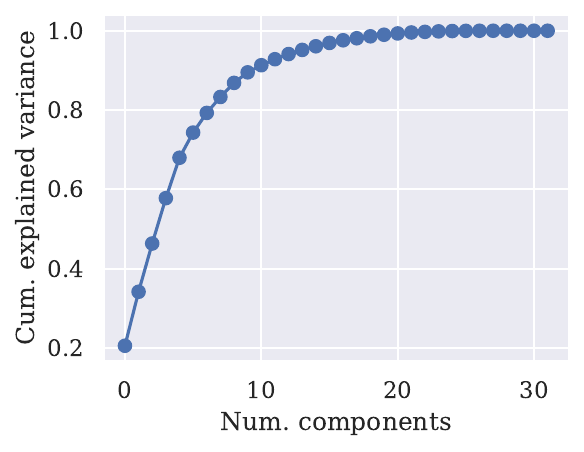}
        \caption{Digit 1}
        \label{fig:mnist_pca_explained_variance_1s}
    \end{subfigure}
    \hfill
    \begin{subfigure}[b]{0.28\linewidth}
        \includegraphics[width=\textwidth]{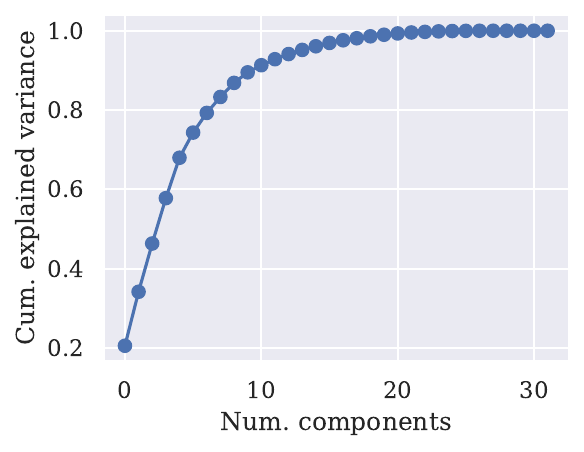}
        \caption{Digit 4}
        \label{fig:mnist_pca_explained_variance_4s}
    \end{subfigure}
    \hfill
    \begin{subfigure}[b]{0.28\linewidth}
        \includegraphics[width=\textwidth]{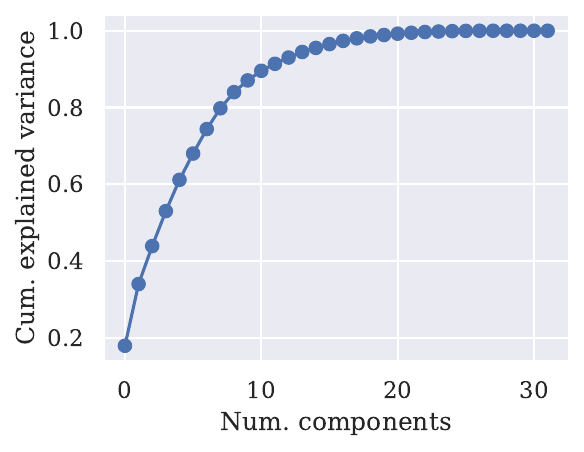}
        \caption{Digit 5}
        \label{fig:mnist_pca_explained_variance_5s}
    \end{subfigure}
    \caption{Verifying that latent representations of a digit class lie on a lower-dimensional structure in the latent space, by performing a PCA decomposition on the differences between nearby points.}
    \label{fig:PCA_explained_variance}
\end{figure}

\begin{figure}[t]
    \centering
    \begin{subfigure}[b]{0.45\linewidth}
        \vspace{-2em}
        \includegraphics[width=\textwidth]{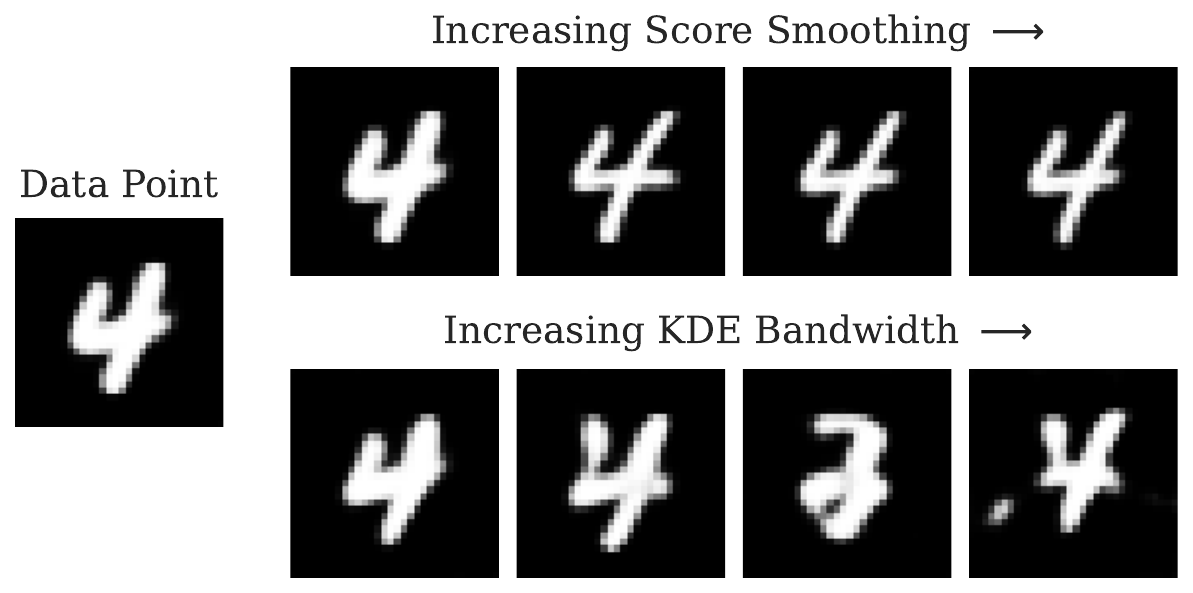}
        \caption{}
        \label{fig:mnist_KDE_smoothing_plots_4s_2}
    \end{subfigure}
    \hfill
    \begin{subfigure}[b]{0.45\linewidth}
        \includegraphics[width=\textwidth]{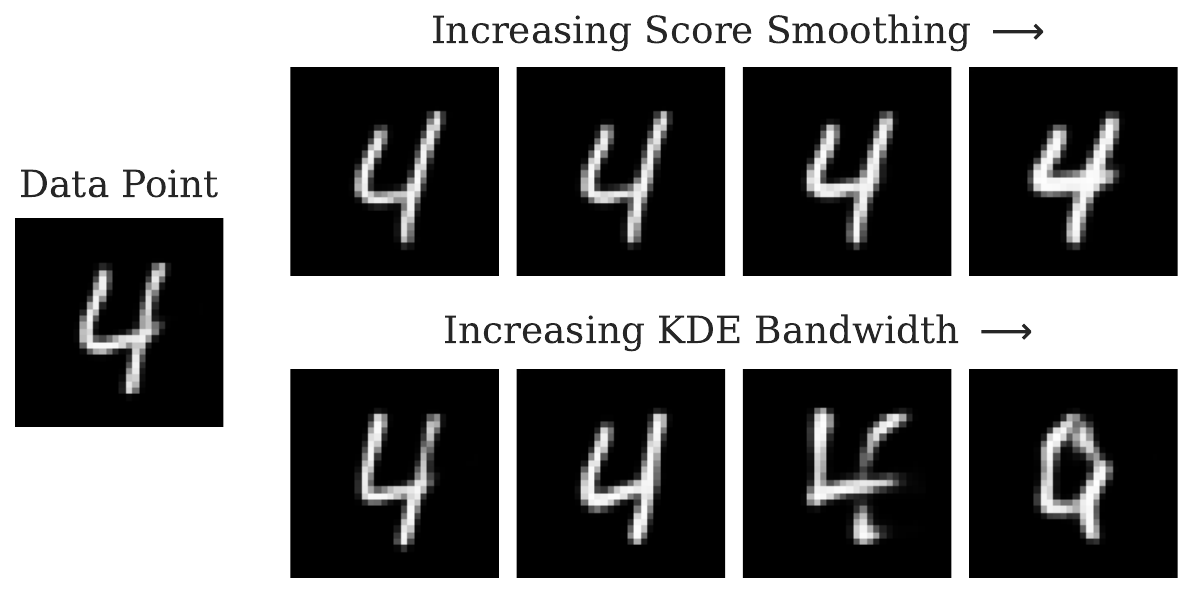}
        \caption{}
        \label{fig:mnist_KDE_smoothing_plots_4s_3}
    \end{subfigure}
    \caption{Additional generations, as in \cref{fig:mnist_KDE_smoothing_plots_4s}.}
    \label{fig:additional_plots_vae}
\end{figure}

\subsection{Synthetic image manifold}\label{app:synthetic_image}

\begin{figure}[]
    \noindent
    \begin{minipage}[t]{0.5\textwidth}
        \begin{figure}[H]
            \centering
            \begin{subfigure}[b]{0.45\linewidth}
                \centering
                \includegraphics[width=\textwidth]{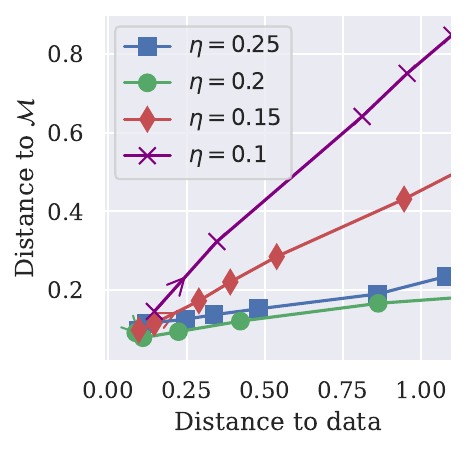}
                \caption{$N=24$}
            \end{subfigure}%
            \hfill
            \begin{subfigure}[b]{0.45\linewidth}
                \centering
                \includegraphics[width=\textwidth]{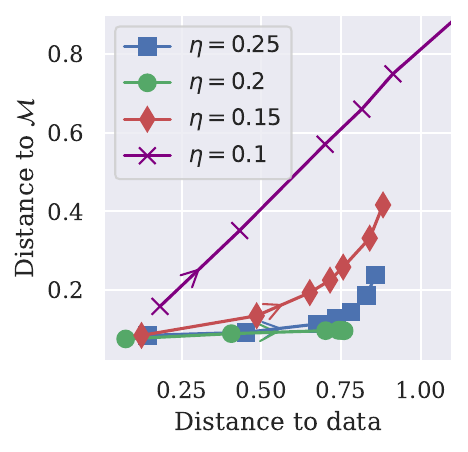}
                \caption{$N=40$}
            \end{subfigure}
            \caption{Additional plots demonstrating the effect of changing curvature on the manifold-adapted smoothing, as in \cref{fig:circle_images_curvature}. Different plots show different numbers of datapoints $N$ in the empirical dataset.}
            \label{fig:eta_appendix_plots}
        \end{figure}
    \end{minipage}%
    \hfill
    \begin{minipage}[t]{0.45\textwidth}
        \begin{figure}[H]
            \centering
            \vspace{-2pt} %
            \includegraphics[width=0.52\textwidth]{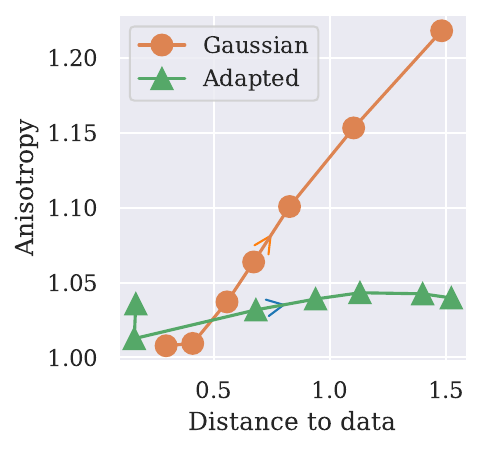}
            \caption{Plot showing the average `anisotropy' of the generated samples, as a measure of the visual quality of samples. Values close to 1 indicate a spherically-symmetric generated sample, as would be the case on the true manifold.}
            \label{fig:gaussian_bump_anisotropy}
        \end{figure}
    \end{minipage}
    \vspace{-1em}
\end{figure}

\begin{figure}[]
    \vspace{20pt}
    \centering
    \begin{subfigure}[b]{0.48\textwidth}
        \centering
        \includegraphics[width=\textwidth]{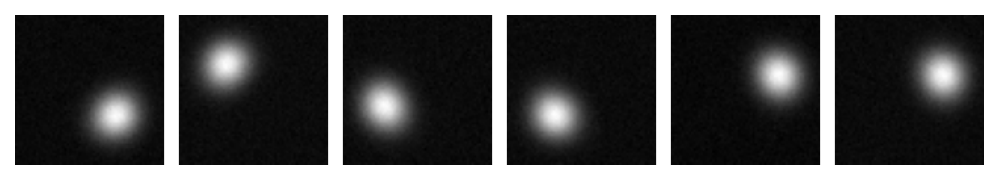}
        \caption{Isotropic Gaussian smoothing, $\sigma = 2.6$}
        \label{fig:one}
    \end{subfigure}%
    \hfill
    \begin{subfigure}[b]{0.48\textwidth}
        \centering
        \includegraphics[width=\textwidth]{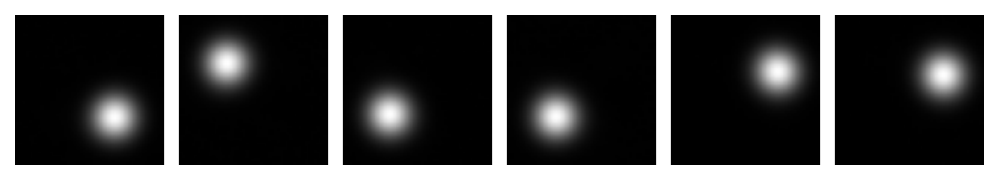}
        \caption{Adapted smoothing, $\sigma = 5.0$}
        \label{fig:two}
    \end{subfigure}
    \caption{Samples generated using the Gaussian and manifold-adapted smoothing kernels. The manifold-adapted smoothing generates samples that are visually more `on-manifold', in that the samples appear more spherically symmetric. See also \Cref{fig:gaussian_bump_anisotropy} for a quantitative measure of the spherical symmetry of the samples.}
    \label{fig:bump_generated_samples}
\end{figure}

We now describe the experimental setup used in \Cref{sec:synthetic image manifold}.

\paragraph{Dataset generation} We construct the synthetic image dataset using a function $\phi : [0, 2\pi) \to \R^{\text{64} \times \text{64}}$ that maps an angle $\theta$ to an `image'. The `image' is constructed as the density of a $\eta^2$-variance Gaussian distribution centred on the point on the circle with radius 0.5 corresponding to the angle $\theta$ (where the overall image corresponds to $[-1,1]\times[-1,1]$). The density is scaled to take values between 0 and 1. The resulting manifold in image space therefore consists of a closed curve of `Gaussian bumps' that move around the 0.5-circle as $\theta$ moves from $0$ to $2\pi$. We provide a visualisation of traversing the manifold in \cref{fig:gaussian_bump_manifold}. For the experiment in \Cref{fig:synth_images_smoothing_comparison}, we use $\eta=0.2$, and use 16 equally spaced points along the curve as the training dataset.

The manifold-adapted smoothing kernel is defined as follows. For a point $x$ in the generation procedure, the projection $\Pi_\mathcal{M}(x)$ onto the manifold is computed. 
We define a shifted manifold as $\mathcal{M} + (x - \Pi_\mathcal{M}(x))$, which is a translated copy of the manifold that passes through $x$. Gaussian noise of standard deviation $\sigma$ is added to $x$, then we project onto this shifted manifold. All manifold projections are approximated by generating 1024 equally spaced points along the manifold and taking the closest one.

\paragraph{Experiment hyperparameters} 
We use a variance-exploding diffusion model with $T=9.0$, a geometric noise schedule, and 100 generation steps with an Euler-Maruyama discretisation scheme. For the isotropic Gaussian kernel, we used smoothing with standard deviations $\sigma \in \{1.0, 1.4, 1.8, 2.0, 2.2, 2.4, 2.6\}$. For the manifold-adapted smoothing, we used $\sigma \in \{1.6, 2.4, 3.2, 3.5, 3.8, 4.4, 5.0\}$. These values were chosen to induce comparable average distances to the training dataset in the generated samples (plotted along the $x$-axis).

For the isotropic Gaussian smoothing, we took 50,000 kernel samples at each generation step in our kernel smoothing evaluation. For the manifold-adapted smoothing, we take 1000 smoothing samples at each generation step (note that this can be much lower than for Gaussian smoothing, as the manifold along which we smooth is only 1-dimensional). We generate 100 samples, and average the closest distances to the manifold and to the empirical dataset. As with the projections, the closest distance to the manifold is calculated by generating 1000 points on the manifold, and taking the minimum $L_2$ distance to these points.

In the experiment in \Cref{fig:circle_images_curvature} in which we vary the parameter $\eta$, we use 32 equally spaced points along the curve. We run the manifold-adapted smoothing with smoothing levels $\sigma \in \{0.8, 1.2, 1.6, 1.75, 1.9, 2.2, 2.5\}$, and vary the parameter to take values $\eta \in \{0.1, 0.15, 0.2, 0.25\}$. In \Cref{fig:eta_appendix_plots} we provide additional results in which we repeat the same experiment, but now change the number of datapoints $N$ in the empirical dataset. In each plot, we see a similar effect to in \Cref{fig:circle_images_curvature}—for larger $\eta$, the manifold-adapted smoothing is able to effectively generate new samples close to the manifold, but this ability is lessened as $\eta$ decreases.

\subsubsection{Additional plots}

The results in \cref{fig:synth_images_smoothing_comparison} indicate that an adapted smoothing kernel can induce different structure in the generations compared to isotropic Gaussian smoothing—as the degree of smoothing increases, the generated samples deviate away from the training data for both kernels, but remain comparatively closer to the manifold structure when using the adapted smoothing kernel. We here include some additional plots that further elucidate this observed effect.

\paragraph{Spread along the manifold}
While the $L_2$ distances reported in \cref{fig:synth_images_smoothing_comparison} show that the generations have deviated away from the training data, it is not necessarily clear how the scale of such deviations corresponds to the degree of spreading along the manifold structure.
We therefore also examine the extent to which the generated samples become spread along the manifold as the smoothing increases, to confirm that the generations do indeed deviate sufficiently far from the training points to reasonably be considered `novel'.

In \cref{fig:gaussian_blob_histograms}, we plot histograms showing the projected $\theta$ values of the generations, in order to see how far the generated distribution has spread along the 1$d$ synthetic manifold. We provide histograms for three different smoothing values used in \cref{fig:synth_images_smoothing_comparison}, for both types of smoothing. For small smoothing levels, we recover only training points as expected, but as the smoothing increases we see that the generations do indeed deviate far from the training datapoints relative to the manifold structure, and spread out to fill the gaps in the manifold between the points in the training dataset.

\begin{figure}[t]
    \centering
    \begin{subfigure}[b]{0.32\linewidth}
        \centering
        \includegraphics[width=\textwidth]{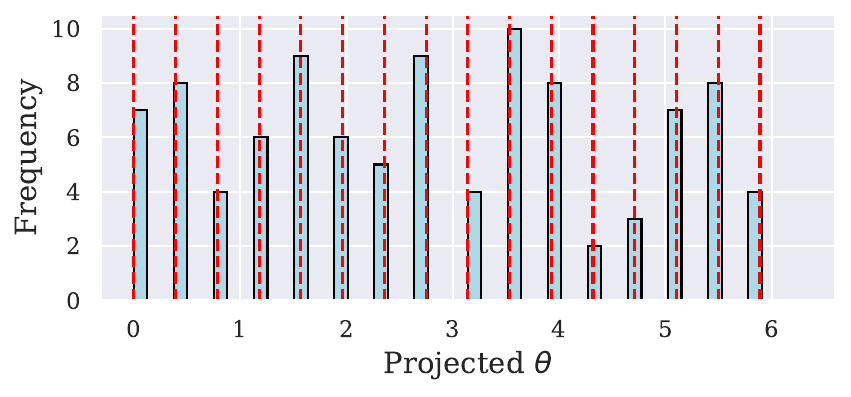}
        \caption{Gaussian kernel, $\sigma=1.4$}
    \end{subfigure}%
    \hfill
    \begin{subfigure}[b]{0.32\linewidth}
        \centering
        \includegraphics[width=\textwidth]{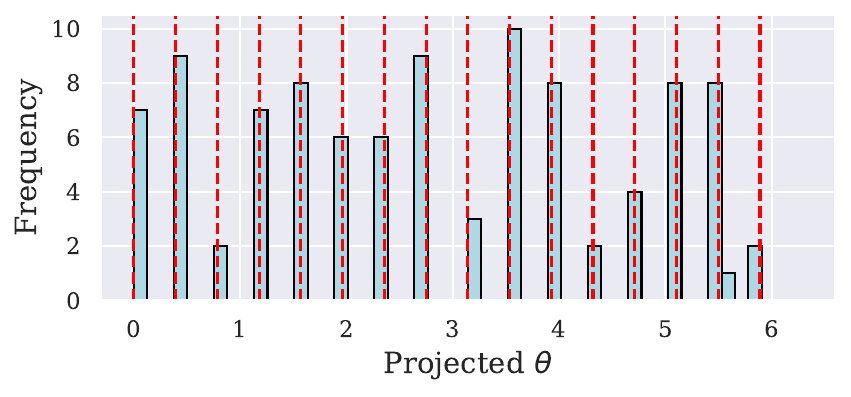}
        \caption{Gaussian kernel, $\sigma=2.2$}
    \end{subfigure}%
    \hfill
    \begin{subfigure}[b]{0.32\linewidth}
        \centering
        \includegraphics[width=\textwidth]{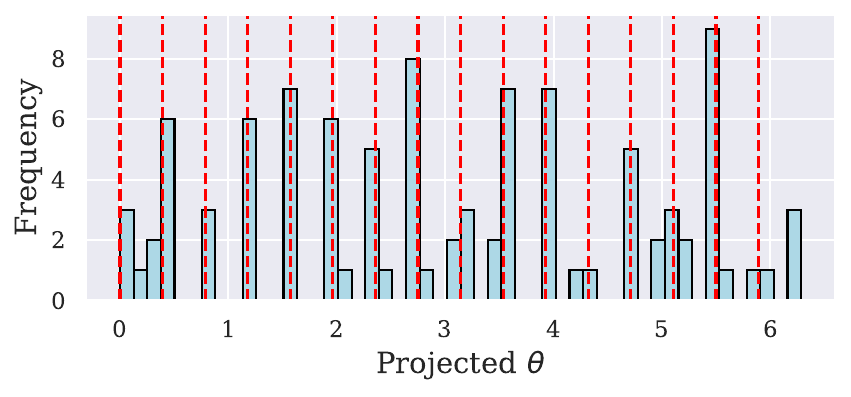}
        \caption{Gaussian kernel, $\sigma=2.6$}
    \end{subfigure}
    
    \vskip\baselineskip

    \begin{subfigure}[b]{0.32\linewidth}
        \centering
        \includegraphics[width=\textwidth]{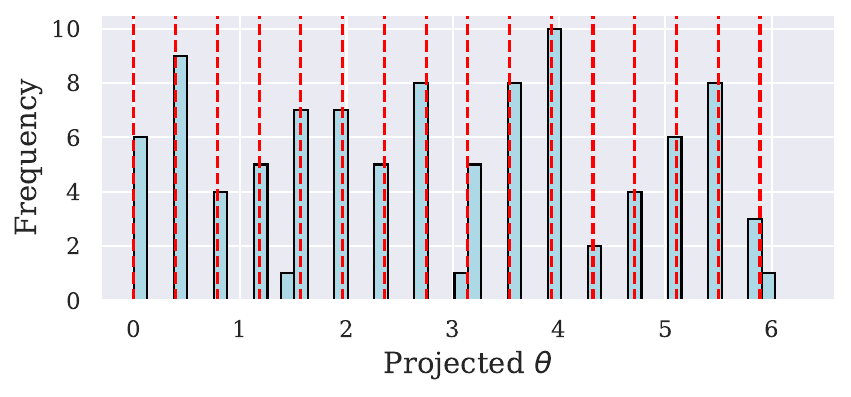}
        \caption{Adapted kernel, $\sigma=2.4$}
    \end{subfigure}%
    \hfill
    \begin{subfigure}[b]{0.32\linewidth}
        \centering
        \includegraphics[width=\textwidth]{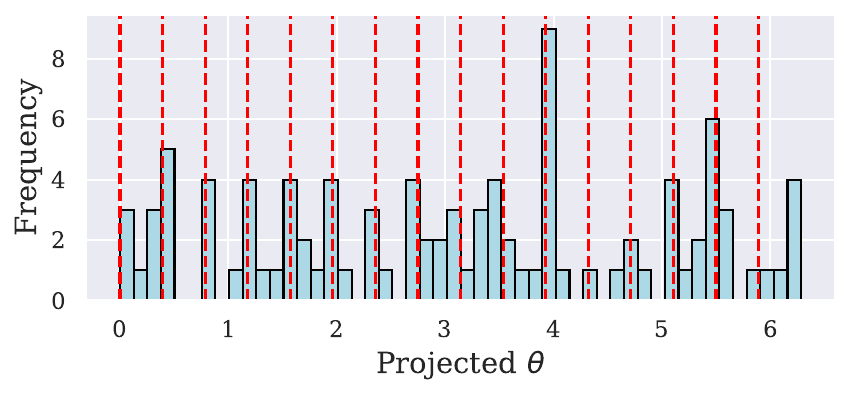}
        \caption{Adapted kernel, $\sigma=3.8$}
    \end{subfigure}%
    \hfill
    \begin{subfigure}[b]{0.32\linewidth}
        \centering
        \includegraphics[width=\textwidth]{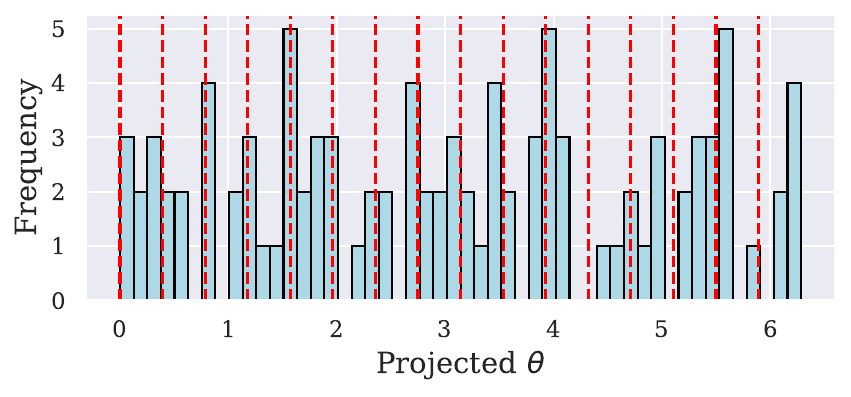}
        \caption{Adapted kernel, $\sigma=5.0$}
    \end{subfigure}

    \caption{Plots showing the projected $\theta$ values for the generated samples, for different amounts of smoothing. As the smoothing increases, the generated samples spread along the manifold structure, and populate the space between the points in the training dataset (indicated by red vertical lines).}
    \label{fig:gaussian_blob_histograms}
\end{figure}

\paragraph{Assessing the visual quality of samples}

The plot in \cref{fig:synth_images_smoothing_comparison} reports the average $L_2$ distance from the manifold $\mathcal{M}$, relative to the average $L_2$ distance from the training dataset. It is clear that the Gaussian-smoothed samples deviate comparatively further from the manifold than the adaptive-smoothing samples according to this distance. We here additionally report an alternative measure of how `on-manifold' the generations are, related to the \textit{visual} properties of the generations.

Note that samples from the true manifold consists of renormalised Gaussian density functions.
Visually, being `on-manifold' therefore corresponds to the generated images being spherically symmetric. 
In \Cref{fig:bump_generated_samples} we display generated samples for the isotropic Gaussian and manifold-adapted smoothing mechanisms (for the largest smoothing values that were used in \Cref{fig:synth_images_smoothing_comparison}), and see that the manifold-adapted smoothing generates samples appear visually more spherically symmetric. This property is however somewhat difficult to assess by eye, as any such changes can be subtle, so in \Cref{fig:gaussian_bump_anisotropy} we also quantitatively measure the spherical symmetry to assess this visual property.

In order to do so, we report the `anisotropy' of the generated samples. Namely, we consider the renormalised generated samples as a probability density function on $[0,1] \times [0,1]$, and record the anisotropy of the corresponding distribution (that is, we compute the covariance matrix $\Sigma \in \mathbb{R}^2$, and report $\frac{\lambda_{max}}{\lambda_{min}}$ for eigenvalues $\lambda_{max}, \lambda_{min}$). Samples that are `on-manifold' will have values close to 1.0. In the computation, we set values less than 0.1 to zero, so that the noise in the generations do not impact the calculation.
We report the results in \cref{fig:gaussian_bump_anisotropy}.

The results are consistent with the pattern of $L_2$ distances reported in \cref{fig:synth_images_smoothing_comparison}—as the degree of smoothing increases and the generated samples deviate away from the training datapoints, the generations using the adapted smoothing have lower anisotropy and are therefore more `round' than those obtained from Gaussian smoothing. Indeed, we know from \citet{scarvelis2023closedformdiffusionmodels} that Gaussian smoothing will generate barycentres of training points, which will skew the generated samples away from being perfectly round; it appears that the manifold-adapted smoothing somewhat mitigates this effect by shaping the geometry of the generated samples towards a different interpolation.

\subsection{MNIST manifold}
\label{app:MNIST_compare_smoothing}

We now provide the details for the experiment in \Cref{sec:image manifold}.

\paragraph{Dataset generation}
Similarly to the synthetic case, we construct a manifold by defining a curve ${\phi : [0,1] \to \R^{\text{32} \times \text{32}}}$ in pixel space, which interpolates between samples of the same digit from the MNIST dataset \citep{lecun2010mnist}. To obtain such an interpolation, we train a convolutional VAE \citep{kingma2022autoencodingvariationalbayes},
We then choose three datapoints from the same digit class (in this case, the digit 4), and draw a triangle between their latent representations. We construct $\phi(t)$ by decoding this triangle, which results in a closed loop in pixel space. We use the decodings of 10 equidistant points along the latent triangular interpolation to define the training dataset.
We emphasise that the VAE is only used to construct a manifold structure in pixel-space, and the actual diffusion procedure takes place directly in the pixel-space without any interaction with the VAE.

\paragraph{Experiment hyperparameters}
We use a variance-exploding diffusion model with $T=9.0$, a geometric noise schedule, and 100 generation steps with an Euler-Maruyama discretisation scheme. 
We used smoothing with standard deviations $\sigma \in \{0.0, 0.3, 0.6, 0.8, 0.9, 1.0, 1.05, 1.1 \}$ for isotropic Gaussian smoothing, and $\sigma \in \{0.0, 0.5, 1.0, 1.5, 2.0, 2.5, 4.0, 7.0\}$ for manifold-adapted smoothing (which again were chosen to induce similar distances from the data points in \cref{fig:mnist_L2_4s}).
For the isotropic Gaussian smoothing, we took 50,000 kernel samples at each generation step in our kernel smoothing evaluation. For the manifold-adapted smoothing, we take 1000 smoothing samples at each generation step (this can be much lower than for Gaussian smoothing, as the manifold along which we smooth is only 1-dimensional). As before, we generate 100 samples, and report the average closest distances to the manifold and to the empirical dataset. The closest distance to the manifold is calculated by generating 1000 points on the manifold, and taking the minimum $L_2$ distance to these points.

\paragraph{FID calculation}
As we work with a 1-dimensional cuve in pixel space, neighbouring points in the empirical dataset look very similar. It is therefore difficult to visually judge the quality of obtained samples from both smoothing mechanisms, so we use FID \citep{heusel17_FID} as measure of similarity to the true manifold that also provides an indication of sample quality.
To obtain the features used for the FID calculation, we train a convolutional classifier (using the 10,000 points also used to train the VAE).
The model consists of two convolutional layers with 32 and 64 features respectively, followed by a fully connected hidden layer of size 128. The 128-dimensional feature vector is used for the FID calculation. It is trained for 5,000 steps using the Adam optimiser with a learning rate of 1e-3. We calculate the FID scores of the generated samples relative to the 1000 random samples from the manifold.

\paragraph{Different manifolds}
We also ran the same experiment with manifolds for different digits, and observe similar behaviour.
Results for the curves for digits 2 and 7 are plotted in \cref{fig:alternative_mnist_curves}. The selected points were generally chosen to be the first three examples of that digit in the dataset (other than when these datapoints induced a poorly-decoded manifold, in which case we used the first that made the constructed manifold of good quality).

\begin{figure}[t]
    \centering
    \begin{subfigure}[b]{0.22\linewidth}
        \vspace{-2em}
        \includegraphics[width=\textwidth]{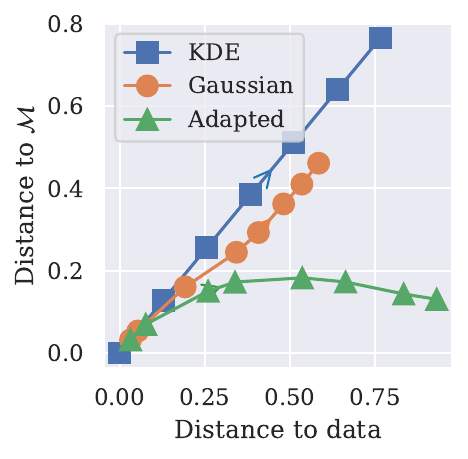}
        \caption{$L_2$ distances, for curve in 2s class.}
        \label{fig:mnist_L2_2s}
    \end{subfigure}
    \hfill
    \begin{subfigure}[b]{0.22\linewidth}
        \includegraphics[width=\textwidth]{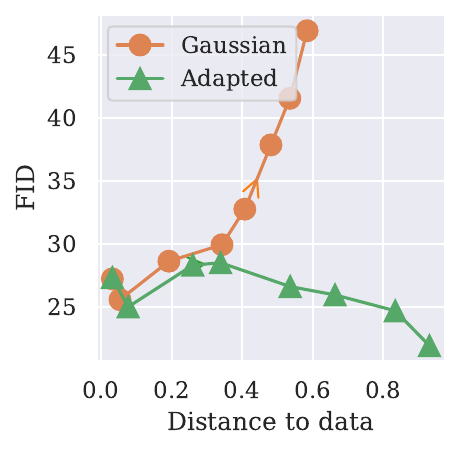}
        \caption{FID, for curve in 2s class.}
        \label{fig:mnist_fid_2s}
    \end{subfigure}
    \hfill
    \qquad
    \hfill
    \begin{subfigure}[b]{0.22\linewidth}
        \vspace{-2em}
        \includegraphics[width=\textwidth]{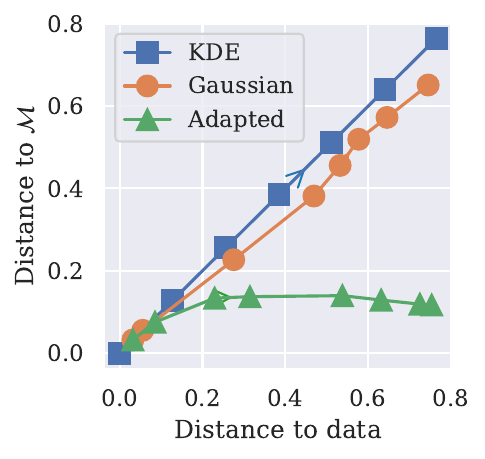}
        \caption{$L_2$ distances, for curve in 7s class.}
        \label{fig:mnist_L2_7s}
    \end{subfigure}
    \hfill
    \begin{subfigure}[b]{0.22\linewidth}
        \includegraphics[width=\textwidth]{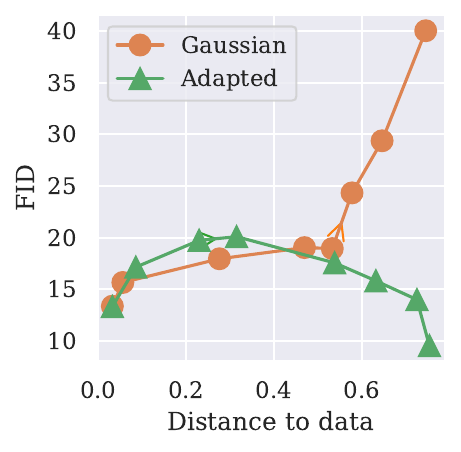}
        \caption{FID, for curve in 7s class.}
        \label{fig:mnist_fid_7s}
    \end{subfigure}
    \caption{Comparison of Gaussian and manifold-adapted smoothing kernels, for alternative curves $\phi$ in the manifold of digits 2 and 7.}
    \label{fig:alternative_mnist_curves}
\end{figure}

\paragraph{Licenses}

\begin{itemize}
    \item MNIST digits classification dataset \citep{lecun2010mnist}, CC BY-SA 3.0 License
\end{itemize}

\end{document}